\documentclass{article}
\usepackage[a4paper,top=3cm,bottom=3cm,left=2.1cm,right=2.1cm,marginparwidth=1.75cm]{geometry}
\setlength\parindent{0pt}
\usepackage[utf8]{inputenc}
\usepackage{bm}
\usepackage{amsmath, amsfonts, amssymb, amsthm}
\usepackage{mathrsfs, dsfont}
\usepackage[dvipsnames]{xcolor}
\usepackage{graphicx}
\usepackage{float}
\usepackage{subfig}
\usepackage{bigints}
\def\twoplotswidth{0.47\linewidth}

\def\oneplotwidth{0.9\linewidth}

\usepackage[colorlinks=true,allcolors=blue]{hyperref}
\usepackage{xparse} 
\usepackage{hyperref}
\usepackage{multirow}

\usepackage{tasks}
\settasks{style=itemize}

\usepackage[round]{natbib}
\usepackage{pifont}

\newtheorem{theorem}{Theorem}[section]

\newtheorem{definition}[theorem]{Definition}

\newtheorem{proposition}[theorem]{Proposition}

\usepackage{algorithm}
\usepackage{algpseudocode}

\theoremstyle{remark}
\newtheorem{remark}{Remark}[theorem]
\newtheorem{example}{Example}[theorem]

\numberwithin{equation}{section}

\newenvironment{sqremark}{\begin{remark}}{\hfill \tiny $\blacksquare$ \end{remark}}
\newenvironment{sqexample}{\begin{example}}{\hfill \tiny $\blacksquare$ \end{example}}

\newcommand{\R}{\mathbb{R}}
\newcommand{\N}{\mathbb{N}}

\newcommand{\E}{\mathbb{E}}
\def\d{\mathrm{d}}
\def\P{\mathbb{P}}
\newcommand{\Q}{\mathbb{Q}}
\newcommand{\F}{\mathcal{F}}
\renewcommand{\Re}{\, \mathfrak{Re}}

\newcommand{\alphabet}[1][d]{{A}_{#1}}
\newcommand{\TA}[1][d]{T(\R^{#1})}
\newcommand{\eTA}[1][d]{T((\R^{#1}))}
\newcommand{\tTA}[2][d]{T^{#2}(\R^{#1})}

\newcommand{\word}[1]{{\mathcolor{NavyBlue}{\mathbf{#1}}}}
\newcommand{\emptyword}{{\color{NavyBlue}\textup{\textbf{\o{}}}}}
\newcommand{\proj}[1]{|_{\word{#1}}}

\newcommand{\conpow}[1]{^{\otimes #1}}

\newcommand{\shuprod}{\mathrel{\sqcup \mkern -3.2mu \sqcup}}
\newcommand{\shupow}[1]{^{\shuprod #1}}
\newcommand{\shuexp}[1]{e \shupow{#1}}

\newcommand{\LL}{\textnormal{LL}}

\NewDocumentCommand{\sig}{O{t} O{W}}{\widehat{\mathbb{#2}}_{#1}}
\NewDocumentCommand{\sigX}{O{t} O{X}}{\mathbb{#2}_{#1}}
\NewDocumentCommand{\sigE}{O{t} O{W}}{\E[\sig[#1][#2]]}

\newcommand{\bracket}[2]{\left \langle #1, #2 \right \rangle}
\NewDocumentCommand{\bracketsig}{O{t} O{W} m}{\bracket{#3}{\sig[#1][#2]}}
\NewDocumentCommand{\bracketsigX}{O{t} O{X} m}{\bracket{#3}{\sigX[#1][#2]}}
\NewDocumentCommand{\bracketsigE}{O{t} O{W} m}{\bracket{#3}{\sigE[#1][#2]}}
\NewDocumentCommand{\bracketsigtrunc}{O{M} O{t} O{W} m}{\bracket{#4}{\sig[#2][#3]^{\leq #1}}}

\newcommand{\balpha}{\bm{\alpha}}
\newcommand{\bsigma}{\bm{\sigma}}
\newcommand{\bxi}{\bm{\xi}}
\newcommand{\bpsi}{\bm{\psi}}

\newcommand{\bell}{\bm{\ell}}
\newcommand{\bp}{\bm{p}}

\usepackage{mathtools}
\usepackage{authblk}
\mathtoolsset{showonlyrefs}

\newcommand{\heston}{Heston, $T=1/2, \theta=0.0625, \kappa=4, \eta=0.7$.}
\newcommand{\fbergomi}{Shifted fractional Bergomi, $T=1/12, f_0(t) \equiv 0.0625, H=0.1, \epsilon=1/52, \varsigma=1.6$.}
\newcommand{\fbergomisix}{Shifted fractional Bergomi, $T=1/2, \xi_0=0.0625, H=0.1, \epsilon=1/52, \varsigma=1.6$.}
\newcommand{\eurocall}{European call option with at-the-money strike}
\newcommand{\asiancall}{Geometric Asian call option with at-the-money fixed strike}
\newcommand{\lookbackcall}{Look-back call option with floating strike}
\newcommand{\BASEcaption}{the Fourier solution (a) (blue), Fourier signature approximation (b) (blue), Vanilla NN (orange), Signature NN (green) and Recurrent NN (red)}
\newcommand{\PDFcaption}{Out-of-sample P\&L density of the delta hedging associated with \BASEcaption.}
\newcommand{\MSPcaption}{Out-of-sample mean and mean squared P\&L of the delta hedging associated with \BASEcaption.}
\newcommand{\TRUNCcaption}{Out-of-sample mean squared P\&L of the delta hedging associated with the the Fourier solution (a) (blue), Fourier signature approximation (b) (blue), Vanilla NN (orange), Signature NN (green), Recurrent NN (red) and Log-Signature NN (purple), for several truncation orders.}
\newcommand{\TRAJcaption}{Two out-of-sample spot trajectories (top) and the delta hedging (bottom) associated with \BASEcaption.}
\newcommand{\TIMEcaption}{Training time for the Vanilla NN, Signature NN and Recurrent NN.}
\newcommand{\CONVcaption}{Out-of-sample mean squared P\&L as a function of training time (seconds) for the Signature NN for different truncation orders.}
\newcommand{\CONVcaptionLog}{Out-of-sample mean squared P\&L as a function of training time (seconds) for the Log-Signature NN for different truncation orders.}

\usepackage{enumitem}
\usepackage[normalem]{ulem}

\usepackage{tikz}
\usetikzlibrary{positioning, fit, arrows.meta, shapes, math}

\title{Hedging with memory: shallow and deep learning with signatures}
\author[1]{Eduardo Abi Jaber \thanks{eduardo.abi-jaber@polytechnique.edu. The first author is grateful for the financial support from the Chaires FiME-FDD and Financial Risks at École Polytechnique.}}
\author[2,3]{Louis-Amand Gérard \thanks{louis-amand.gerard@etu.univ-paris1.fr. \\ We would like to thank Olivier Guéant for fruitful discussions.}}
\affil[1]{École Polytechnique, CMAP}
\affil[2]{Université Paris 1 Panthéon-Sorbonne, CES}
\affil[3]{Gefip}

\begin{document}

\maketitle

\begin{abstract}
    We investigate the use of path signatures in a machine learning context for hedging exotic derivatives under non-Markovian stochastic volatility models. In a deep learning setting, we use signatures as features in feedforward neural networks and show that they outperform LSTMs in most cases, with orders of magnitude less training compute. In a shallow learning setting, we compare two regression approaches: the first directly learns the hedging strategy from the expected signature of the price process; the second models the dynamics of volatility using a signature volatility model, calibrated on the expected signature of the volatility. Solving the hedging problem in the calibrated signature volatility model yields more accurate and stable results across different payoffs and volatility dynamics.
\end{abstract}

\begin{description}
    \item[Mathematics Subject Classification (2010):] 60L10, 91G20, 91G60 
    \item[Keywords:] Deep-hedging, non-Markovian stochastic volatility models, path-signatures,  exotic derivatives, Fourier methods.
\end{description}

\section{Introduction}
    
    The problem of hedging derivatives represents a central challenge in financial markets. Under Markovian models, the theory is very well developed, specifically for European-style derivatives. However, significant challenges arise when considering path-dependent options where the payoff depends on the asset's entire price path, or further still, when the underlying asset has non-Markovian dynamics, where conventional parametrized hedging approaches tend to be too restrictive or untractable. In response to these challenges, non-parametric approaches have gained a lot of popularity, and more specifically with the improvement of machine learning software and hardware, deep hedging approaches for their versatility, ease of train and ability to capture nonlinearities, see for instance \citet{deephedgingteichmann}. However, standard neural networks present certain trade-offs. Feedforward networks for example can not retain memory and are thus not well-suited to process historical data by themselves. Moreover, LSTMs (Long Short Term Memory) have been designed in \citet{LSTM} to recursively handle together new information and its stored memory, but at the cost of poor parallelization and significantly slower training.

    \vspace{0.5em}
    \paragraph{Aim.}
    The first aim of this paper is to investigate the use of (truncated) path signatures {in a deep learning context} as a feature engineering technique to alleviate both challenges. The signature of a path provides a universal representation basis for its whole trajectory allowing the processing of memory in a feedforward network, without the limitations of recursivity and slow training. For this purpose, we will compare a \textit{simple} feedforward neural network enhanced with signature features, the SNN, with a LSTM network of the same size with only Markovian features, the RNN. We will also compare them in terms of computational efficiency and how long it takes to reach certain performance marks. As we will show, signature features capture most of the path-dependent information required for hedging while allowing for a very fast training.\\
    
    The second aim of this paper is to compare two data-driven signature-based methods in a shallow learning context. The first, non-parametric, uses data to recover a hedging strategy that is linear in the signature of the (time-augmented) price process, following the approach introduced in \cite{arribas-nonparam}. The second uses data to recover the volatility of a the price process as a linear combination of the signature of an underlying (time-augmented) Brownian motion, enabling semi-analytic computations via Fourier inversion, building on \cite{sigvolfourier}. We will study those approaches for both European and path-dependent derivatives and under both Markovian and non-Markovian models. We will see that for Fourier-invertible payoffs, calibrating the signature volatility significantly improves the hedging compared to using the completely model-free hedging method.

    \vspace{0.5em}
    \paragraph{Related literature.}
    Path signatures have been applied to finance for over a decade now, specifically in the context of machine learning as feature set. As a first notable mention, \citet{linregclassif} have used signatures as linear classifiers for atypical market behaviors in a multidimensional setting. Later, \citet{uat_2013} have proposed the now famous universal approximation theorem (UAT) for linear forms on the signature which they applied to recover an approximation of a process' dynamics as a function of the signature of a Brownian motion. Building upon it, \citet{arribas-nonparam} have developed a non-parametric method for pricing options by regressing their payoffs' dynamics against the signature of the observable spot price, and hedging by then doing a non-convex optimization. Their method has the main advantages of being model-free, data-driven and working with most exotic payoffs, but it is still subject to the curse of dimensionality. To alleviate some of these limitations, \citet{sigvolfourier} have derived Fourier inversion methods for a certain signature stochastic volatility model, in order to leverage known and fast algorithms while remaining very general. As last notable mentions, \citet{sig_trading_futter} have used path signatures for mean-variance portfolio optimization and \citet{cuchiero2023spvix} have extended the Sig-SDEs framework of \citet{sig_sde} to jointly calibrating SPX and VIX options.
    
    Furthermore, path signatures have also been applied more specifically in the context of deep learning, at first on character and movement recognition \cite{sig_handwritten, sig_movement}, as the signature is invariant under time reparametrization and then more broadly, for example in \cite{sig_moods1, sig_moods2} for emotions and behavioral classifications or in \citet{bayer_sigcontrols, bayer2023optimal, bayeramericanoptions} for optimal control, optimal stopping and option pricing respectively.

    Finally, deep hedging has been at the center of recent research. For instance \citet{deephedgingteichmann}, later extended to rough volatility in \citet{deephedging_rough}, present a framework for hedging a portfolio of derivatives in the presence of market frictions, \citet{fromstochroughvoldeephedging} have proposed to apply a GRU network \citet{gru} for exponential hedging of European call options in the under rough Bergomi model. In a similar framework of hedging a European options under rough Bergomi, \citet{SigFormer} have used path signatures together with a transformer \cite{transformer}. \\

    We refer the interested reader to \citet{guidedtourquadhedging} for a comprehensive tour of quadratic hedging approaches, \citet{nn_opt_price_review} for a literature review on deep option pricing, \citet{chevyrev2016primer} for an introduction to path signatures and \cite{kidger2019deep} for an introduction the use of signature transforms as layers inside a neural network.

    \vspace{0.5em}
    \paragraph{Outline.}
    In sections~\ref{subsec:tensor} and \ref{subsec:pathsig} we will briefly introduce the framework of signatures and some of their properties. Section~\ref{sec:quad_hedging} we will recall the quadratic hedging problem and give a summaries of the three methods that we will compare in Sections~\ref{sec:sig_nn} and \ref{sec:sig_reg}. Finally in Appendix~\ref{app:algorithms} we explicit the algorithms used, in Appendix~\ref{app:sig_order_sup2} we discuss some artifacts of our training and in Appendix~\ref{app:arribas-timings} we detail some computing times.

\section{Reminder on signature}

\subsection{Tensor algebra} \label{subsec:tensor}
    
    Let $d \in \N$ and denote by $\otimes$ the tensor product over $\R^d$, e.g.~$(x \otimes y \otimes z)_{ijk} = x_i y_j z_k$, for $i, j, k = 1, \dots, d$, for $x, y, z \in \R^d$. For $n\geq 1$, we denote by $(\R^d) \conpow{n}$ the space of tensors of order $n$ and by $(\R^d) \conpow{0} = \R$. In the sequel, we will consider mathematical objects, path signatures, that live on the extended tensor algebra space $\eTA $ over $\R^d$, that is the space of (infinite) sequences of tensors defined by
    $$ \eTA := \left\{ \bell = (\bell^n)_{n=0}^\infty : \bell^n \in (\R^d) \conpow{n} \right\}. $$
    
    Similarly, for $M \geq 0$, we define the truncated tensor algebra $\tTA{M}$ as the space of sequences of tensors of order at most $M$ defined by
    $$ \tTA{M} := \left\{ \bell \in \eTA : \bell^n = 0, \text{ for all } n > M \right\}, $$
    
    and the tensor algebra $\TA$ as the space of all finite sequences of tensors defined by
    $$ \TA := \bigcup_{M \in \N } \tTA{M}. $$
    
    We clearly have $\TA \subset \eTA$.
    For $\bell = (\bell^n)_{n \in \N}, \bp = (\bp^n)_{n \in \N} \in \eTA$ and $\lambda \in \R$, we define the following operations:
    \begin{align*}
        \bell + \bp :&
        = (\bell^n + \bp^n)_{n \in \N}, \quad \bell \otimes \bp :
        = \left( \sum_{k=0}^n \bell^k \otimes \bp^{n-k} \right)_{n \in \N}, \quad 
        \lambda \bell :
        = (\lambda \bell^n)_{n \in \N}.
    \end{align*}
    
    These operations induce analogous operations on $\tTA{M}$ and $\TA$. \\
    
    Let $\{ e_1, \dots, e_d \} \subset \R^d$ be the canonical basis of $\mathbb{R}^d$ and $\alphabet = \{ \word{1}, \word{2}, \dots, \word{d} \}$ be the corresponding alphabet. To ease reading, for $i \in \{ 1, \dots, d \}$, we write $e_{i}$ as the blue letter $\word{i}$ and for $n \geq 1, i_1, \dots, i_n \in \{ 1, \dots, d \}$, we write $e_{i_1} \otimes \cdots \otimes e_{i_n} $ as the concatenation of letters $\word{i_1 \cdots i_n}$, that we call a word of length $n$. We note that $(e_{i_1} \otimes \cdots \otimes e_{i_n})_{(i_1, \dots, i_n) \in \{ 1, \dots, d \}^n}$ is a basis of $(\R^d) \conpow{n}$ that can be identified with the set of words of length $n$ defined by 
    \begin{equation} \label{eq:sig_basis}
        V_n := \{ \word{i_1 \cdots i_n}: \word{i_k} \in \alphabet \text{ for } k = 1, 2, \dots, n \}.
    \end{equation}
    
    Moreover, we denote by $\emptyword$ the empty word and by $V_0 = \{ \emptyword \}$ which serves as a basis for $(\R^d) \conpow{0} = \R$. It follows that $V := \cup_{n \geq 0} V_n$ represents the topological basis of $\eTA$. In particular, every $\bell \in \eTA$ can be decomposed as
    \begin{equation} \label{eq:sig_expansion}
        \bell = \sum_{n=0}^\infty \sum_{\word{v} \in V_n} \bell^{\word{v}} \word{v},
    \end{equation}
    
    where $\bell^\word{v}$ is the real coefficient of $\bell$ at coordinate $\word{v}$. We stress again that in the sequel, every blue `word` $\word{v} \in V$ represents an element of the canonical basis of $\eTA$, i.e.~there exists $n \geq 0$ such that $\word{v}$ is of the form $\word{v} = \word{i_1 \cdots i_n}$, which represents the element $e_{i_1} \otimes \cdots \otimes e_{i_n} $. The concatenation $\bell \word{v}$ of elements $\bell \in \eTA$ and the word $\word{v} = \word{i_1 \cdots i_n}$ means $\bell \otimes e_{i_1} \otimes \cdots \otimes e_{i_n}$. \\

    In addition to the decomposition \eqref{eq:sig_expansion} of elements $\bell \in \eTA$, we introduce the projection $\bell \proj{u} \in \eTA$ as
    \begin{equation} \label{eq:projection}
        \bell \proj{u} := \sum_{n=0}^\infty \sum_{\word{v} \in V_n} \bell^\word{vu} \word{v}
    \end{equation}
    for all $\word{u} \in V$. The projection plays an important role in the space of iterated integrals as it is closely linked to partial differentiation, in contrast with the concatenation that relates to integration.
    
    \begin{sqexample}
        Take the alphabet $\alphabet[3] = \{ \word{1}, \word{2}, \word{3} \}$ and let $\bell = 4 \cdot \emptyword + 3 \cdot \word{1} - 1 \cdot \word{12} + 2 \cdot \word{2212}$, then
        $$ \bell^{\emptyword} = 4 \cdot \emptyword, \quad \bell \proj{1} = 3 \cdot \emptyword, \quad \bell \proj{2} = - 1 \cdot \word{1} + 2 \cdot \word{221}, \quad \bell \proj{3} = 0. $$
    \end{sqexample}
    
    We now define the bracket between $\bell \in \TA$ and $\mathbb{X} \in \eTA$ by
    \begin{align} \label{eq:bracket}
        \langle \bell, \mathbb{X} \rangle 
        := \sum_{n=0}^{\infty} \sum_{\word{v} \in V_n} \bell^\word{v} \mathbb{X}^\word{v}. 
    \end{align}
    
    Notice that it is actually well defined as $\bell$ has finitely many non-zero terms. For $\bell \in \eTA$, the series in \eqref{eq:bracket} involves infinitely many terms and requires special care, which is discussed in \citep*{linearfbm} and \citep*{christa_sig_affine_polynomial}.
    
    \begin{definition}[Shuffle product] \label{def:shuffleprod}
        The shuffle product $\shuprod: V \times V \to \TA$ is defined inductively for all words $\word{v}$ and $\word{w}$ and all letters $\word{i}$ and $\word{j}$ in $\alphabet$ by
        \begin{align*}
            \word{v} \word{i} \shuprod \word{w} \word{j} &
            = (\word{v} \shuprod \word{w} \word{j}) \word{i} + (\word{v} \word{i} \shuprod \word{w}) \word{j},
            \\ \word{w} \shuprod \emptyword &
            = \emptyword \shuprod \word{w} = \word{w}.
        \end{align*}
    \end{definition}
    
    \begin{definition}[Half shuffle product] \label{def:halfshuffleprod}
        Similarly, the (right) half shuffle product $\succ: V \times V \to \TA$ is defined inductively for all words $\word{v}$ and $\word{w}$ and all letters $\word{i}$ and $\word{j}$ in $\alphabet$ by
        \begin{align*}
            \word{v} \word{i} \succ \word{w} \word{j} &
            = (\word{v} \word{i} \shuprod \word{w}) \word{j},
            \\ \word{w} \succ \emptyword &
            = 0.
        \end{align*}
    \end{definition}
        
    With some abuse of notation, the shuffle (resp. half shuffle) product on $\eTA$ induced by the shuffle (resp. half shuffle) product on $V$ will also be denoted by $\shuprod$, (reps. $\succ$). The shuffle product is clearly commutative. See \cite{reeshuffles} and \cite{gainesshuffle} for more information on the shuffle product.

\subsection{Path signature} \label{subsec:pathsig}

    We define the (path) signature of a semimartingale process as the sequence of iterated stochastic integrals in the sense of Stratonovich. Throughout the paper, the Itô integral is denoted by $\int_0^\cdot Y_t \d X_t$ and the Stratonovich integral by $\int_0^\cdot Y_t \circ \d X_t.$ If both $X$ and $Y$ are semimartingales then, we have the relation $\int_0^\cdot Y_t \circ \d X_t = \int_0^\cdot Y_t \d X_t + \frac{1}{2} [X,Y]_\cdot $.

    \begin{definition}[Signature] \label{def:sig}
        Fix $T > 0$. Let $(X_t)_{t \geq 0}$ be a continuous semimartingale in $\R^d$ on some filtered probability space $(\Omega, \F, (\F_t)_{t \geq 0}, \P)$. The signature of $X$ is defined by
        \begin{align*}
            \mathbb{X}: \Omega \times [0, T] &
            \to \eTA
            \\ (\omega, t) &
            \mapsto \sigX (\omega) := (1, \sigX^1(\omega), \dots, \sigX^n(\omega), \dots),
        \end{align*}
        where
        $$ \sigX^n := \int_{0 < u_1 < \cdots < u_n < t} \circ \d X_{u_1} \otimes \cdots \otimes \circ \d X_{u_n} $$
        takes value in $(\R^d)^{\otimes n}$, $n \geq 0$. Similarly, the truncated signature of order $M \in \N$ is defined by
        \begin{align} \label{def:sig-trunc}
            \mathbb{X}^{\leq M}: [0, T] &
            \to \tTA{M}
            \\ (\omega, t) &
            \mapsto \sigX^{\leq M}(\omega) := (1, \sigX^1(\omega), \dots, \sigX^M(\omega), 0, \dots, 0, \dots).
        \end{align}
    \end{definition}

    \begin{sqexample}
        Let $X = (X^1, X^2)$, then the first few orders of $\sigX$ are given by
        \begin{equation}
            \sigX^0 = 1,
            \quad
            \sigX^1 =
            \begin{pmatrix}
                X_t^1 \\
                X_t^2
            \end{pmatrix},
            \quad
            \sigX^2 =
            \begin{pmatrix}
                \frac{(X_t^1)^2}{2!} & \int_0^t X_s^1 \circ \d X_s^2 \\
                \int_0^t X_s^2 \circ \d X_s^1 & \frac{(X_t^2)^2}{2!}
            \end{pmatrix}.
        \end{equation}
    \end{sqexample}

    One of the interesting properties of signature is its ability to linearize products of the signature, just like the Cauchy product for power series.

    \begin{proposition}[Shuffle properties] \label{prop:shufflepropertyextended}
        For $N, M \in \N$, if $\bell_1 \in \tTA{M}, \bell_2 \in \tTA{N}$, then $\bell_1 \shuprod \bell_2 \in \tTA{M+N}$ and
        $$ \bracketsig{\bell_1} \bracketsig{\bell_2} = \bracketsig{\bell_1 \shuprod \bell_2}, $$
        and $\bell_1 \succ \bell_2 \in \tTA{M+N}$ and
        $$ \int_0^t \bracketsig[s]{\bell_1} \circ \d \bracketsig[s]{\bell_2} = \bracketsig{\bell_1 \succ \bell_2}. $$
    \end{proposition}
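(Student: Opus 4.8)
The plan is to prove both identities for the signature $\mathbb{X}$ of an arbitrary continuous semimartingale $X=(X^1,\dots,X^d)$ in $\R^d$ in the sense of Definition~\ref{def:sig} (the statement being the special case where $X$ is the process of interest), to reduce everything to basis words by bilinearity, and then to run an induction on the total word length, exploiting that the Stratonovich integral obeys the ordinary rules of calculus. For the degree bounds, an immediate induction on Definitions~\ref{def:shuffleprod} and~\ref{def:halfshuffleprod} shows that for words $\word{v},\word{w}$ of respective lengths $m,n$, both $\word{v}\shuprod\word{w}$ and $\word{v}\succ\word{w}$ are $\R$-linear combinations of words of length $m+n$; extending $\shuprod$ and $\succ$ bilinearly yields $\bell_1\shuprod\bell_2,\,\bell_1\succ\bell_2\in\tTA{M+N}$ whenever $\bell_1\in\tTA{M}$ and $\bell_2\in\tTA{N}$, so that the brackets on the right-hand sides are finite sums.

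The one analytic ingredient is a chain rule for linear functionals of the signature. Unwinding Definition~\ref{def:sig} together with the expansion~\eqref{eq:sig_expansion}, each process $t\mapsto\bracketsigX{\bell}$ with $\bell\in\TA$ is a continuous semimartingale satisfying $\bracketsigX{\bell\word{i}}=\int_0^t\bracketsigX[s]{\bell}\circ\d X^i_s$ for every letter $\word{i}\in\alphabet$; decomposing $\bell$ by its last letter as $\bell=\bell^{\emptyword}\,\emptyword+\sum_{i=1}^d(\bell\proj{i})\word{i}$, this is equivalent to $\d\bracketsigX{\bell}=\sum_{i=1}^d\bracketsigX{\bell\proj{i}}\circ\d X^i_t$. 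I will use the integral form in the half-shuffle step and the differential form in the shuffle step.

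\emph{Shuffle identity.} Fix $\bell_1\in\tTA{M}$, $\bell_2\in\tTA{N}$ and set $F_t:=\bracketsigX{\bell_1}\bracketsigX{\bell_2}$ and $G_t:=\bracketsigX{\bell_1\shuprod\bell_2}$. Since $\mathbb{X}_0=(1,0,\dots)$, only the empty word contributes at $t=0$, so $F_0=\bell_1^{\emptyword}\bell_2^{\emptyword}=(\bell_1\shuprod\bell_2)^{\emptyword}=G_0$. I would then prove $\d F=\d G$ by induction on $M+N$; the base case $M=0$ or $N=0$ is immediate because then one factor is constant and $\emptyword\shuprod\word{w}=\word{w}$. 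For $M,N\geq1$, the Stratonovich Leibniz rule $\d(YZ)=Y\circ\d Z+Z\circ\d Y$ --- which carries \emph{no} correction term, unlike the It\^o rule, since both factors are continuous semimartingales --- together with the chain rule above gives
$$ \d F_t=\sum_{i=1}^d\Bigl(\bracketsigX{\bell_2}\bracketsigX{\bell_1\proj{i}}+\bracketsigX{\bell_1}\bracketsigX{\bell_2\proj{i}}\Bigr)\circ\d X^i_t. $$
Since $\bell_1\proj{i}\in\tTA{M-1}$ and $\bell_2\proj{i}\in\tTA{N-1}$, the induction hypothesis rewrites the two bracket products as $\bracketsigX{(\bell_1\proj{i})\shuprod\bell_2}$ and $\bracketsigX{\bell_1\shuprod(\bell_2\proj{i})}$, while Definition~\ref{def:shuffleprod} read coordinate-wise gives the co-Leibniz identity $(\bell_1\shuprod\bell_2)\proj{i}=(\bell_1\proj{i})\shuprod\bell_2+\bell_1\shuprod(\bell_2\proj{i})$. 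Hence $\d F_t=\sum_{i=1}^d\bracketsigX{(\bell_1\shuprod\bell_2)\proj{i}}\circ\d X^i_t=\d G_t$, so $F\equiv G$ on $[0,T]$.

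\emph{Half shuffle and main obstacle.} Using $\d\bracketsigX[s]{\bell_2}=\sum_{j=1}^d\bracketsigX[s]{\bell_2\proj{j}}\circ\d X^j_s$, then the shuffle identity just established, and finally $\bracketsigX{\bell\word{j}}=\int_0^t\bracketsigX[s]{\bell}\circ\d X^j_s$, one gets
$$ \int_0^t\bracketsigX[s]{\bell_1}\circ\d\bracketsigX[s]{\bell_2}=\sum_{j=1}^d\int_0^t\bracketsigX[s]{\bell_1\shuprod(\bell_2\proj{j})}\circ\d X^j_s=\sum_{j=1}^d\bracketsigX{\bigl(\bell_1\shuprod(\bell_2\proj{j})\bigr)\word{j}}, $$
and it only remains to note that Definition~\ref{def:halfshuffleprod}, again read coordinate-wise, yields $\bell_1\succ\bell_2=\sum_{j=1}^d(\bell_1\shuprod(\bell_2\proj{j}))\word{j}$; by linearity of the bracket this closes the argument. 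The main obstacle will be the shuffle step: one must legitimately invoke the Stratonovich product rule and match the analytic recursion of the induction with the combinatorial recursion defining $\shuprod$, through the co-Leibniz identity. The degree bounds, the base cases, and the passage from the shuffle to the half shuffle are then routine bookkeeping.
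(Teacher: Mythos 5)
Your proof is correct and takes essentially the same route as the paper, whose entire argument is ``recursively apply an integration by parts'' (with a pointer to Gaines for details): your induction on total degree via the Stratonovich product rule, the last-letter recursion $\d\bracketsigX{\bell}=\sum_i\bracketsigX{\bell\proj{i}}\circ\d X^i_t$, and the co-Leibniz identity for $\shuprod$ is exactly that recursive integration by parts written out in full. Nothing needs to change.
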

    
    \begin{proof}
        Recursively apply an integration by parts. See \cite{gainesshuffle} for more details.
    \end{proof}

\section{Quadratic hedging} \label{sec:quad_hedging}
    
    Let $(\Omega, \F, \Q)$ be a probability space supporting a two-dimensional Brownian motion $(W, W^{\perp})$ generating the filtration $(\F_t)_{t \geq 0}$. We set $B = \rho W + \sqrt{1 - \rho^2} W^{\perp}$
    for some $\rho \in [-1, 1]$. We consider that the dynamics of the risky asset $S$, under the risk neutral probability measure $\Q$, are given by a stochastic volatility model where the volatility process $\Sigma$ is an adapted process such that $\int_0^T \E [\Sigma_t^2] \d t < \infty$ and
    \begin{align}
        \d S_t &= S_t \Sigma_t \d B_t. \label{eq:dynaprice}
    \end{align}
    
    Let $\xi$ be an $\F_T$-measurable non-negative random variable such that $\E[\xi^2] < \infty$ that we are looking to hedge using a self-financing portfolio. A self-financing hedging portfolio $X$ consists of an initial wealth $X_0 \in \R$ and a progressively measurable strategy $(\alpha_t)_{t \leq T}$ of the amount of shares invested in asset $S$ at time $t \leq T$, which leads to the following dynamics 
    \begin{align}
        X_t^\alpha = X_0 + \int_0^t \alpha_u \d S_u,
    \end{align}
    with $\alpha$ in the set of admissible hedging strategies $\mathcal{A}$ defined by
    \begin{align}
        \mathcal{A} = \left \{ \alpha \text{ progressively measurable such that } \int_0^T \E \left[ ( \alpha_t S_t \Sigma_t )^2 \right] \d t < \infty \right\}.
    \end{align}

    A quadratic hedging strategy aims at minimizing the following objective function
    \begin{align} \label{eq:Jquad}
        P(X_0, \alpha) = \E \left[ \left( X_T^\alpha - \xi \right)^2 \right] 
    \end{align}
    over $X_0 \in \R$ and $\alpha \in \mathcal{A}$. Following \citet{hedging_foellmer} and \citet{guidedtourquadhedging}, it is easy to derive a solution of the quadratic hedging problem using the martingale representation theorem. Note that $(\E[\xi | \F_t])_{t \leq T}$ is a square integrable martingale with terminal value $\xi$ at $T$. An application of the martingale representation theorem \cite[Theorem 4.15]{martingalerep} yields the existence of two progressively measurable and square integrable processes $Z$ and $Z^{\perp}$ such that 
    \begin{align} \label{eq:martingalerep}
        \E \left[\xi | \F_t \right ] = \xi - \int_t^T Z_s \d W_s - \int_t^T Z^{\perp}_s \d W_s^{\perp}.
    \end{align}
    The optimum is attained for $(X_0^*, \alpha^*)$ given by
    \begin{align} \label{eq:hedgingoptimal}
        X_0^* = \E \left[ \xi \right] \quad \text{and} \quad \alpha_t^* = \frac{1}{S_t \Sigma_t} \left( \rho Z_t + \sqrt{1 - \rho^2} Z_t^\perp \right), \quad t \leq T.
    \end{align}
    
    We compare three numerical approaches for solving this quadratic hedging problem:
    \begin{itemize}
        \item[\textbullet] \textbf{Fourier-based method in signature volatility models} (Section~\ref{subsec:sighedging}): An exact method using Fourier inversion, applicable when the payoff $\xi$ belongs to a suitable class and the volatility process $\Sigma$ is modeled as a linear combination of truncated signatures of a Brownian motion. This approach yields semi-explicit solutions for \eqref{eq:hedgingoptimal} and will serve as a benchmark.
        
        \item[\textbullet] \textbf{Deep hedging with and without signatures as features} (Section~\ref{subsec:deephedging}): A data-driven method that parameterizes the self-financing portfolio using neural networks with various architectures and input features.
        
        \item[\textbullet] \textbf{Shallow linear payoff/strategy approximation} (Section~\ref{subsec:arribas}): An approach where both the payoff and the hedging strategy is approximated by a linear combination of signature terms.
    \end{itemize}
    
    \begin{remark}
        Although alternative objective criterions could be considered, we focus here on the quadratic criterion due to its analytical tractability. It  provides a framework for evaluating and comparing approximation methods, such as deep learning or signature-based approaches, against the semi-explicit benchmark provided by the Fourier method, whenever the latter is applicable.
    \end{remark}

\subsection{Fourier hedging under signature volatility models} \label{subsec:sighedging}

    We start by summarizing the signature volatility hedging method of \cite{sigvolfourier}. It will be used as benchmark in the examples of Section~\ref{sec:sig_nn} and as main point of comparison in Section~\ref{sec:sig_reg}. \\
    
    It is valid for models where the stochastic volatility $\Sigma$ in \eqref{eq:dynaprice} is given as a linear combination of the truncated signature of the time augmented process $\widehat{W}_t:=(t, W_t)$, i.e.
    \begin{align}
        \Sigma_t &= \bracketsig{\bsigma_t}, \label{eq:sigvol}
    \end{align}
    where $\bsigma: [0, T] \to \tTA[2]{M}$, $M \geq 0$. In this setting, the following expression holds for the characteristic functional of $\log S$:
    \begin{equation} \label{eq:charfun}
         \E \left[ \left. \exp \left( \int_t^T f(s, u) \d \log S_s \right) \right| \mathcal{F}_t \right]
         = \exp \left( \bracketsig{\bpsi_t(u)} \right) =: \phi_t(u), \quad t \leq T,
    \end{equation}
    where $\bpsi$ solves the following Riccati
    \begin{align} \label{eq:Ric}
        -\dot{\bpsi}_t(u) &
        = \frac{1}{2} (\bpsi_t(u) \proj{2}) \shupow{2} + \rho f(t, u) (\bsigma_t \shuprod \bpsi_t(u) \proj{2}) + \frac{1}{2} \bpsi_t(u) \proj{22} + \bpsi_t(u) \proj{1} + \frac{f(t, u)^2 - f(t, u)}{2} \bsigma_t \shupow{2},
        \\ \bpsi_T(u) &
        = 0.
    \end{align}
    
    Solving $\bpsi$, and hence $\phi$, numerically opens the door to computing the optimal initial wealth and hedging strategy in~\eqref{eq:hedgingoptimal} using Fourier inversion techniques, for specific payoffs $\xi$. For instance, the optimal initial wealth $X_0^*$ can be readily recovered with
    $$ X_0^* = X_0^\textnormal{BS} - \frac{K}{\pi} \int_0^\infty \Re \left[ e^{-i (u - \frac{i}{2}) \log K} \left( M_0 \left( u - \tfrac{i}{2} \right) - M_0^\textnormal{BS} \left( u - \tfrac{i}{2} \right) \right) \right] \frac{\d u}{\left( u^2 + \tfrac{1}{4} \right)}, $$
    together with the optimal hedging strategy $\alpha^*$ in \eqref{eq:hedgingoptimal}
    $$ \alpha_t^* = \alpha_t^\textnormal{BS} - \frac{K}{\pi} \int_0^\infty \Re \left[ e^{-i (u - \frac{i}{2}) \log K} \zeta_t(u - \tfrac{i}{2}) \right] \frac{\d u}{\left( u^2 + \tfrac{1}{4} \right)}, $$
    where 
    $$ \zeta_t(u) := \frac{f(t, u)}{S_t} \left( M_t(u) - M_t^\textnormal{BS}(u) \right) + \frac{\rho}{S_t \Sigma_t} M_t(u) \bracketsig{\bpsi_t(u) \proj{2}}, $$
    and $M_t(u) := \phi_t(u) \exp(\int_0^t f(s, u) \d \log S_s)$ with $f(t, u)$ specific to each payoff, e.g.
    \begin{itemize}
        \item if $\xi = (S_T - K)^+$ is a European call with strike $K$ and maturity $T$, then $f(t, u) = iu$ 
        \item if $\xi = (\exp[\frac{1}{T} \int_0^T \log S_t \d t] - K)^+$ is a geometric Asian call, then $f(t, u) = iu \frac{T-t}{T}$ 
    \end{itemize}
    Moreover, in order to improve the speed of convergence of the Gauss-Laguerre quadrature towards the Fourier integral, Black-Scholes (BS) control variate is included with $M_t^\textnormal{BS}(u) := \phi_t^\textnormal{BS}(u) e^{U_t(u)}$, $X_0^\textnormal{BS}$ and $\alpha^\textnormal{BS}$ the characteristic functional, initial wealth and delta hedging under Black-Scholes model.

\subsection{Deep hedging} \label{subsec:deephedging}

    Deep hedging as studied by \citep*{deephedgingteichmann} parametrizes the hedging strategy $\alpha = \alpha^{\theta}$ and the initial wealth $X_0 = X_0^{\theta}$ by a neural network with parameters $\theta$, and then trains the network to find the $\theta$ that minimizes the objective criterion~\eqref{eq:Jquad}. \\
    
    In practice, we discretize time as $0 = t_0 < \dots < t_J = T$, choose input features $X_{t_j}$ for $j = 0, \dots, J-1$, e.g., $X_{t_j} = (t_j, S_{t_j})$ for a naive feedforward network or $X_{t_j} = (t_j, S_{t_j}, \alpha^\theta(X_{t_{j-1}}))$ for a recurrent one, and fix a class of architectures $\Theta$. The training problem then becomes:
    \begin{align}
        \min_{\theta \in \Theta} \frac{1}{I} \sum_{i=1}^I \left( X_0^\theta + \sum_{j=0}^{J-1} \alpha^\theta (X_{t_j}^i) \Delta S_{t_{j+1}}^i - \xi^i \right)^2,
    \end{align}
    where $\Delta S_{t_{j+1}} = S_{t_{j+1}} - S_{t_j}$ and $I$ is the number of training samples. \\
    
    Thanks to the universal approximation property of neural networks, restricting the search to a parametrized class is not too limiting: in principle, neural networks can approximate the optimal solution arbitrarily well.
    
    However, the choice of architecture $\Theta$ and input features $X$ is crucial and can drastically affect performance. More complex architectures, such as recurrent networks, often perform better on sequential tasks, as evidenced in fields like language modeling~\citep{google_lstm, transformer} and reinforcement learning~\citep{deepmind_lstm, openai_lstm}. Yet these models also increase computational costs and may not be parallelizable, as is the case with LSTM~\citep{LSTM} or GRU~\citep{gru} networks.\\
    
    The goal of Section~\ref{sec:sig_nn} is to evaluate, within the deep hedging framework, whether signature features can efficiently capture relevant path information and mitigate the need for recurrent complexity. Specifically, we will compare a vanilla feedforward network using signature features to a recurrent LSTM using Markovian inputs. Figure~\ref{fig:neurons} highlights the difference in architectures and complexity.

\def\xX{-1.25}
\def\xXi{-.25}
\def\xstart{0}
\def\xplus{1}
\def\xgates{2.5}
\def\xforget{4}
\def\xinput{5}
\def\xoutput{6}
\def\xfinish{7}
\def\xY{8.25}
\def\yshortup{0}
\def\yshortmid{-0.5}
\def\yshort{-1}
\def\yforget{-2}
\def\yupdate{-3}
\def\yinput{-4}
\def\youtput{-5}
\def\ylongmid{-5.5}
\def\ylongdown{-6}
\def\rounded{6pt}

\begin{figure}[H]
    \centering
    \subfloat[Feedforward]{
    \begin{tikzpicture}[
        font=\tiny,
        >=LaTeX,
        operator/.style={
            font=\normalsize,
            circle,
            draw,
            inner sep=-0.5pt,
            minimum height =.3cm,
            },
        data/.style={
            font=\normalsize,
            },
        activation/.style={
            font=\small,
            rectangle,
            draw,
            minimum width=12mm,
            minimum height=5mm,
            inner sep=2pt
            },
        ]
    
        \node[data] (X) at (\xX,0) {Input};
        \node [operator] (add) at (\xplus,0) {$+$};
        \node (addb) at (\xplus,0.5) {$b$};
        \node [activation] (act) at (\xgates,0) {$\tanh$};
        \node[data] (Y) at (\xY,0) {Output};
        
        \draw [->] (addb) -- (add.north);
        \draw [->] (X) -- (add) node[xshift=-12pt,above] () {$A$};
        \draw [->] (add) -- (act);
        \draw [->] (act) -- (Y);
    \end{tikzpicture}
    }
    \\
    \subfloat[LSTM]{
    \begin{tikzpicture}[
        font=\tiny\raggedleft,
        >=LaTeX,
        operator/.style={
            font=\small,
            circle,
            draw,
            inner sep=-0.5pt,
            minimum height =.3cm,
            },
        data/.style={
            font=\normalsize,
            },
        activation/.style={
            font=\small,
            rectangle,
            draw,
            minimum width=14mm,
            minimum height=5mm,
            inner sep=2pt
            },
        label/.style = {
            inner sep=2pt,
            below right=0pt and 3pt of #1
            },
        ]
    
        \node[data] (X) at (\xX,\yupdate/2+\yinput/2) {Input};
        \node[data] (Y) at (\xY,\yupdate/2+\yinput/2) {Output};
        \coordinate (Ym) at (\xfinish,\yinput) {};
        \coordinate (Xi) at (\xXi,\yupdate) {};
        \coordinate (Him) at (\xstart,\ylongmid) {};
        \coordinate (Hom) at (\xfinish,\ylongmid) {};
        \coordinate (Hi) at (\xplus,\ylongdown) {};
        \coordinate (Ho) at (\xoutput,\ylongdown) {};
        \coordinate (Cim) at (\xstart,\yshortmid) {};
        \coordinate (Com) at (\xfinish,\yshortmid) {};
        \coordinate (Ci) at (\xplus,\yshortup) {};
        \coordinate (Co) at (\xoutput,\yshortup) {};
        \node (faddb) at (\xplus,\yforget+0.5) {$b_f$};
        \node (uaddb) at (\xplus,\yupdate+0.5) {$b_u$};
        \node (iaddb) at (\xplus,\yinput+0.5) {$b_i$};
        \node (oaddb) at (\xplus,\youtput+0.5) {$b_o$};
        
        \node [activation] (fact) at (\xgates,\yforget) {\textnormal{sigmoid}};
        \node [activation] (uact) at (\xgates,\yupdate) {\textnormal{sigmoid}};
        \node [activation] (iact) at (\xgates,\yinput) {$\tanh$};
        \node [activation] (oact) at (\xgates,\youtput) {\textnormal{sigmoid}};
        \node [activation] (sact) at (\xoutput,\yforget) {$\tanh$};
    
        \node [operator] (fadd) at (\xplus,\yforget) {$+$};
        \node [operator] (uadd) at (\xplus,\yupdate) {$+$};
        \node [operator] (iadd) at (\xplus,\yinput) {$+$};
        \node [operator] (oadd) at (\xplus,\youtput) {$+$};
        \node [operator] (sadd) at (\xinput,\yshort) {$+$};
        \node [operator] (fmul) at (\xforget,\yshort) {$\times$};
        \node [operator] (umul) at (\xinput,\yupdate) {$\times$};
        \node [operator] (omul) at (\xoutput,\youtput) {$\times$};
    
        \draw [->] (fadd) -- (fact);
        \draw [->] (uadd) -- (uact);
        \draw [->] (iadd) -- (iact);
        \draw [->] (oadd) -- (oact);
        
        \draw [->,rounded corners=\rounded] (fact) -| (fmul) node[label=fact.east] {forget gate};
        \draw [->] (fmul) -- (sadd);
        
        \draw [->,rounded corners=\rounded] (uact) -- (umul) node[label=uact.east] {update gate};
        \draw [->,rounded corners=\rounded] (iact) -| (umul) node[label=iact.east] {input value};
        
        \draw [->] (umul) -- (sadd);
        \draw [->,rounded corners=\rounded] (sadd) -| (sact);
        
        \draw [->] (oact) -- (omul) node[label=oact.east] {output gate};
        \draw [->] (sact) -- (omul) node[label=sact.south] {output value};
        
        \draw [-,rounded corners=\rounded] (sadd) -| (Com);
        \draw [-,rounded corners=\rounded] (Com) |- (Co);
        \draw [-] (Co) -- node[midway,below] {\normalsize Short Term Memory} (Ci);
        \draw [-,rounded corners=\rounded] (Ci) -| (Cim);
        \draw [->,rounded corners=\rounded] (Cim) |- (fmul);
        
        \draw [-,rounded corners=\rounded] (omul) -| (Hom);
        \draw [-,rounded corners=\rounded] (omul) -| (Ym);
        \draw [->,rounded corners=\rounded] (Ym) |- (Y);
        
        \draw [-,rounded corners=\rounded] (Hom) |- (Ho);
        \draw [-] (Ho) -- node[midway,above] {\normalsize Long Term Memory} (Hi);
        \draw [-,rounded corners=\rounded] (Hi) -| (Him);
        
        \draw [->,rounded corners=\rounded] (Him) |- (fadd.south west) node[xshift=-12pt,below] () {$C_f$};
        \draw [->,rounded corners=\rounded] (Him) |- (uadd.south west) node[xshift=-12pt,below] () {$C_u$};
        \draw [->,rounded corners=\rounded] (Him) |- (iadd.south west) node[xshift=-12pt,below] () {$C_i$};
        \draw [->,rounded corners=\rounded] (Him) |- (oadd.south west) node[xshift=-12pt,below] () {$C_o$};
        
        \draw [->,rounded corners=\rounded] (X) -- +(0:1) |- (fadd.north west) node[xshift=-12pt,above] () {$A_f$};
        \draw [->,rounded corners=\rounded] (X) -- +(0:1) |- (uadd.north west) node[xshift=-12pt,above] () {$A_u$};
        \draw [->,rounded corners=\rounded] (X) -- +(0:1) |- (iadd.north west) node[xshift=-12pt,above] () {$A_i$};
        \draw [->,rounded corners=\rounded] (X) -- +(0:1) |- (oadd.north west) node[xshift=-12pt,above] () {$A_o$};
        
        \draw [->] (faddb) -- (fadd.north);
        \draw [->] (uaddb) -- (uadd.north);
        \draw [->] (iaddb) -- (iadd.north);
        \draw [->] (oaddb) -- (oadd.north);
    
    \end{tikzpicture}
    }
    \caption{Feedforward (a) and LSTM (b) neuron architectures.}
    \label{fig:neurons}
\end{figure}
    
    The trainable parameters are $A_\cdot \in \R^{H \times Q}$, $b_\cdot \in \R^H$, and $C_\cdot \in \R^{H \times H}$, where $Q$ and $H$ denote the input and hidden layer sizes, respectively. The choice of $\tanh$ as the activation function in the Vanilla neuron, rather than more common alternatives such as ReLU, was motivated by the boundedness of the hedging strategy. In practice, this choice not only helped ensure bounded outputs but also led to faster convergence during training. For LSTM units, both the long-term and short-term memory states are initialized to zero at the beginning of each path.

    \paragraph{Architecture.} The neural networks used in our experiments follow a simple and consistent structure:
    \begin{enumerate}
        \item A trainable initial wealth parameter $X_0^\theta \in \R$,
        \item An input layer with $Q$ units and $L$ hidden layers, each with $H$ neurons of the same type (either Vanilla or LSTM),
        \item An output Vanilla neuron with sigmoid activation returning $\alpha^\theta$.
    \end{enumerate}
    The sigmoid activation at the output is appropriate because we are considering call options, for which the hedging strategies lie in a bounded range (typically $[0,1]$). Other derivatives, such as put options, may require different output activations due to different optimal hedge profiles.\\

    For the rest of the paper, networks will have an input layer, 2 hidden layers and be 10 neurons wide. The number of \textit{in-sample} simulations $I$ is set to 10,000 while the number of \textit{out-of-sample} (OOS) simulations is set to 20,000 to compute the mean and mean squared P\&L of the different methods. Moreover, simulations will have $J=126$ time-steps. Finally, training is done in batches of 64 for 64 epochs, amounting for 10,048 training steps, with a AdamW \citep{adamw} optimizer with weight decay 0.01 and a learning rate ranging from $10^{-2}$ to $10^{-3}$ with a cosine scheduler over the training steps. The algorithm used for training is summarized in Appendix~\ref{app:algorithms}.\\
    
    As a side note, we have also considered more advanced parallelizable models such as Transformers~\citep{transformer} and Structured State Space Models~\citep{sss_agu}, but opted for simpler architectures that can be trained on a standard laptop. LSTM remains a competitive baseline both in academic research and industrial applications~\citep{google_lstm, deepmind_lstm, openai_lstm}.

\subsection{Linear hedging with signature} \label{subsec:arribas} 

    The main idea of linear hedging with signatures is to restrict the set of admissible strategies $\alpha$ and initial wealth $X_0$, resp., to those that are linear functions of the signature of the time-augmented observable price path $\widehat{S}_t := (t, S_t)$ and its lead-lag counterpart resp. Specifically, one considers strategies of the form
    \begin{align} \label{eq:linearstrategy}
        \alpha_t^\textnormal{lin} \approx \bracket{\balpha}{\sig[t][S]}, \quad 0 \leq t < T,
    \end{align}
    with $\balpha \in \TA[2]$. The linear functional $\balpha$ can either be learned by gradient descent, as in the previous section, or, when the payoff can be approximated by a linear signature functional, explicitly computed by reducing the problem to a deterministic optimization problem, as described by \cite*{arribas-nonparam}. \\
    
    To proceed, we introduce the \textbf{Hoff lead-lag transform} \cite[Definition 2.1]{hoffleadlag}, which is crucial for combining Itô integrals with (Stratonovich) signatures. Let $X : [0,T] \to \R^d$ be a continuous semi-martingale discretely sampled at times $t_j$, for $j = 0, \dots, J$. The Hoff lead-lag transform $X^\LL : [0,T] \to \R^{2d}$ is a piecewise linear interpolation given by
    \begin{align*}
        X_t^\LL = (X_t^{\textnormal{lag}}, X_t^{\textnormal{lead}}),
    \end{align*}
    where for $t \in [t_j, t_{j+1})$, and $\Delta t = t_{j+1} - t_j$, we define:
    \begin{align*}
        (X_t^{\textnormal{lag}}, X_t^{\textnormal{lead}}) =
        \begin{cases}
            (X_{t_j}, X_{t_{j+1}}), & \text{if } t \in [t_j, t_{j + \frac{1}{2}}), \\
            \left(X_{t_j}, X_{t_{j+1}} + 4\frac{t - t_{j + 1/2}}{\Delta t} (X_{t_{j+2}} - X_{t_{j+1}})\right), & \text{if } t \in [t_{j + \frac{1}{2}}, t_{j + \frac{3}{4}}), \\
            \left(X_{t_j} + 4\frac{t - t_{j + 3/4}}{\Delta t} (X_{t_{j+1}} - X_{t_j}), X_{t_{j+2}}\right), & \text{if } t \in [t_{j + \frac{3}{4}}, t_{j+1}),
        \end{cases}
    \end{align*}
    and finally $X_{t_J}^\LL = (X_{t_{J-1}}, X_{t_J})$. Here, $t_{j+\frac{1}{2}}$ and $t_{j+\frac{3}{4}}$ denote the midpoints obtained via linear interpolation. This transformation allows us to express Itô integrals involving signature processes. For example, if $S$ is a semi-martingale and $\widehat S_t := (t, S_t)$, then
    \begin{align}\label{eq:arribas_itostrat}
        \int_0^T \bracketsig[t][S]{\balpha} \, \d S_t = \bracket{\balpha \word{4}}{\sig[T][S]^\LL},    
    \end{align}
    for $\balpha \in \TA[2]$, with $\balpha \word{4} \in \TA[4]$. Note that $\balpha \word{4}$ is sparse and $\sig[T][S]^\LL$ is typically evaluated over a finer (e.g., tripled) time grid. \\
    
    Suppose the payoff $\xi$ can be written or approximated by
    \begin{align} \label{eq:arribas_payoff}
        \xi^\textnormal{lin} \approx \bracket{\bxi}{\sig[T][S]^\LL},
    \end{align}
    for some $\bxi \in \TA[4]$. Then, plugging this expression into the optimal hedging condition \eqref{eq:hedgingoptimal}, we obtain by linearity:
    \begin{align} \label{eq:arribas_initwealth}
        X_0^\textnormal{lin} = \bracket{\bxi}{\E[\sig[T][S]^\LL]},
    \end{align}
    where $\E[\sig[T][S]^\LL]$ denotes the {expected signature} which can be estimated via Monte Carlo simulations or implied from market options. \\
    
    Concerning the hedging strategy, injecting the linear parameterizations \eqref{eq:linearstrategy} and \eqref{eq:arribas_payoff} into the objective criterion \eqref{eq:Jquad}, and using the shuffle product property of the signature together with \eqref{eq:arribas_itostrat}, leads to the equivalent deterministic optimization problem:
    \begin{align} \label{eq:arribas_optimization}
        \min_{\balpha \in \tTA[2]{M}} \bracket{ \left(X_0^\textnormal{lin} \emptyword + \balpha \word{4} - \bxi \right) \shupow{2} }{ \E[\sig[T][S]^\LL] },
    \end{align}
    where $\bxi \in \tTA[4]{(M+1)}$, $\balpha \in \tTA[2]{M}$, and $\E[\sig[T][S]^\LL] \in \tTA[4]{2(M+1)}$. This is now a deterministic (albeit possibly high-dimensional) optimization problem in $\balpha$, which can be solved using gradient descent techniques. These methods are studied in Section~\ref{sec:sig_reg}.

\section{Signatures as Features in Neural Networks} \label{sec:sig_nn}

    The goal of this section is to evaluate the impact of incorporating signature features into deep hedging strategies under two complementary axes:
    \begin{itemize}
        \item when the dynamics of the underlying asset exhibit {non-Markovianity}, and
        \item when the {payoff is path-dependent}, even under Markovian dynamics.
    \end{itemize}
    
    We consider a spot process $S_t$ with dynamics
    $$ \d S_t = S_t \Sigma_t \, \d B_t, $$

    where $(W, W^\perp)$ is a two-dimensional Brownian motion and $B = \rho W + \sqrt{1 - \rho^2} W^\perp$, with $\rho = -0.7$ fixed across all experiments. The stochastic volatility component $\Sigma_t$ is driven by $W$, and we consider two volatility models:
    
    \paragraph{Model 1: \cite{heston} (Markovian).} A classical Markovian stochastic volatility model where $\Sigma_t= \sqrt{V_t}$ with
    $$ \d V_t = \kappa (\theta - V_t) \, \d t + \eta \sqrt{V_t} \, \d W_t, \quad V_0 = \theta, $$
    
    with parameters $\kappa = 4$, $\theta = 0.0625$, and $\eta = 0.7$. The model admits a closed-form expression for the characteristic function, enabling pricing and hedging via Fourier methods for certain payoffs.
    
    \paragraph{Model 2: Shifted Fractional Bergomi (non-Markovian).} A non-Markovian model where the volatility is given by
    \begin{align} \label{eq:bergmimodel}
        \Sigma_t = \sqrt{f_0(t)} \exp \left( \frac{\varsigma}{2} \int_0^t (\epsilon + t - s)^{H - \frac{1}{2}} \, \d W_s - \frac{\varsigma^2}{8H} \left((\epsilon + t)^{2H} - \epsilon^{2H} \right) \right),
    \end{align}
    with parameters $f_0(t) \equiv 0.0625$, $\varsigma = 1.6$, $H = 0.1$, and $\epsilon = 1/52$. The shifted kernel enhances modeling flexibility, as shown in \cite{shaun_rough}. Unlike Model 1, Model 2 lacks an explicit characteristic function in the standard sense. However, it fits naturally within the signature volatility framework of Section~\ref{subsec:sighedging}, which opens the door to approximate Fourier pricing and hedging via signature methods (see Section~\ref{subsec:sighedging}). Indeed, we can represent the volatility process $\Sigma$ as an infinite series of the signature of the time-extended Brownian motion $\widehat{W}_t = (t, W_t)$. More precisely, using the representation formula for Gaussian Volterra process \cite*[Section 4.3]{linearfbm} and the shuffle product, we get that \eqref{eq:bergmimodel} can be re-written as
    \begin{align} \label{eq:lin_repr_fbm}
        \Sigma_t = \bracketsig{\bsigma_t}, \quad \bsigma_t &
        = \sqrt{f_0(t)} \shuexp{u_t \emptyword + \bm{k}_t \word{2}},
    \end{align}
    and
    \begin{align}
        \bm{k}_t
        := \frac{\varsigma}{2} \sum_{n=0}^\infty (\epsilon + t)^{H - \frac{1}{2} - n} \left( \tfrac{1}{2} - H \right)^{\bar{n}} \word{1} \conpow{n}, \quad u_t := \frac{\varsigma^2}{8H} \left(\epsilon^{2H} - (\epsilon + t)^{2H} \right),
    \end{align}
    and $(\cdot)^{\bar{n}}$ is the rising factorial.
    \begin{sqremark} \label{rem:shuexp}
        The shuffle exponential in \eqref{eq:lin_repr_fbm} could be naively computed as its $K$-truncated power series expansion around $a \in \R$, i.e.
        $$ \shuexp{\bell} \approx e^a \sum_{n=0}^K \frac{(\bell - a \emptyword) \shupow{n}}{n!}, \quad \bell \in \tTA{M}, $$
        where $K$ needs to be large enough to have satisfactory convergence with arbitrary $a$. However, the right choice of $a$ can lead to a simplified recursive formula for $\shuexp{\bell}$: take $a = \bell^\emptyword$, then we have
        $$ \shuexp{\bell} = e^{\bell^\emptyword} \emptyword + \shuexp{\bell} \succ \bell = e^{\bell^\emptyword} \emptyword + \sum_{\word{i} \in \alphabet} \left( \shuexp{\bell} \shuprod \bell \proj{i} \right) \word{i}, \quad \bell \in \tTA{M}. $$
        
        This better way to construct the shuffle exponential allows for only $M \times d$ shuffle products to get the true coefficients up to order $M$. Furthermore, in case of the shifted fractional Bergomi, one only needs $M$ shuffle products as only one projection is non-zero, namely against $\word{2}$, leading to only $M$ shuffle products, i.e.
        $$ \shuexp{u_t \emptyword + \bm{k}_t \word{2}} \underset{\leq M}{=} e^{u_t} \emptyword + \underbrace{\left( e^{u_t} \emptyword + \left( \cdots e^{u_t} \emptyword \cdots \shuprod \bm{k}_t \right) \word{2} \shuprod \bm{k}_t \right) \word{2}}_{\times M}. $$
    \end{sqremark}
    
    Throughout the section, a version of $\bsigma_t$ truncated at \textit{order 5} will be used for signature Fourier hedging, namely Sections~\ref{subsec:deep_euro} and \ref{subsec:deep_asian}. An odd truncation order and a non-positive correlation parameter $\rho$, ensures that the stock price $S$ in the truncated model is a true martingale as shown by \citep*{sig_martingale}. Figure~\ref{fig:euro_iv_fbergomi} shows the implied volatility of European put options priced with the signature Fourier approximation using the Riccati equation \eqref{eq:Ric} under shifted fractional Bergomi against Monte-Carlo prices of the starting model \eqref{eq:bergmimodel}. The prices obtained by Fourier fall into the 95\% Monte Carlo confidence interval showing that for relatively short maturities, the approximate Fourier technique is accurate enough.
    
    \begin{figure}[H]
        \centering
        \includegraphics[width=\twoplotswidth]{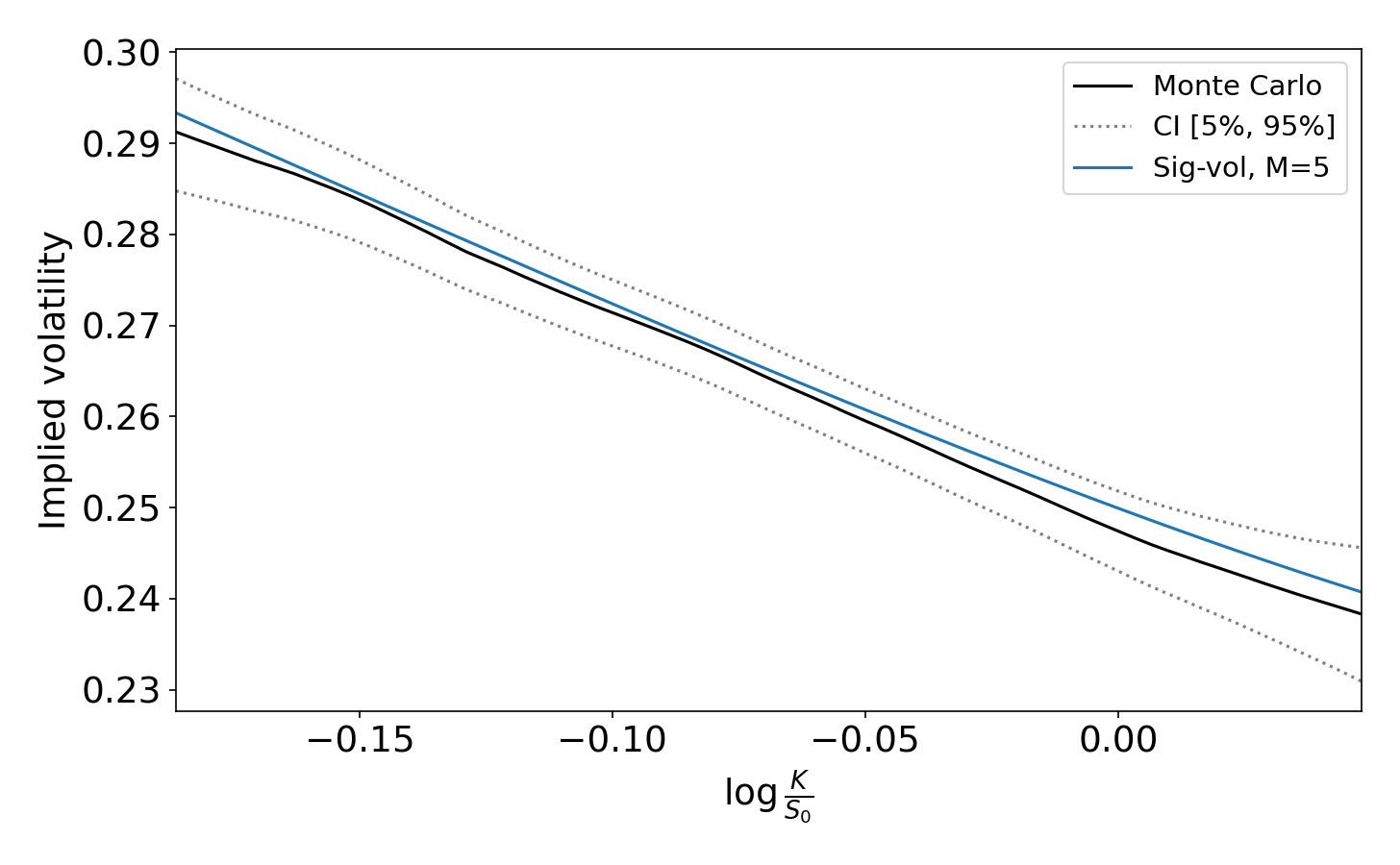}
        \caption{Implied volatility of a European put option under \fbergomi}
        \label{fig:euro_iv_fbergomi}
    \end{figure}
    
    \vspace{0.5em}
    \paragraph{Neural Networks.} For both models, we train and compare three neural network architectures to approximate the quadratic hedging strategy, as described in Section~\ref{subsec:deephedging}:
    \begin{itemize}
        \item \textbf{Vanilla NN (VNN):} A feedforward network with input $(t, S_t, \Sigma_t)$ for each $t$;
        \item \textbf{Signature NN (SNN):} A feedforward network with input given by the \emph{signature of order 4} of the path $(t, S_t, \Sigma_t)$ for each $t$;
        \item \textbf{Recurrent NN (RNN):} A recurrent LSTM-based network with input $(u, S_u, \Sigma_u)_{u \leq t}$ for each $t$.
    \end{itemize}
    
    The VNN serves as a baseline. The goal is to determine whether the explicit use of signature features allows simple feedforward networks (SNN) to outperform or match more complex recurrent architectures (RNN) on path-dependent tasks. Note that the signature can be computed quite efficiently on a CUDA GPU using \mbox{\textsc{Signatory}} of  \cite{signatory}. And even though we didn't have access to one, the time to compute them is negligible compared to the training time. \\
    
    Recall that all networks are 3 layers deep with 10 neurons wide. This choice of width was such that the 3 layer VNN would perform on par with Black-Scholes' solution in hedging an ATM European call option. Despite similar architecture, the parameter counts differ significantly: VNN has 272 trainable parameters, SNN has 1442 (due to the larger 120 signature input), and RNN has 2252 (due to its recurrent structure).
    
    \vspace{0.5em}
    \paragraph{Payoff Types.} To assess the contribution of pathwise information in hedging, we structure our experiments around three payoffs of increasing path-dependence:
    \begin{itemize}
        \item [\ref{subsec:deep_euro}.] \textbf{European Call Option:} a Vanilla payoff used to isolate the effect of non-Markovian dynamics
        \begin{align} \label{eq:euro_call}
            \left( S_T - K \right)^+,
        \end{align}
        
        \item [\ref{subsec:deep_asian}.] \textbf{Asian Call Option:} a payoff depending on the average price path
        \begin{align} \label{eq:asian_call}
            \left( \exp \left[ \frac{1}{T} \int_0^T \log S_t \d t \right] - K \right)^+,
        \end{align}
        
        \item [\ref{subsec:deep_look}.] \textbf{Look-back Call Option:} a path-dependent and strongly non-linear payoff
        \begin{align} \label{eq:lookback_call}
            S_T - \min_{0 \leq t \leq T} S_t.
        \end{align}
    \end{itemize}
    
    This structure allows us to systematically evaluate the role of signature features across both model complexity and payoff complexity.

\subsection{European option} \label{subsec:deep_euro}

    This first numerical experiment serves as a sanity check of our different methods. We here compare the performances of the three neural networks under a simplistic European call option \eqref{eq:euro_call} with strike $K=1$ and maturity $T$ of 6 months under Heston model and 1 month under shifted fractional Bergomi. \\

    We write \textit{Fourier, true} for the Fourier solution of hedging problem using Heston's characteristic function. On the other hand, we write \textit{Fourier, signature} for the Fourier approximation of that solution using Section~\ref{subsec:sighedging} and representation \eqref{eq:lin_repr_fbm} under shifted fractional Bergomi. Note that the difference in maturities has been chosen to make sure the benchmark signature Fourier approximation converges well enough, recall Figure~\ref{fig:euro_iv_fbergomi}. \\
    
    We can see in Figure~\ref{fig:euro_pdf} and Table~\ref{tab:euro_msp} that all three neural networks are on par and converge relatively well. It can however be noted that the Recurrent NN lags behind its feedforward counterparts, 4.5\% under Heston with heavier tails and 2.6\% under the shifted fractional Bergomi model even with low maturity. We believe it can be mainly attributed to overfitting as the \textit{in-sample} matched the Fourier solutions. It can also be argued that the Signature NN slightly outperforms the Vanilla NN. This leads us to believe that the structure of the LSTM is more prone to overfitting than simply its number of parameters. Recall the networks all have 3 layers of 10 neurons, with RNN, SNN and VNN respectively having 2252, 1442 and 272 trainable parameters. {Moreover, the Signature NN P\&L density in Figure~\ref{fig:euro_pdf}-(b) is closer to the Fourier solution than that of the Vanilla NN, showcasing the ability of the signature to capture the path-dependencies in the fractional Bergomi model.}
    
    \begin{figure}[H]
        \centering
        \subfloat[\centering \heston]{
        \includegraphics[width=\twoplotswidth]{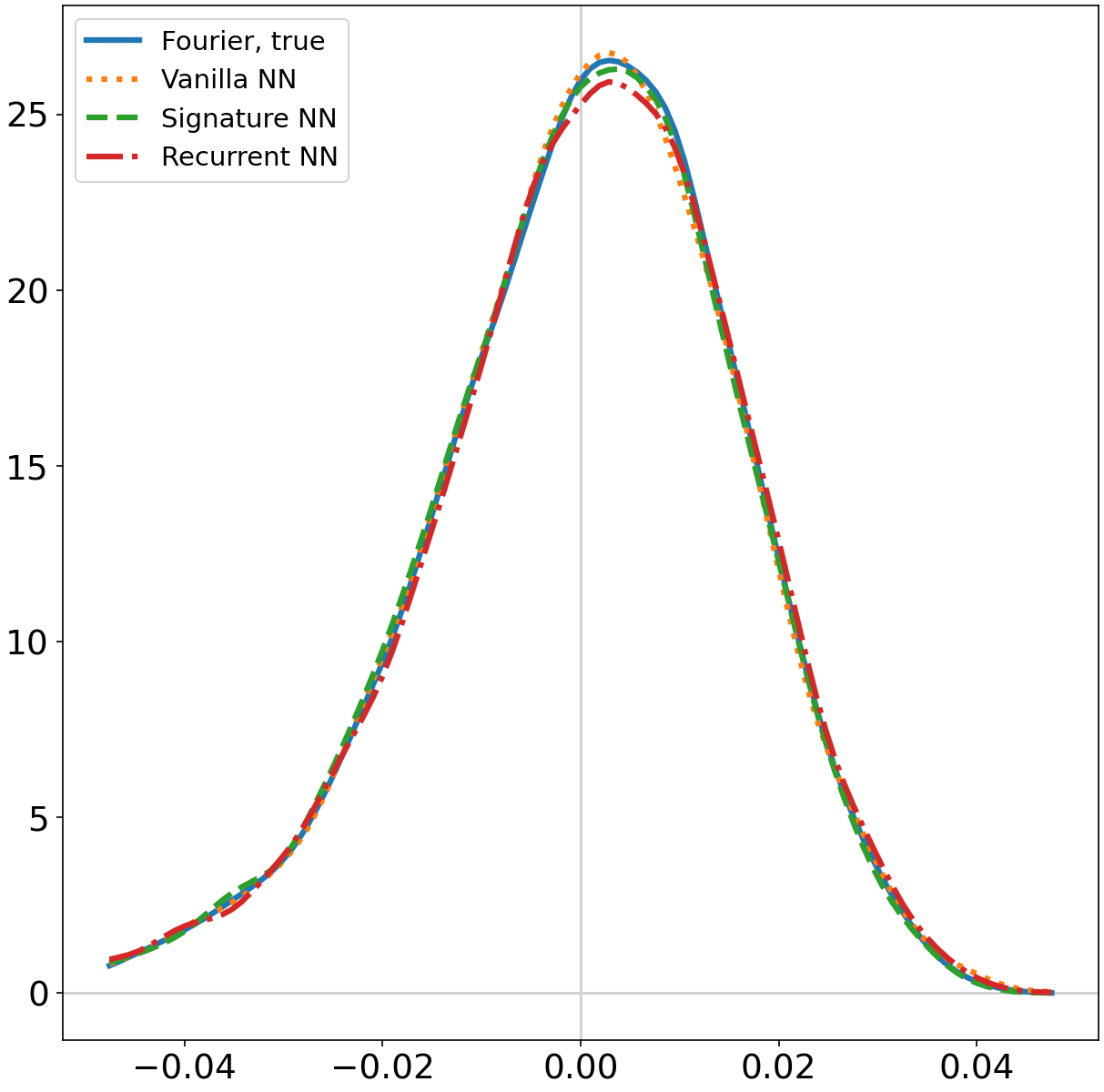}
        }
        \quad
        \subfloat[\centering \fbergomi]{
        \includegraphics[width=\twoplotswidth]{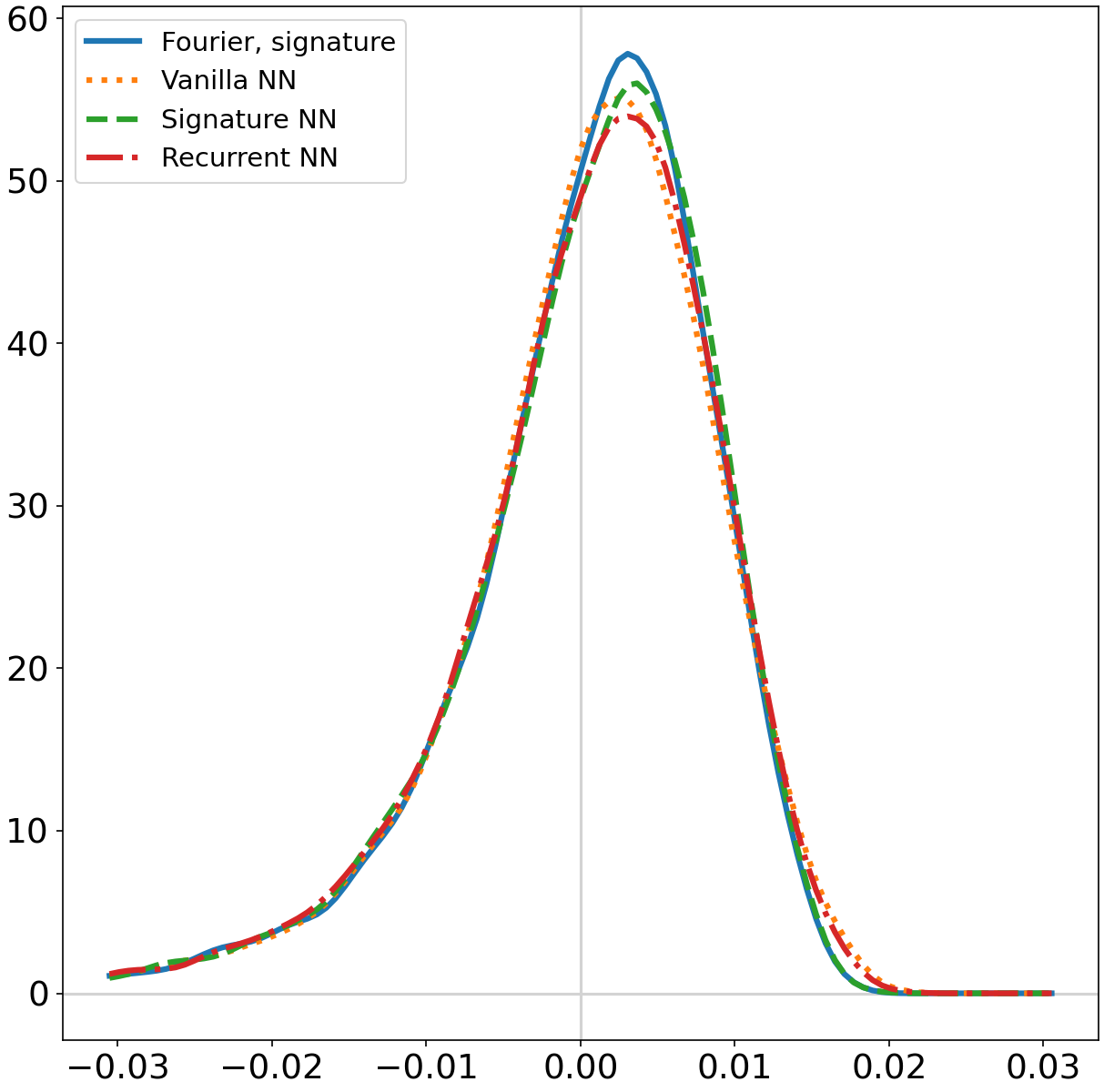}
        }
        \caption{\eurocall: \PDFcaption}
        \label{fig:euro_pdf}
    \end{figure}

    Note that the densities in Figures~\ref{fig:euro_pdf}, \ref{fig:asian_pdf} and \ref{fig:lookback_pdf} have been smoothed with a simple Gaussian KDE of \cite{kde_scott}.

    \begin{table}[H]
        \centering
        \subfloat[\centering \heston]{
        \begin{tabular}{|l|c|c|c|c|}
            \hline
                              & Fourier               & Vanilla NN            & Signature NN          & Recurrent NN          \\
            \hline
            Mean squared P\&L & $2.55 \cdot 10^{-4}$  & $2.59 \cdot 10^{-4}$  & $2.58 \cdot 10^{-4}$  & $2.67 \cdot 10^{-4}$  \\
            Mean P\&L         & $-6.42 \cdot 10^{-5}$ & $-1.70 \cdot 10^{-4}$ & $-3.82 \cdot 10^{-4}$ & $-2.42 \cdot 10^{-5}$ \\
            \hline
        \end{tabular}
        }
        \quad
        \subfloat[\centering \fbergomi]{
        \begin{tabular}{|l|c|c|c|c|}
            \hline
                              & Signature Fourier     & Vanilla NN            & Signature NN          & Recurrent NN          \\
            \hline
            Mean squared P\&L & $8.16 \cdot 10^{-5}$  & $8.20 \cdot 10^{-5}$  & $8.18 \cdot 10^{-5}$  & $8.38 \cdot 10^{-5}$  \\
            Mean P\&L         & $3.49 \cdot 10^{-5}$  & $1.45 \cdot 10^{-4}$  & $1.56 \cdot 10^{-4}$  & $1.23 \cdot 10^{-4}$  \\
            \hline
        \end{tabular}
        }
        \caption{\eurocall: \MSPcaption}
        \label{tab:euro_msp}
    \end{table}

    In addition to the mean squared P\&L, which is our metric of interest, we have included the mean P\&L in Table~\ref{tab:euro_msp}, \ref{tab:asian_msp} and \ref{tab:lookback_msp} to showcase the bias of the different model, i.e.~the accuracy of the initial wealth $X_0$. For example, even though the RNN performs worse than its feedforward counterparts in the case of hedging a European option, it does provide a better initial wealth. Recall that the networks are trained to output both the hedging exposure $\alpha$ over the horizon and the initial wealth $X_0$.

\subsection{Asian option} \label{subsec:deep_asian}
    
    We now turn to the at-the-money Asian call option \eqref{eq:asian_call}.
    Being out- or in-the-money did not make much difference in our experiments.\\
    
    Figure~\ref{fig:asian_pdf} and Table~\ref{tab:asian_msp} show that under both models the Signature NN (SNN) clearly outperforms its counterparts and converges relatively close to the Fourier solutions, 0.3\% and 1.1\% difference for the respective models. Moreover, although the Recurrent NN (RNN) performs better than the Vanilla NN (VNN) it is still quite far from the target solutions, 45\% and 7\% respectively. Moreover, we can see that the neural networks perform better overall under the shifted fractional Bergomi, but this is due to the shorter maturity.
    
    \begin{figure}[H]
        \centering
        \subfloat[\centering \heston]{
        \includegraphics[width=\twoplotswidth]{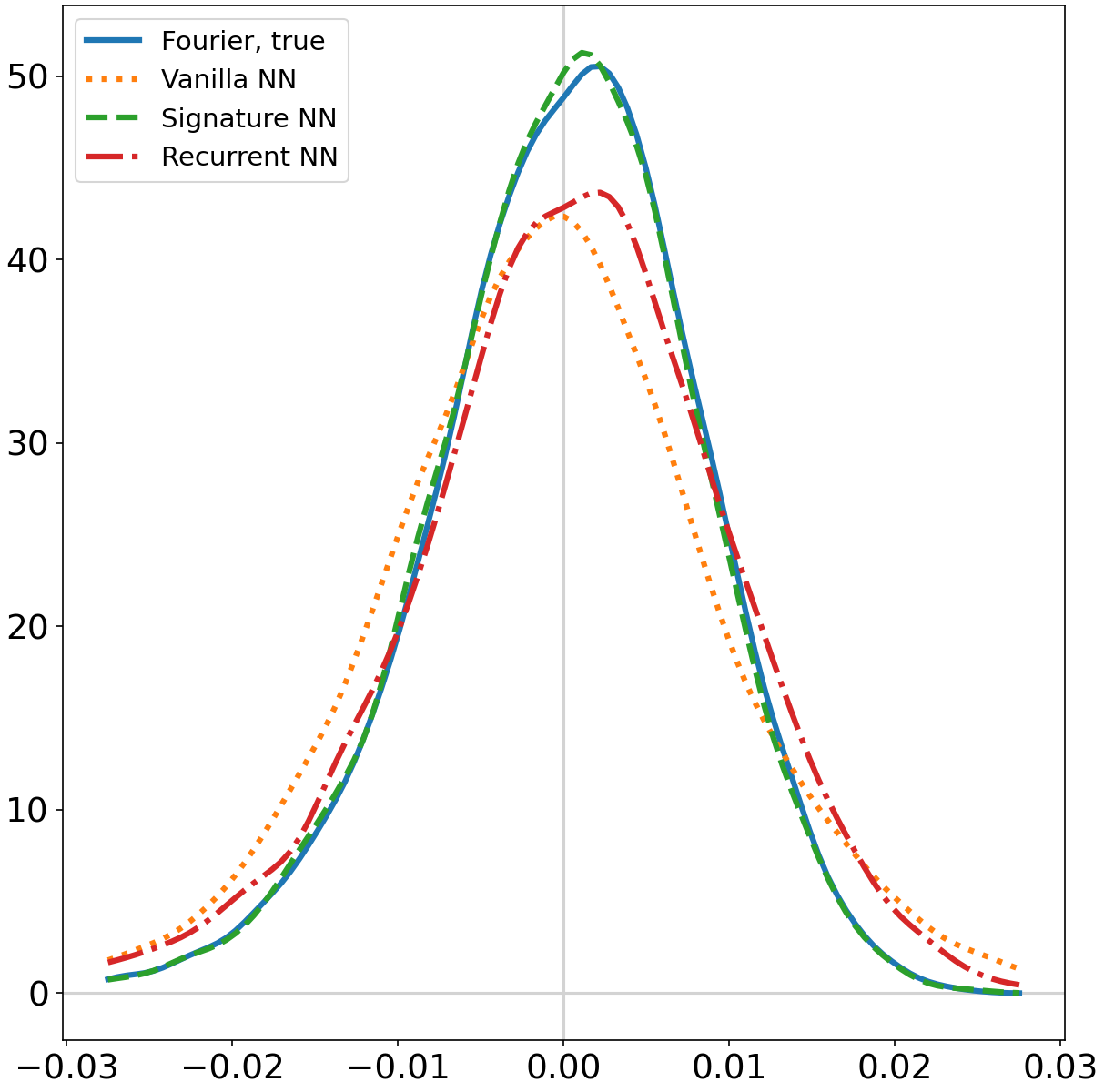}
        }
        \quad
        \subfloat[\centering \fbergomi]{
        \includegraphics[width=\twoplotswidth]{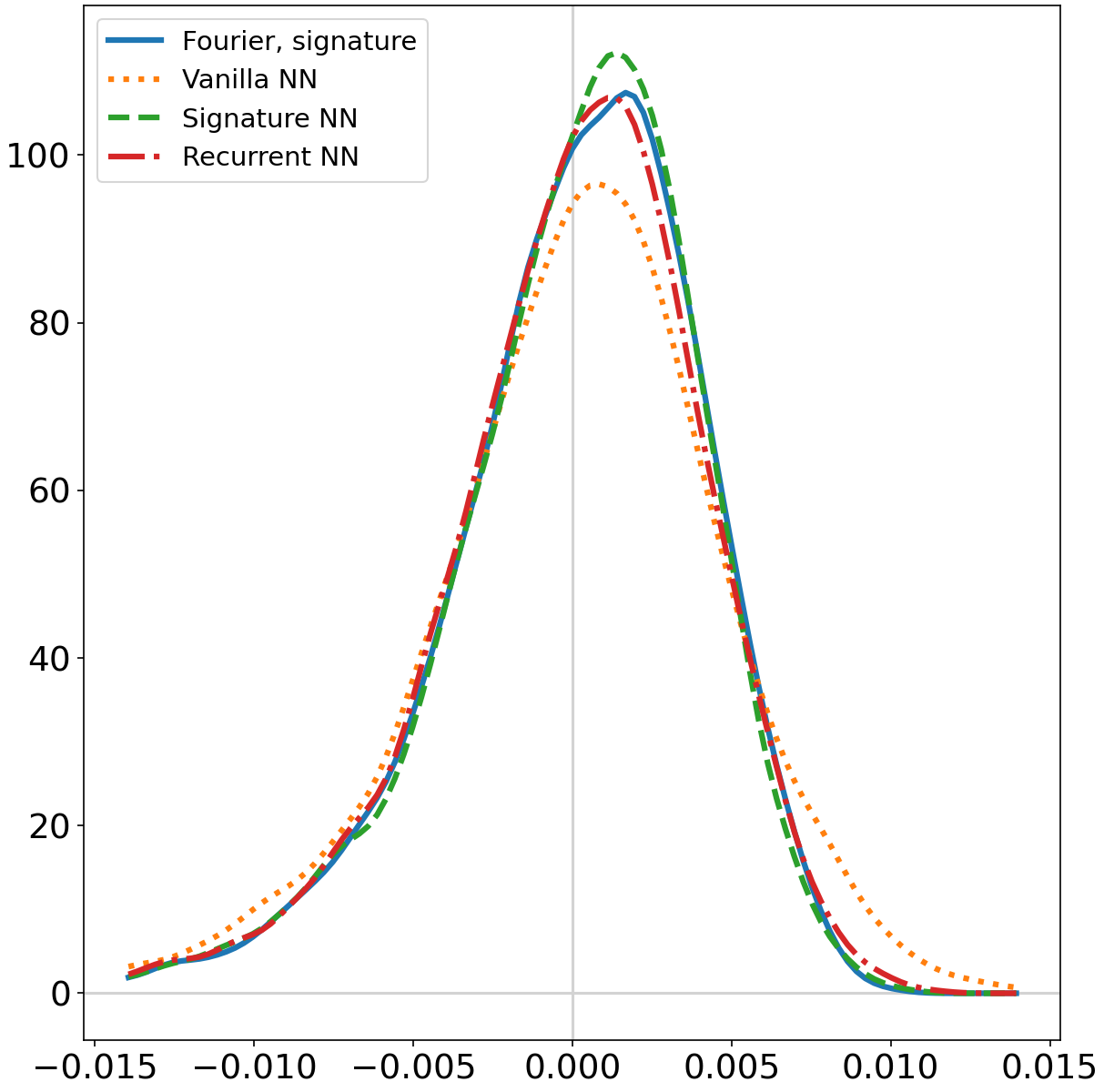}
        }
        \caption{\asiancall: \PDFcaption}
        \label{fig:asian_pdf}
    \end{figure}

    Figure~\ref{fig:asian_traj} showcases two sample paths (top) and the hedging ratio (bottom) of the different resolutions under Heston, Figure~\ref{fig:asian_traj}-(a), and under shifted fractional Bergomi, Figure~\ref{fig:asian_traj}-(b).

    \begin{table}[H]
        \centering
        \subfloat[\centering \heston]{
        \begin{tabular}{|l|c|c|c|c|}
            \hline
                               & Fourier               & Vanilla NN            & Signature NN          & Recurrent NN          \\
            \hline
            Mean squared P\&L  & $6.72 \cdot 10^{-5}$  & $1.14 \cdot 10^{-4}$  & $6.74 \cdot 10^{-5}$  & $9.77 \cdot 10^{-5}$  \\
            Mean P\&L          & $1.37 \cdot 10^{-5}$  & $-8.31 \cdot 10^{-4}$ & $-1.50 \cdot 10^{-4}$ & $5.86 \cdot 10^{-5}$  \\
            \hline
        \end{tabular}
        }
        \quad
        \subfloat[\centering \fbergomi]{
        \begin{tabular}{|l|c|c|c|c|}
            \hline
                               & Signature Fourier     & Vanilla NN            & Signature NN          & Recurrent NN          \\
            \hline
            Mean squared P\&L  & $1.71 \cdot 10^{-5}$  & $2.34 \cdot 10^{-5}$  & $1.73 \cdot 10^{-5}$  & $1.84 \cdot 10^{-5}$  \\
            Mean P\&L          & $-5.25 \cdot 10^{-5}$ & $-5.26 \cdot 10^{-5}$ & $-6.42 \cdot 10^{-5}$ & $2.78 \cdot 10^{-4}$  \\
            \hline
        \end{tabular}
        }
        \caption{\asiancall: \MSPcaption}
        \label{tab:asian_msp}
    \end{table}
    
    However, raw precision should be weighted by the speed at which it is achieved. In Table~\ref{tab:asian_time} we compare how long the different models took to achieve certain marks. As matter of comparison, the signature Fourier approximation took 1.9 seconds to compute the optimal hedging on the 20,000 OOS samples. This amounts for about 0.10 ms per sample path for hedging assuming you already have figured the signature volatility, i.e.~the representation $\bsigma_t$. We can first remark that the training step for both the VNN and SNN are similar even though the VNN has five times the number of trainable parameters. Moreover, the RNN is about forty times slower to take one training step than its feed-forward counterparts, although only ten (one and a half resp.) times larger than the VNN (SNN resp.). This is due to its recursive nature and not specifically to its number of trainable parameters at this scale, as showcased by the training steps for both SNN and VNN. Obviously, lowering the frequency at which we update the weights, i.e.~the number of time steps, would reduce the training time linearly. However one would need about 4 time steps to match the speed of the SNN and VNN, drastically reducing the hedging control of the RNN. \\

    Moreover, Table~\ref{tab:asian_time} also shows that not only does the SNN perform better than the VNN, but it also performs better \textit{faster}. Furthermore, something opposite can be said of the RNN, where its training steps take forty times longer but its total training time to reach the marks is more than forty times longer. This clearly shows how much the signature as input helps the neural network learn both faster and better.

    \begin{table}[H]
        \centering
        \subfloat[\centering \heston]{
        \begin{tabular}{|l|c|c|c|}
            \hline
                                          & Vanilla NN & Signature NN & Recurrent NN \\
            \hline
            Time per training step (avg.) & 1.03 ms    & 1.15 ms      & 54.7 ms      \\
            Time to $1.4 \cdot 10^{-4}$   & 3.08 sec   & 0.08 sec     & 66.2 sec     \\
            Time to $1.2 \cdot 10^{-4}$   & 5.48 sec   & 0.09 sec     & 119 sec      \\
            Time to $1 \cdot 10^{-4}$     & N/A        & 0.55 sec     & 250 sec      \\
            Time to $7 \cdot 10^{-5}$     & N/A        & 9.15 sec     & N/A          \\
            \hline
        \end{tabular}
        }
        \qquad
        \subfloat[\centering \fbergomi]{
        \begin{tabular}{|l|c|c|c|}
            \hline
                                          & Vanilla NN & Signature NN & Recurrent NN \\
            \hline
            Time per training step (avg.) & 1.28 ms    & 1.15 ms      & 49.6 ms      \\
            Time to $3.0 \cdot 10^{-4}$   & 8.20 sec   & 0.82 sec     & 211 sec      \\
            Time to $2.4 \cdot 10^{-4}$   & 12.4 sec   & 1.72 sec     & 336 sec      \\
            Time to $1.9 \cdot 10^{-4}$   & N/A        & 4.24 sec     & 434 sec      \\
            Time to $1.8 \cdot 10^{-4}$   & N/A        & 7.76 sec     & N/A          \\
            \hline
        \end{tabular}
        }
        \caption{\asiancall: \TIMEcaption}
        \label{tab:asian_time}
    \end{table}

    A more thorough examination of training time for the SNN in terms of its truncation order and the use or not of the log transform of the signature has been detailed in \ref{app:sig_order_sup2}. \\

    Finally, Figure~\ref{fig:asian_traj} showcases two sample paths (top) and the hedging ratio (bottom) of the different resolutions under Heston, Figure~\ref{fig:asian_traj}-(a), and under shifted fractional Bergomi, Figure~\ref{fig:asian_traj}-(b). The strategies produced by the SNN are the closest to the reference ones computed via Fourier methods.
    
    \begin{figure}[H]
        \centering
        \subfloat[\centering \heston]{
        \includegraphics[width=\oneplotwidth]{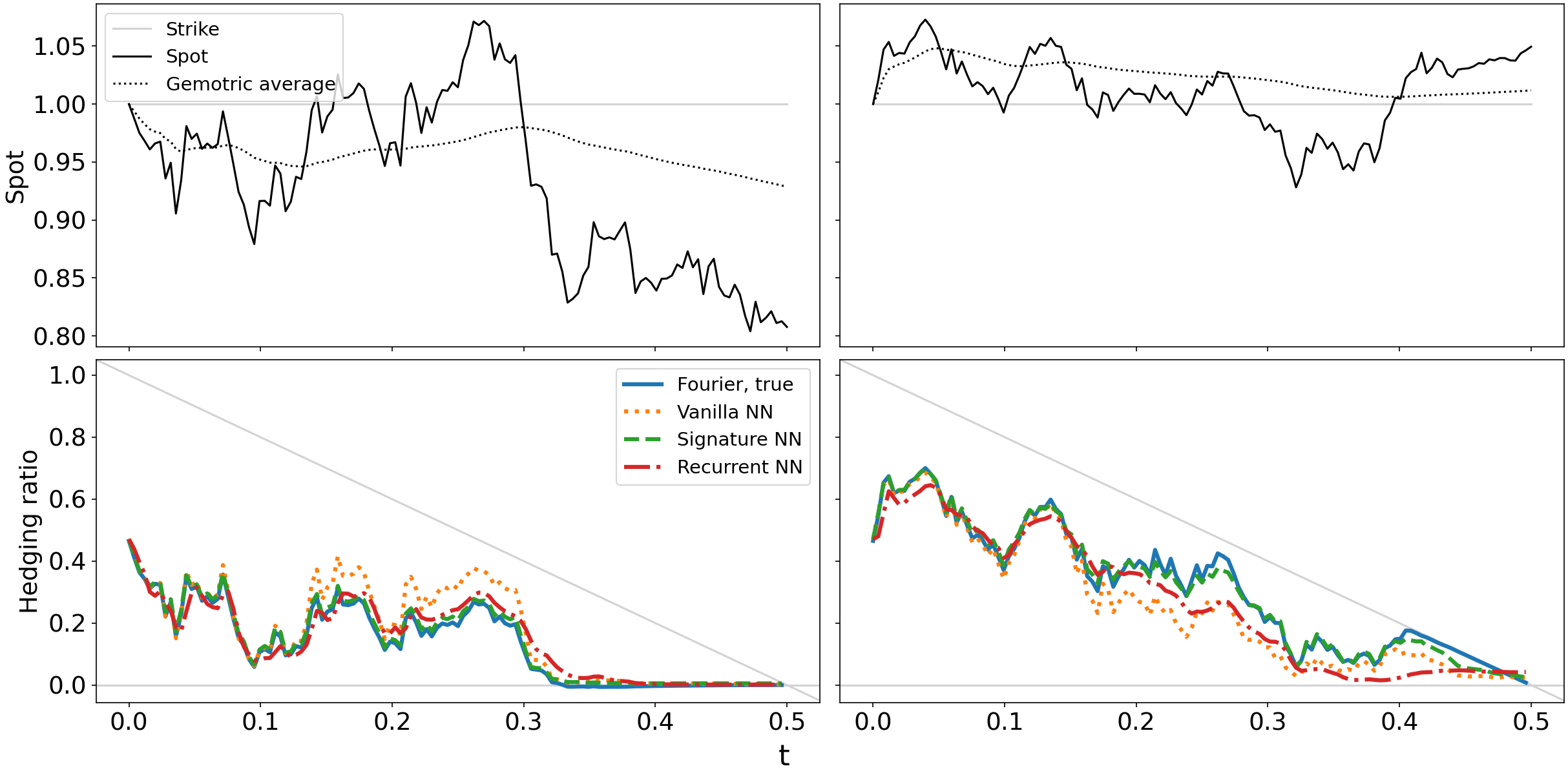}
        }
        \quad
        \subfloat[\centering \fbergomi]{
        \includegraphics[width=\oneplotwidth]{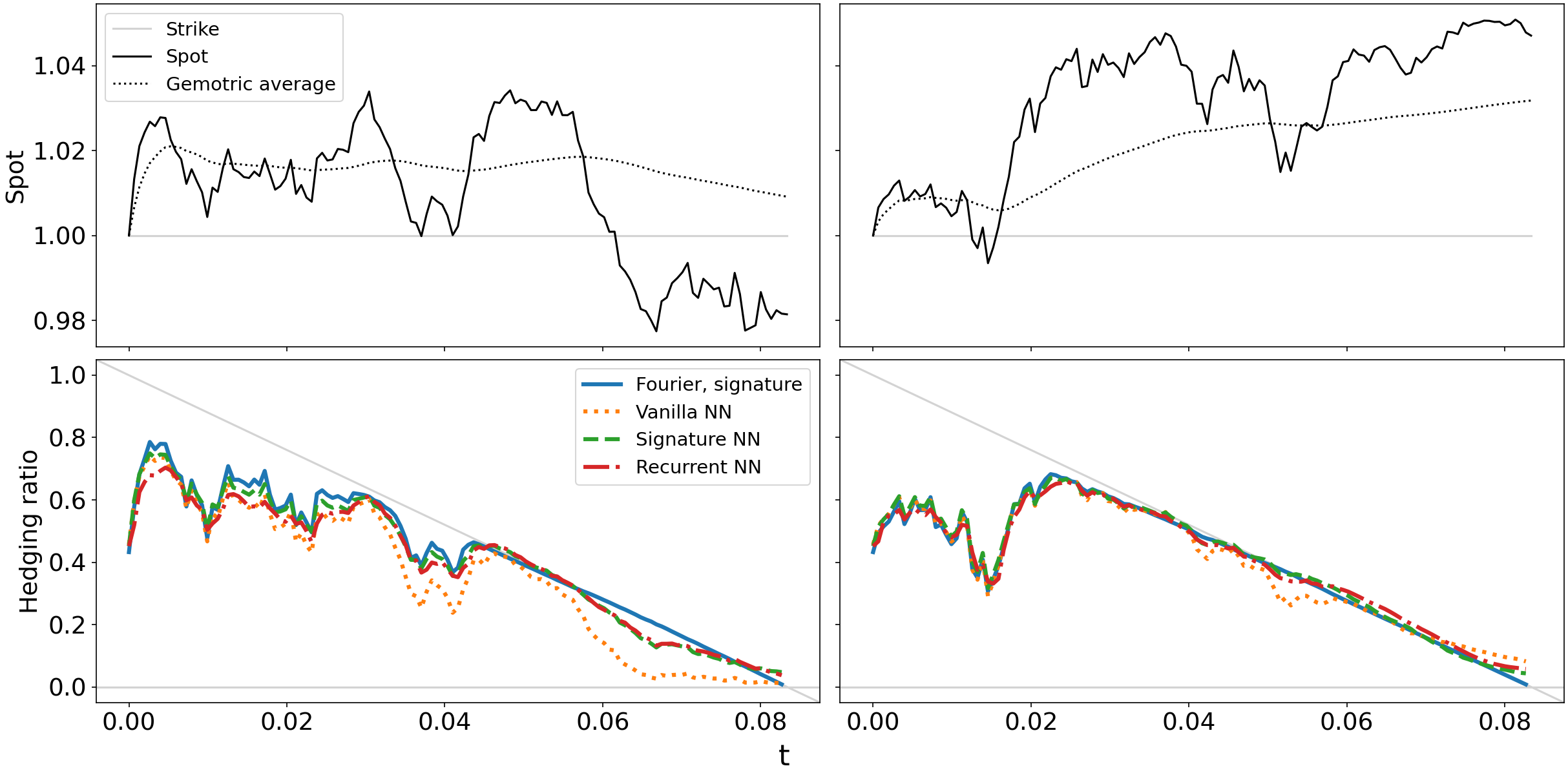}
        }
        \caption{\asiancall: \TRAJcaption}
        \label{fig:asian_traj}
    \end{figure}

\subsection{Look-back option} \label{subsec:deep_look}

    Shifting our attention to the lookback call option with floating strike \eqref{eq:lookback_call}, Figure~\ref{fig:lookback_pdf} and Table~\ref{tab:lookback_msp} make it is first clear that the Vanilla NN (VNN) under-performs other architectures, even the Black-Scholes naive resolution under both models. On the contrary, the Recurrent NN (RNN) now over-performs even the Signature NN (SNN), by 11\% and 5\% for the Heston and shifted fractional Bergomi models respectively.
    
    \begin{figure}[H]
        \centering
        \subfloat[\centering \heston]{
        \includegraphics[width=\twoplotswidth]{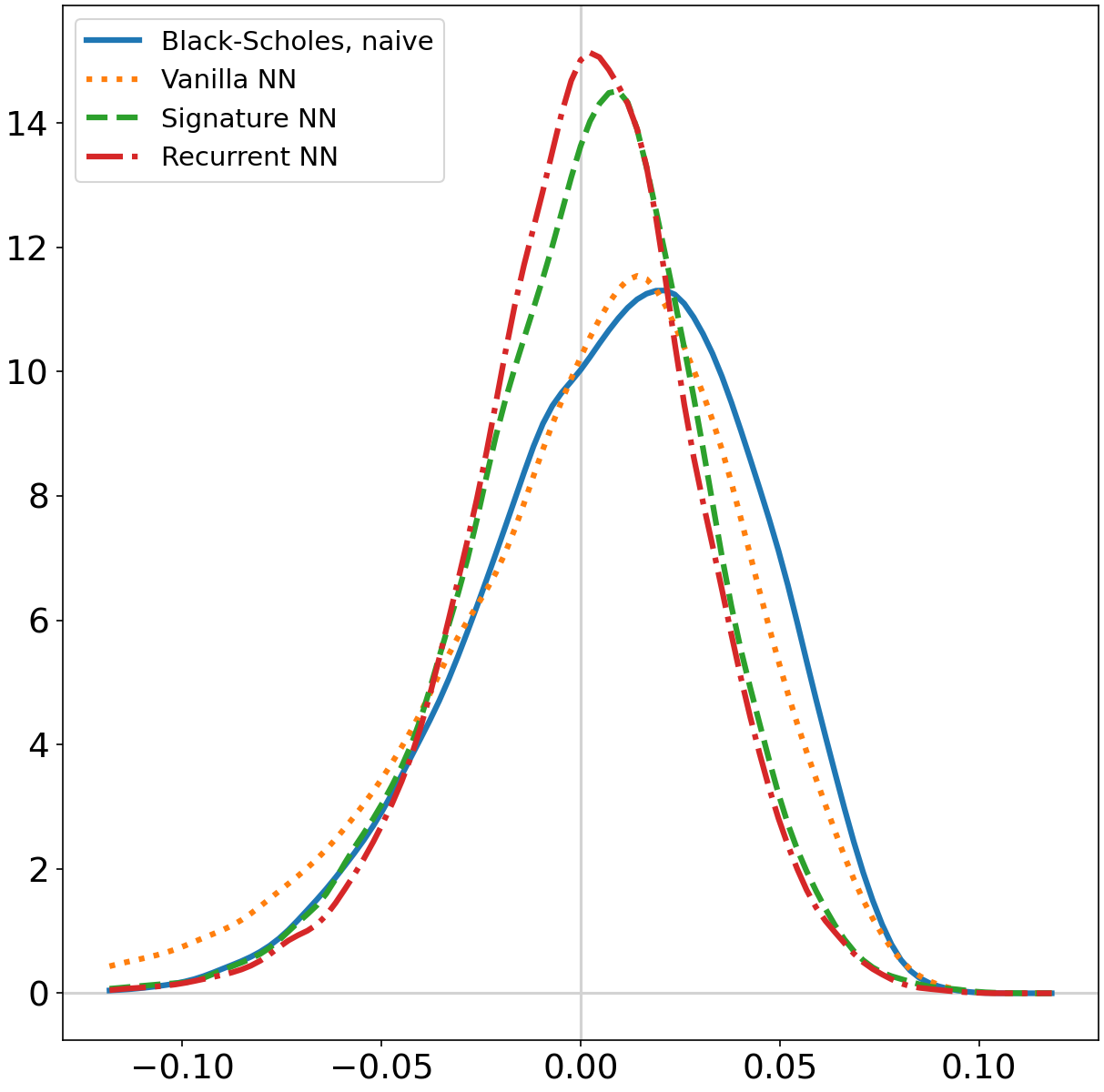}
        }
        \quad
        \subfloat[\centering \fbergomi]{
        \includegraphics[width=\twoplotswidth]{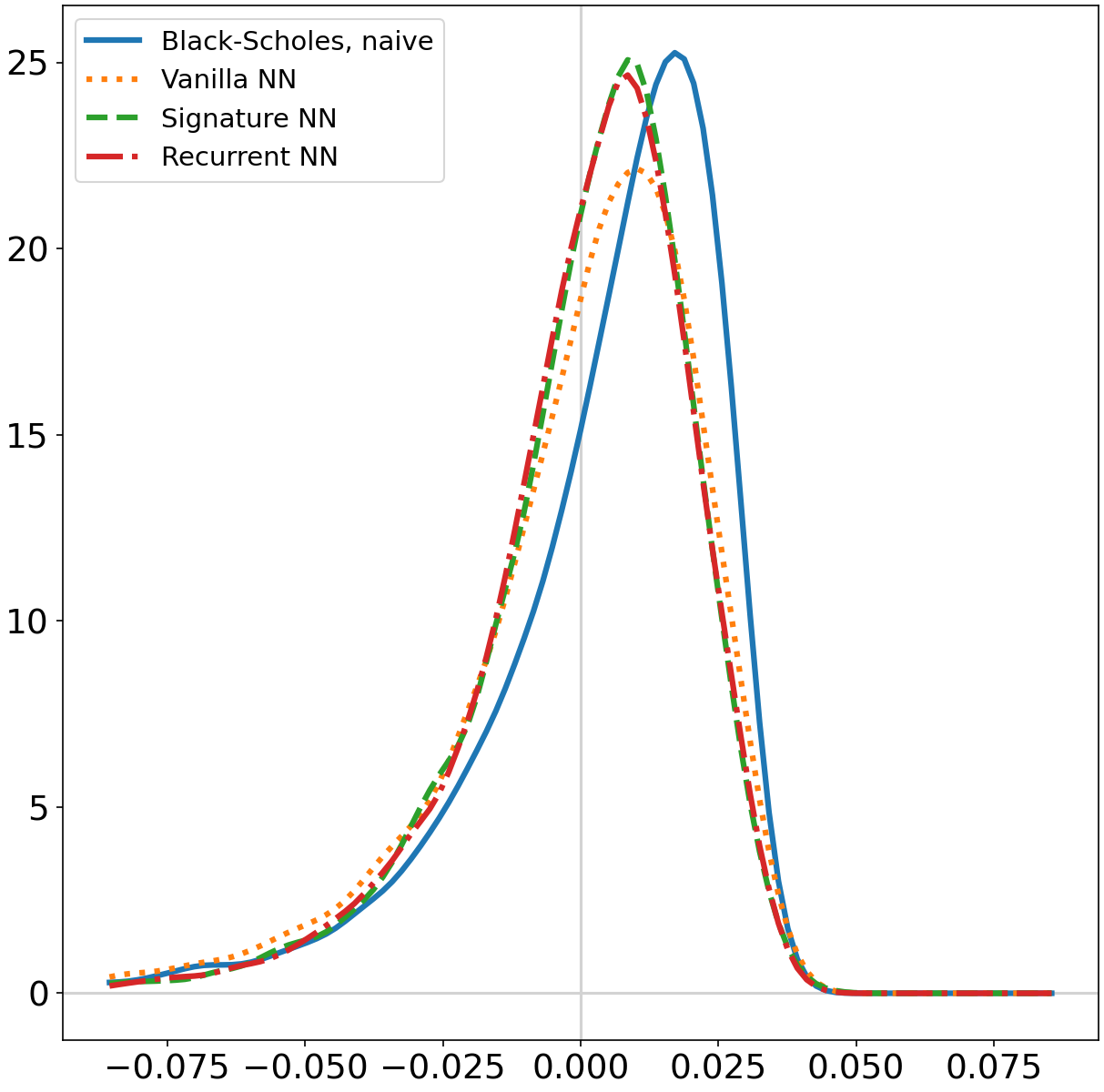}
        }
        \caption{\lookbackcall: Out-of-sample P\&L density of the delta hedging associated with the Black-Scholes naive solution (blue), Vanilla NN (orange), Signature NN (green) and Recurrent NN (red).}
        \label{fig:lookback_pdf}
    \end{figure}

    \begin{table}[H]
        \centering
        \subfloat[\centering \heston]{
        \begin{tabular}{|l|c|c|c|c|}
            \hline
                              & Naive Black-Scholes   & Vanilla NN            & Signature NN          & Recurrent NN          \\
            \hline
            Mean squared P\&L & $1.24 \cdot 10^{-3}$  & $1.61 \cdot 10^{-3}$  & $9.21 \cdot 10^{-4}$  & $8.18 \cdot 10^{-4}$  \\
            Mean P\&L         & $8.05 \cdot 10^{-3}$  & $1.85 \cdot 10^{-4}$  & $3.80 \cdot 10^{-4}$  & $4.30 \cdot 10^{-6}$  \\
            \hline
        \end{tabular}
        }
        \quad
        \subfloat[\centering \fbergomi]{
        \begin{tabular}{|l|c|c|c|c|}
            \hline
                              & Naive Black-Scholes   & Vanilla NN            & Signature NN          & Recurrent NN          \\
            \hline
            Mean squared P\&L & $5.32 \cdot 10^{-4}$  & $6.08 \cdot 10^{-4}$  & $4.63 \cdot 10^{-4}$  & $4.39 \cdot 10^{-4}$  \\
            Mean P\&L         & $4.00 \cdot 10^{-3}$  & $-8.74 \cdot 10^{-4}$ & $6.46 \cdot 10^{-5}$  & $1.42 \cdot 10^{-4}$  \\
            \hline
        \end{tabular}
        }
        \caption{\lookbackcall: Out-of-sample mean and mean squared P\&L of the delta hedging associated with the Black-Scholes naive solution (blue), Vanilla NN (orange), Signature NN (green) and Recurrent NN (red).}
        \label{tab:lookback_msp}
    \end{table}

    Again, a more thorough examination of training times and performances has been detailed in Appendix \ref{app:sig_order_sup2} for the interested reader. \\
    
    Finally, we also included Figure~\ref{fig:lookback_traj} below to help visualize beyond the P\&L densities. We can first remark that the VNN is very smooth compared to the other methods. On the contrary, Black-Scholes resolution is much more volatile and seems to consistently overestimate the hedging ratio. Finally, the SNN and RNN are the closest, at certain times the SNN seems to react stronger than the RNN leading to higher highs and lower lows.
    
    \begin{figure}[H]
        \centering
        \subfloat[\centering \heston]{
        \includegraphics[width=\oneplotwidth]{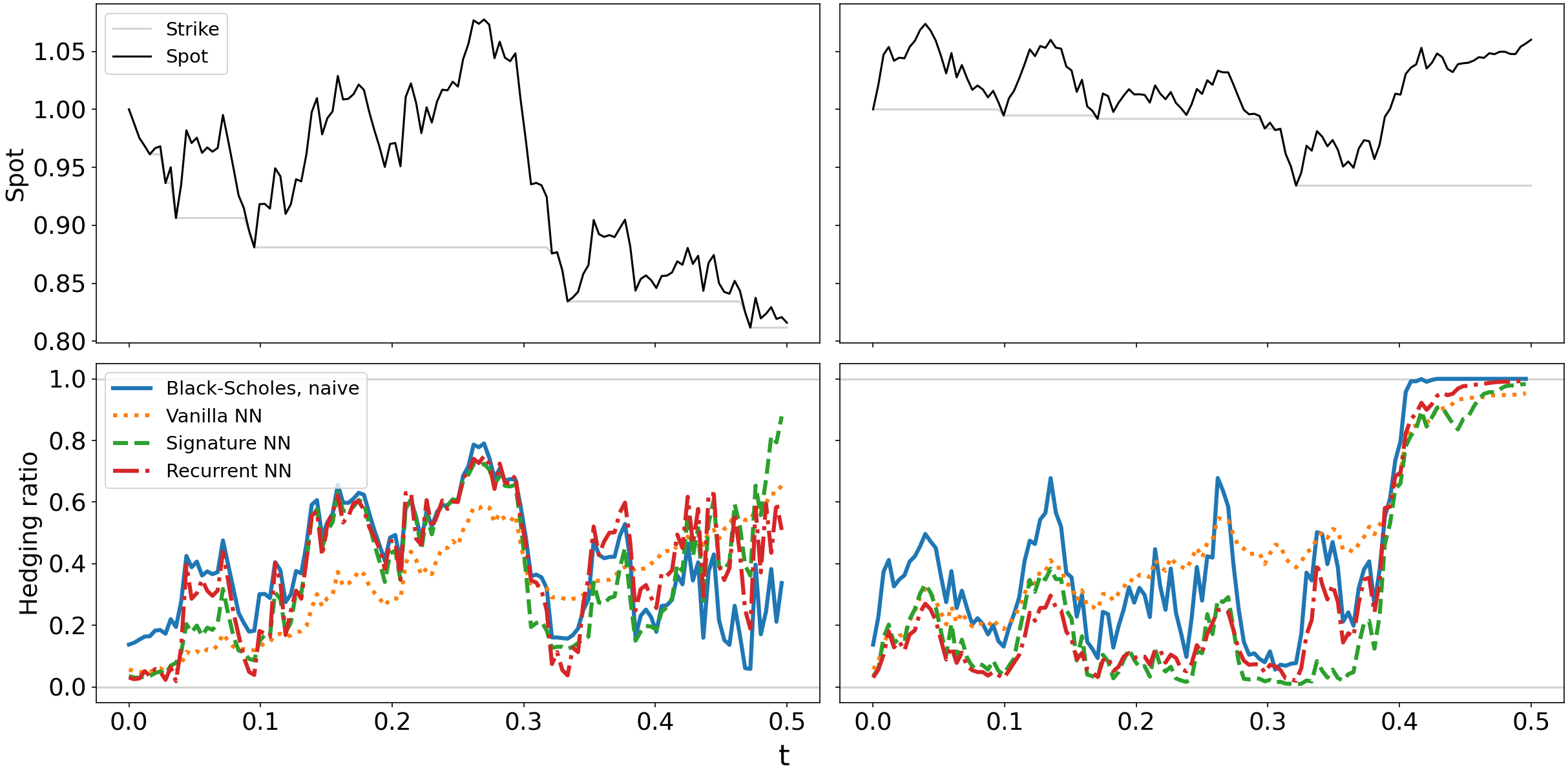}
        }
        \quad
        \subfloat[\centering \fbergomisix]{
        \includegraphics[width=\oneplotwidth]{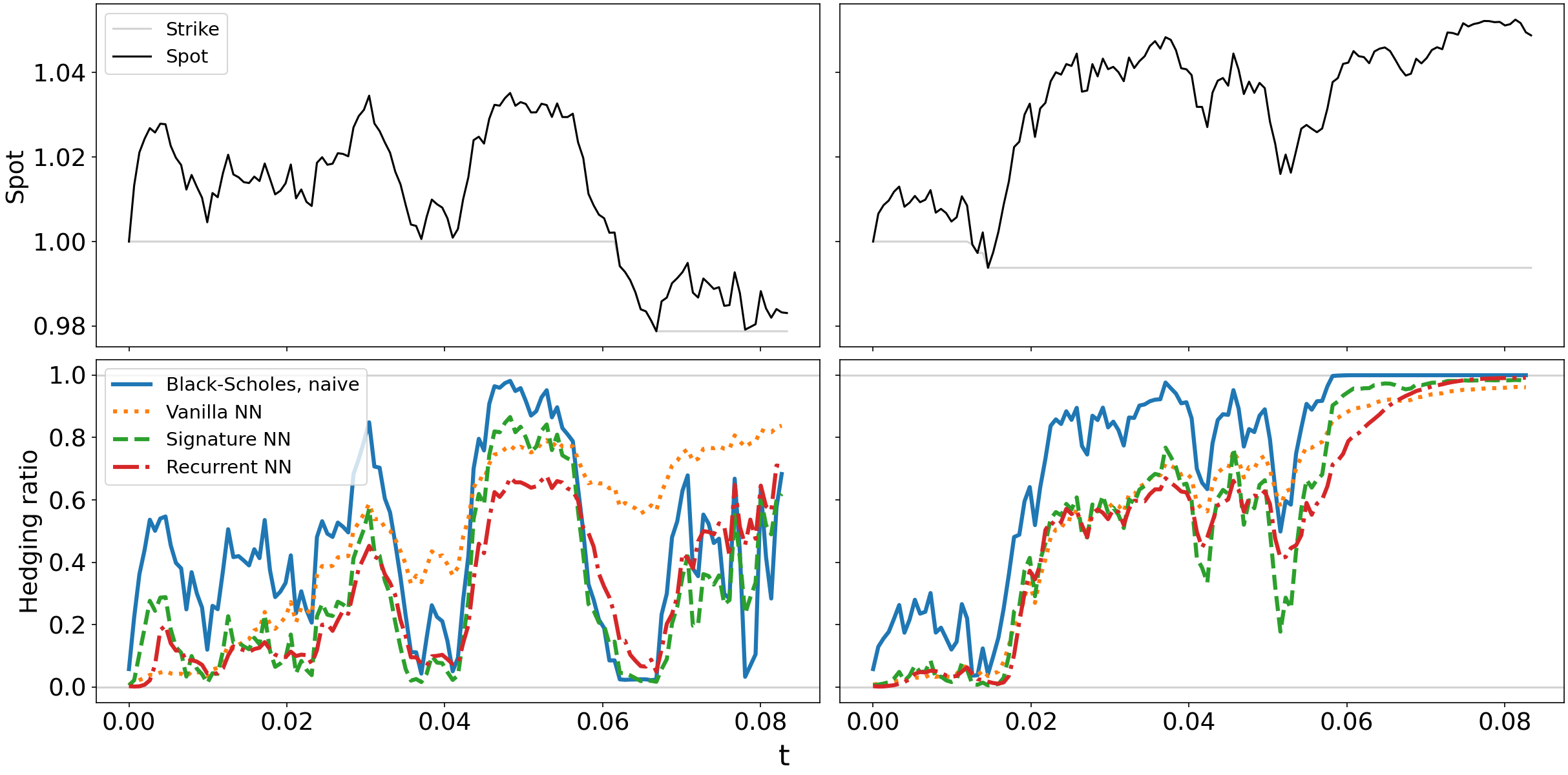}
        }
        \caption{\lookbackcall: Two out-of-sample spot trajectories (top) and the delta hedging (bottom) associated with the Black-Scholes naive solution (blue), Vanilla NN (orange), Signature NN (green) and Recurrent NN (red).}
        \label{fig:lookback_traj}
    \end{figure}
    
    We did not include the timings for the look-back call as they are most similar to Table~\ref{tab:asian_time} on the Asian call option and its conclusions hold for the look-back option with the SNN being much faster to train than the RNN. \\

    In conclusion, the payoff considered here is highly non-linear and path-dependent. While the signature captures significantly more information than the VNN, resulting in a notable improvement, it still falls slightly short of the RNN, which appears better able to extract the relevant pathwise information. However, when accounting for computational cost, one could argue that the SNN performs on par with, or even favorably compared to, the RNN.

\section{Signatures as linear regressors} \label{sec:sig_reg}
    
    The goal of this section is to compare two hedging methods based on linear functions of the signatures of observables.
    {Both approaches use an estimate of the expected signature. The spot price $S_t$ for the linear method in Section~\ref{subsec:arribas}, and the spot volatility $\Sigma_t$ for the Fourier method in Section~\ref{subsec:sighedging}.}
    We assume throughout this section that we are given such data (either exactly or through Monte Carlo simulations). Our goal is to extract, from these observations, an approximate pricing function and hedging strategy. \\
    
    More precisely:
    \begin{itemize}
        \item One approach directly learns the initial wealth $X_0$ and hedging strategy $\alpha$ as a linear functional of the  path signature of the underlying price as described in Section~\ref{subsec:arribas}.
        
        \item The other approach consists in calibrating a {signature volatility model}  as in \eqref{eq:sigvol} so that it reproduces the observed expected signature. More precisely, one learns the coefficients $\bsigma$ such that the model volatility $\Sigma_t = \langle \bsigma, \sig[t][W] \rangle$ (with $W$ a Brownian motion) yields a model-consistent $\sigE[t][Y]$ where $\widehat{Y}_t = (t, \Sigma_t)$. Once the model is calibrated, a numerical method, e.g., Fourier inversion, can be used to compute the initial wealth $X_0$ and hedging strategy $\alpha$. See Section~\ref{subsec:sighedging}.
    \end{itemize}
    
    The purpose of this section is to investigate which approach performs better under various data and model assumptions. Conceptually, we aim to answer the following question:
    
    \begin{quote}
        \itshape
        Is it more effective, given observations of $S_t$ and $\Sigma_t$, to directly {linearize the hedging strategy} (using the signature of the observed spot price), or to linearize the volatility dynamics (using the signature of a Brownian motion), and then use model-based methods to derive prices and strategies?
    \end{quote}
    
    \paragraph{Methods.} We benchmark the following approaches, comparing not only each method with the other, but also against their respective \emph{best-case scenarios} (denoted BCS), i.e., situations where full information and computational resources are available.
    
    \begin{itemize}
        \item \label{item:arribas_methods}
        \textbf{Linear signature methods}, where both initial wealth and hedging strategy are expressed as a linear functional of $\sig[t][S]$:
        \begin{enumerate}[label=(\alph*)]
            \item \textbf{Linear Regression-Expectation (Linear REG):} Learn the expected payoff by regressing on the signature of $S$ using samples of $\E[\sig[t][S]]$, via problem~\eqref{eq:arribas_optimization}. See Section~\ref{subsec:arribas}.
            
            \item \textbf{Linear Reinforcement-Learning (Linear BCS):} Learn directly the hedging strategy by reinforcement optimization over trading P\&L, using access to full model simulations. This gives the best linear signature strategy achievable with enough data and compute, bypassing the linearization of payoff and computation of the expected signature altogether.
        \end{enumerate}
        
        \item
        \textbf{Signature volatility model methods}, where the volatility process is $\Sigma_t = \bracketsig{\bsigma}$ and option prices are computed in the model:
        \begin{enumerate}[label=(\alph*)]
            \item \textbf{Signature Fourier Regression (Fourier REG):}
            Retrieve a truncated approximation of the linear volatility $\hat{\bsigma}$ from data and then use Section~\ref{subsec:sighedging} to recover the optimal strategy via Fourier inversion (when available).
            In order to retrieve $\hat{\bsigma}$, it suffices to compute the expected signature of the stock volatility process under the signature volatility model. This proceeds in two steps:
            \begin{enumerate}[label=(\roman*)]
                \item In a first step, one computes the (truncated) signature elements of the time-augmented process $\Sigma$, which are explicit functions of the signature of the driving Brownian motion. This is detailed for instance in \cite[Theorem 3.16]{christa_theo_calib}. More precisely, for a volatility given by $\Sigma_t = \bracketsig{\bsigma}$ and $\widehat{Y}_t = (t, \Sigma_t)$, its signature $\sig[\cdot][Y]$ satisfies
                $$ \sig[\cdot][Y]^{\word{v}} = \bracketsig[\cdot]{\bell(\bsigma, \word{v})} \quad \text{for all words } \word{v} \in V, $$
                where $\bell: \TA \times V \to \TA$ is a deterministic function mapping the model parameters and a word to a set of signature coefficients. This allows one to express each coordinate of $\sig[\cdot][Y]$ as a linear functional of the Brownian signature.
        
                \item In a second step, calibration reduces to a simple reverse optimization problem. Using \cite{fawcett} formula, which gives closed-form expressions for the expected signature of a Brownian motion, one can match the model-implied expected signature of $\widehat{Y}_t = (t, \Sigma_t)$ to the observed values $\E[\sig[t][Y]]$, and solve for the optimal coefficients $\bsigma$. This regression-like procedure is described in Appendix~\ref{app:algorithms}.
            \end{enumerate}
            
            \item
            \textbf{Signature Fourier Representation (Fourier BCS):} Use the true model parameters (oracle setting), truncated at higher order, as a benchmark for the best achievable performance in the model class.
        \end{enumerate}
    \end{itemize}
    
    We emphasize that these methods are not always comparable: Fourier-based approaches rely on specific assumptions about the price dynamics (e.g., geometric Brownian motion with stochastic volatility) and payoff regularity (e.g., Fourier-invertible). By contrast, the linear signature regressions are agnostic to the underlying dynamics and can be applied in more general contexts. In this section, we restrict our attention to settings where both methods are applicable to allow for a meaningful comparison.

  \begin{sqremark} \label{rmk:sigvolreg}
     Other alternatives exist although they will not be studied in this paper. The first notable mention, discussed in \cite[Section 5.3]{sigvolfourier}, calibrates $\hat{\bsigma}$ on option prices across strikes and maturities. Note that a similar approach could be to calibrate $\hat{\bsigma}$ to minimize the mean square hedging P\&L over some observations. The second notable mention, discussed in \cite[Example 4.1]{christa_theo_calib}, uses higher frequency spot price data to estimate the spot covariation and then recover the underlying Brownian increments, allowing a simple regression to recover the coefficients.
    \end{sqremark}

    \paragraph{Payoff Types.} We compare both methods on three payoff functions with gradual complexity:
    \begin{itemize}
        \item [\ref{subsec:arribas-poly}.] \textbf{Polynomial payoff:} a payoff that is linear in the truncated signature
        \begin{align} \label{eq:poly_call_arribas}
            \xi = \sum_{k=0}^4 c_k S_T^k,
            \quad \text{with }
            c = \left\{ -\tfrac{1}{16}, \tfrac{1}{2}, -\tfrac{7}{8}, \tfrac{1}{2}, -\tfrac{1}{16} \right\},
        \end{align}
        where $S$ is a geometric Brownian motion in order to have an explicit solution to compare the methods to.
        
        \item [\ref{subsec:arribas-euro}.] \textbf{European Call Option:} a payoff to showcase the first limitations of signature approximations of non-linear payoffs:
        \begin{align} \label{eq:euro_call_arribas}
            \left( S_T - K \right)^+,
        \end{align}
        where $S$ follows the Heston model with its variance $V_t=\Sigma_t^2$ having CIR dynamics \eqref{eq:CIR_arribas}. This allows us to compare the methods to the Fourier solution.
        
        \item [\ref{subsec:arribas-asian}.] \textbf{Asian Call Option:} a path-dependent payoff to further test signature methods
        \begin{align} \label{eq:asian_call_arribas}
            \left( \exp \left[ \frac{1}{T} \int_0^T \log S_t \d t \right] - K \right)^+,
        \end{align}
        where $S$ will be geometric with \textit{delayed-equation} (DE) volatility \eqref{eq:DE_arribas}. In this model we can compute the true truncated signature representationfor the volatility and hence compare Fourier REG with its \textit{best-case-scenario} counterpart.
    \end{itemize}

    \begin{sqremark}
        Note that in order to alleviate the influence of the starting point for both optimizations in Linear REG and Fourier REG, we compute the gradient descent over 10 different initial guesses and at the end of training select the solution with lowest \textit{training} loss. Furthermore, every computation is linear and can be broadcasted, so in the end it doesn't impact at all the duration of training. However, it did improve significantly the found local minima. Initial guesses were drawn at random around 0 for Linear REG and around a Ornstein-Uhlenbeck representation with maximum likelihood parameters \citep{maxlikelihood_oucir} for Fourier REG.
    \end{sqremark}

\subsection{Polynomial Payoff under Black-Scholes} \label{subsec:arribas-poly}
    
    This first numerical experiment is done in a very simplistic case to test the framework and, eventually, remark some artifact that was not specifically expected. We simulate geometric Brownian motions $S$ for the spot price with constant volatility $\sigma = 0.25$ and no drift, and compare Linear REG to its \textit{best-case-scenario} counterpart Linear BCS, against the polynomial payoff \eqref{eq:poly_call_arribas} with maturity 6 months $T=1/2$. \\

    This experiment was initially only intended for checking the implementations, as both the linear payoff \eqref{eq:arribas_payoff} and the linear strategy \eqref{eq:linearstrategy} can be found explicitly. However, as can be seen in Figure~\ref{fig:polyarribas-blackscholes}, the Linear BCS comes very close to the true solution, about 1.7\% higher mean square P\&L, while the Linear REG is 23\% higher. The linear (ridge) regression in \eqref{eq:arribas_payoff} did get quite close to the true weights, as can be expected, but the clear bias in the P\&L shows that the problem mostly lies in the Monte Carlo approximation of the expected signature and the consequent optimization~\eqref{eq:arribas_optimization}. To make sure, as the true linear payoff is explicitly
    $$ \bxi = \sum_{m=0}^M \sum_{k=0}^4 c_k \binom{k}{m} S_0^{k-m} \word{2} \conpow{m}, $$
    
    where $M \geq 4$ is the truncation order of $\bxi$, we have also optimized $\hat{\bell}$ with the true linear payoff $\bxi$ and the results are sensibly the same. Moreover, increasing the number of samples for computing the expected signature did not give large absolute differences. This leads us to believe that most of the difference between Linear REG and Linear BCS is due to the (non-linear) optimization having trouble finding the global minima. There are ways that could improve the optimization step in Linear REG, however it would not get any better than Linear BCS. Also, improving the truncation order will give better results at the cost of non trivial RAM and compute, as showcased in Appendix~\ref{app:arribas-timings}.
    
    \begin{figure}[H]
        \centering
        \includegraphics[width=\twoplotswidth]{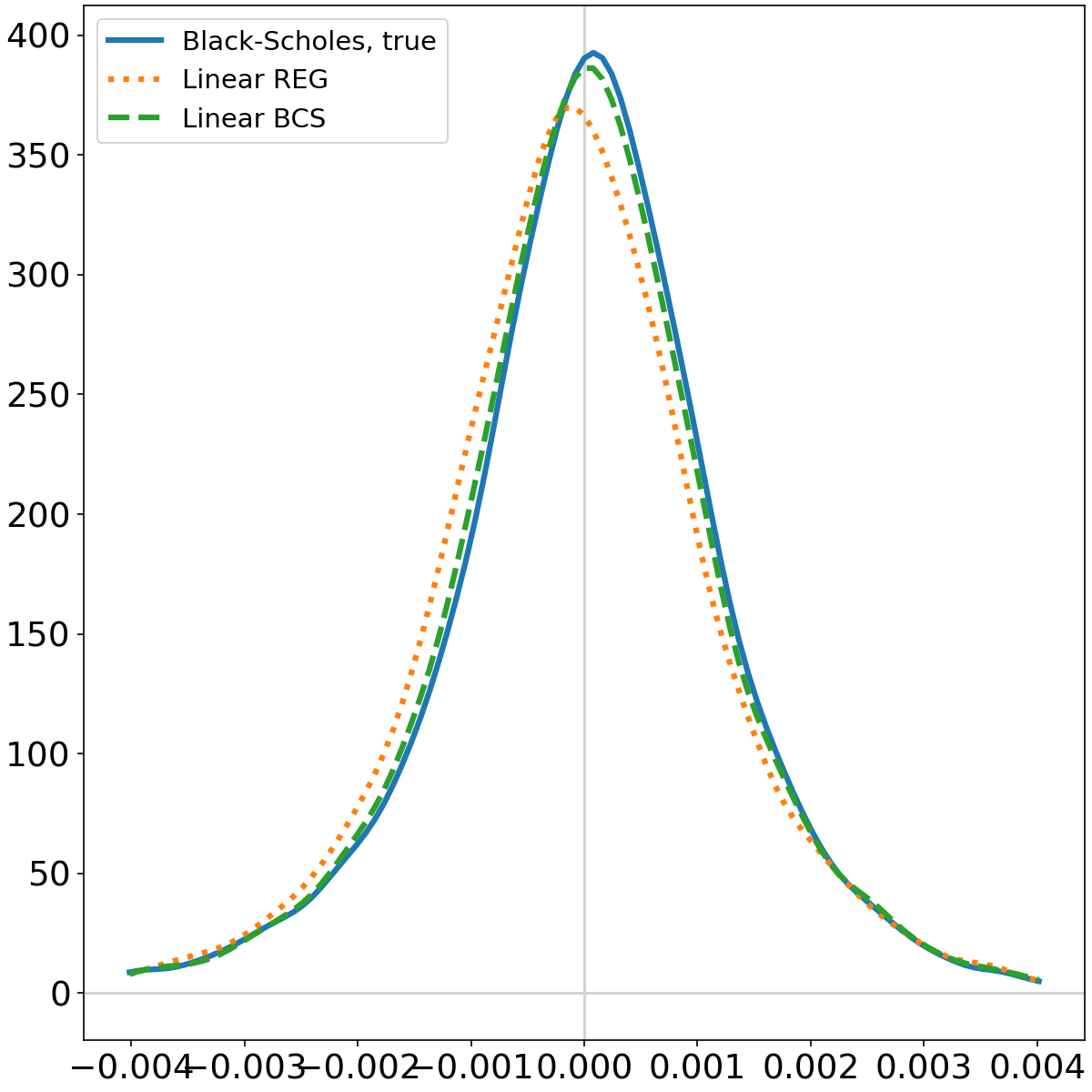}
        \caption{Polynomial payoff: Out-of-sample P\&L density of the delta hedging associated with the Black-Scholes solution (blue), Linear REG approximation (blue) and Linear BCS approximation (orange).}
        \label{fig:polyarribas-blackscholes}
    \end{figure}

    \begin{table}[H]
        \centering
        \begin{tabular}{|l|c|c|c|}
            \hline
                              & \multirow{2}{*}{Black-Scholes}      & \multicolumn{2}{c|}{Linear}     \\
            \cline{3-4}
                              &                       & REG                   & BCS                   \\
            \hline
            Mean squared P\&L & $1.78 \cdot 10^{-6}$  & $2.19 \cdot 10^{-6}$  & $1.81 \cdot 10^{-6}$  \\
            Mean P\&L         & $9.73 \cdot 10^{-6}$ & $-1.27 \cdot 10^{-4}$ & $-1.63 \cdot 10^{-5}$ \\
            \hline
        \end{tabular}
        \caption{Polynomial payoff: Out-of-sample mean and mean squared P\&L of the delta hedging associated with the Black-Scholes solution (blue) and Linear REG approximation (orange).}
        \label{tab:polyarribas-blackschole}
    \end{table}

\subsection{European call under Heston} \label{subsec:arribas-euro}

    This second experiment is done in a more realistic setting under Heston model with a European at-the-money call option with maturity 6 months. Recall the CIR dynamics of the volatility
    \begin{align} \label{eq:CIR_arribas}
        \d V_t = \kappa (\theta - V_t) \, \d t + \eta \sqrt{V_t} \, \d W_t,
    \end{align}
    where in this experiment $V_0 = \theta = 0.0625, \kappa = 2, \eta = 0.5$.
    We compare mainly two methods \textit{Linear REG} and \textit{Fourier REG}, discussed in Sections~\ref{subsec:arribas} and \ref{subsec:sighedging} respectively, with their respective `best-case-scenario' counterparts \textit{Linear BCS} and \textit{Fourier BCS}. The true Fourier pricing with Heston's characteristic function was also displayed to showcase the validity of the signature volatility method. In this framework the signature volatility is computed as the truncated algebraic solution to the square root CIR process, i.e.
    \begin{align*}
        \Sigma_t := \bracketsig{\bsigma}, \quad \bsigma = x \sum_{m=0}^\infty \sum_{\word{v} \in V_m} f_1(\word{v}) \word{v}
    \end{align*}
    where $f$ is defined with recurrence
    \begin{align*}
        f_n(\emptyword) &
        = 1
        \\
        f_n(\word{v1}) &
        = \frac{1}{2} \left( \kappa \theta - \frac{\eta^2}{4} \right) \frac{n}{x^2} f_{n-2}(\word{v}) - \frac{\kappa}{2} n f_n(\word{v})
        \\
        f_n(\word{v2}) &
        = \frac{\eta}{2} \frac{n}{x} f_{n-1}(\word{v}).
    \end{align*}
    
    In this setting, as could be expected, the truncated linear signature trading strategy lags behind the signature volatility methods as it would require an unrealistically high truncation order to approximate the call option well enough. Moreover, Fourier REG allows for a relatively good approximation of the optimal hedging ratio in terms of mean squared P\&L, but is five times more biased than Fourier BCS.
    
    \begin{figure}[H]
        \centering
        \includegraphics[width=\twoplotswidth]{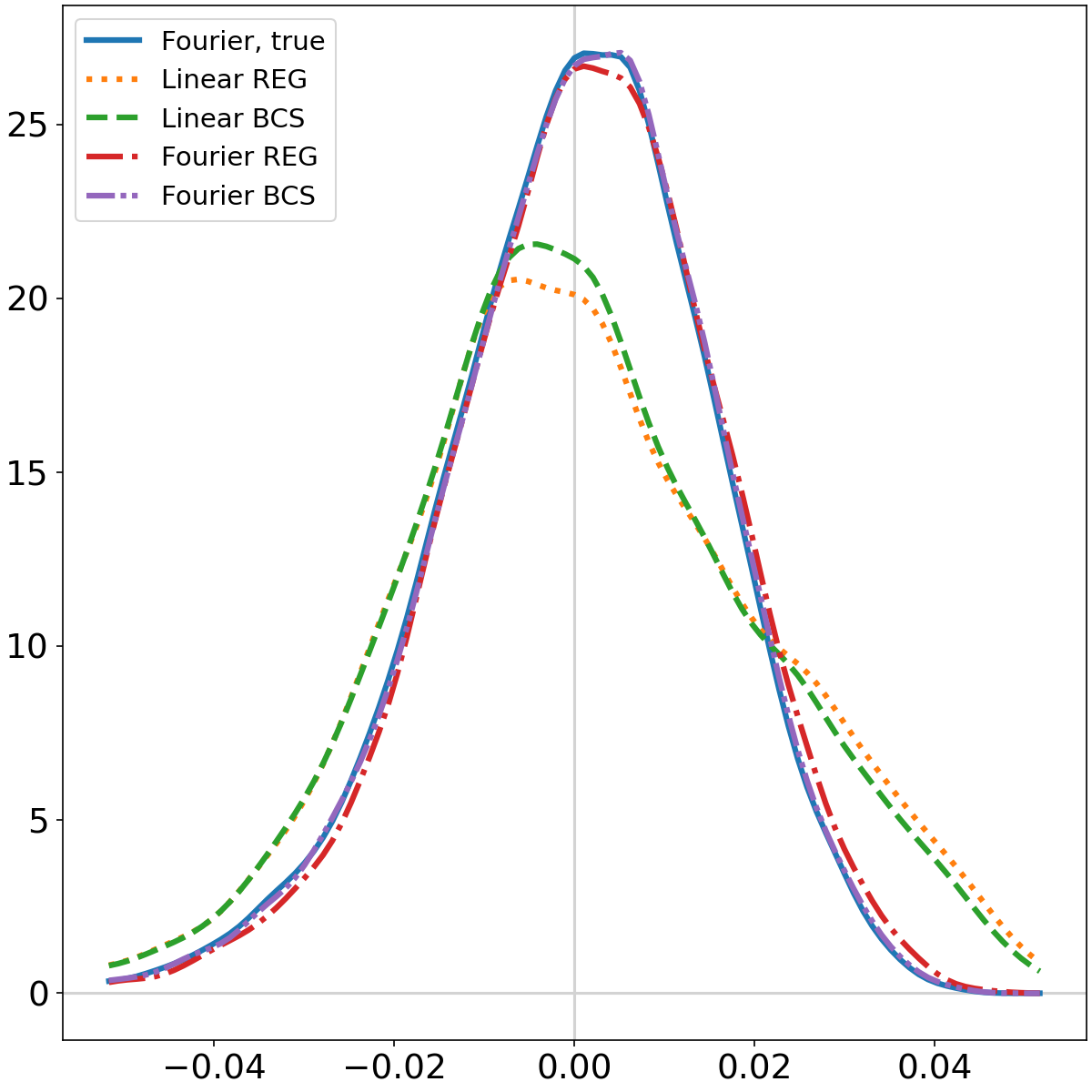}
        \caption{\eurocall: Out-of-sample P\&L density of the delta hedging associated with the Fourier solution (blue), Linear REG approximation (blue), Linear BCS approximation (orange), Fourier REG approximation (red) and Fourier BCS approximation (purple).}
        \label{fig:euroarribas-heston}
    \end{figure}

    \begin{table}[H]
        \centering
        \begin{tabular}{|l|c|c|c|c|c|}
            \hline
                              & \multirow{2}{*}{Fourier Heston}         & \multicolumn{2}{c|}{Linear}          & \multicolumn{2}{c|}{Fourier}         \\
            \cline{3-6}
                              &                       & REG                   & BCS                   & REG                   & BCS                   \\
            \hline
            Mean squared P\&L & $2.33 \cdot 10^{-4}$  & $4.66 \cdot 10^{-4}$  & $4.09 \cdot 10^{-4}$  & $2.36 \cdot 10^{-4}$  & $2.33 \cdot 10^{-4}$  \\
            Mean P\&L         & $-5.84 \cdot 10^{-5}$ & $6.78 \cdot 10^{-4}$  & $1.04 \cdot 10^{-4}$  & $-1.05 \cdot 10^{-3}$ & $2.14 \cdot 10^{-4}$  \\
            \hline
        \end{tabular}
        \caption{\eurocall: Out-of-sample mean and mean squared P\&L of the delta hedging associated with the Black-Scholes solution (blue), Linear REG approximation (orange), Linear BCS approximation (green), Fourier REG approximation (red) and Fourier BCS approximation (purple).}
        \label{tab:euroarribas-heston}
    \end{table}

    The poor performances of Linear REG and even Linear BCS highlights the non-linearity of the hedging ratio in terms of the signature of the spot process. On the contrary, the very close mean squared P\&L of Fourier REG and Fourier Heston (and Fourier BCS) suggests that under this type of stochastic volatility model \eqref{eq:dynaprice} one can readily calibrate and price options simply using observations, or estimation, of the spot volatility process.

\subsection{Asian call under delayed equation volatility} \label{subsec:arribas-asian}

    In this final experiment, we compare both methods under a non-Markovian volatility model with a path-dependent at-the-money Asian call option. For our non-Markovian model we have chosen the delayed-equation (DE) stochastic volatility model for its explicit signature linear representation. The DE process has the following dynamics
    \begin{align} \label{eq:DE_arribas}
        \d \Sigma_t &
        = \kappa ( \theta - \Sigma_t) \d t + \left( \eta + \alpha \Sigma_t + \varsigma \int_0^t e^{-\lambda (t - s)} \Sigma_s \d s \right) \d W_t,
    \end{align}
    where $\Sigma_0=\theta=0.25, \kappa=2, \eta=-0.12, \alpha=1.2, \varsigma=1, \lambda=-3$. The DE volatility is obviously mean-reverting and has its exponential moving average impacts its volatility. In this framework the signature volatility is computed as the extended mGBM
    \begin{align}
        \Sigma_t = \bracketsig{\bsigma}, \quad \bsigma = \left( \Sigma_0 \emptyword + \left( \kappa \theta - \frac{\alpha \eta}{2} \right) \word{1} + \eta \word{2} \right) \left(\emptyword + \left( \kappa + \frac{\alpha^2}{2} \right) \word{1} - \alpha \word{2} - \bm{q} \right)^{-1},
    \end{align}
    where $(\emptyword - \bell)^{-1} := \sum_{n=0}^\infty \bell \conpow{n}$ and $\bm{q}$ represents the exponential moving average dynamics
    \begin{align}
        \bm{q} &
        := \varsigma \word{1} \shuexp{\lambda \word{1}}  \left(- \frac{\alpha}{2} \word{1} + \word{2} \right).
    \end{align}

    In this framework, the gap between the \textit{model-free} linear and the \textit{volatility-free} Fourier methods deepens. Indeed, Linear REG and even Linear BCS have a mean squared P\&L about four times higher, highlighting the non-linearity of the hedging solution.
    
    \begin{figure}[H]
        \centering
        \includegraphics[width=\twoplotswidth]{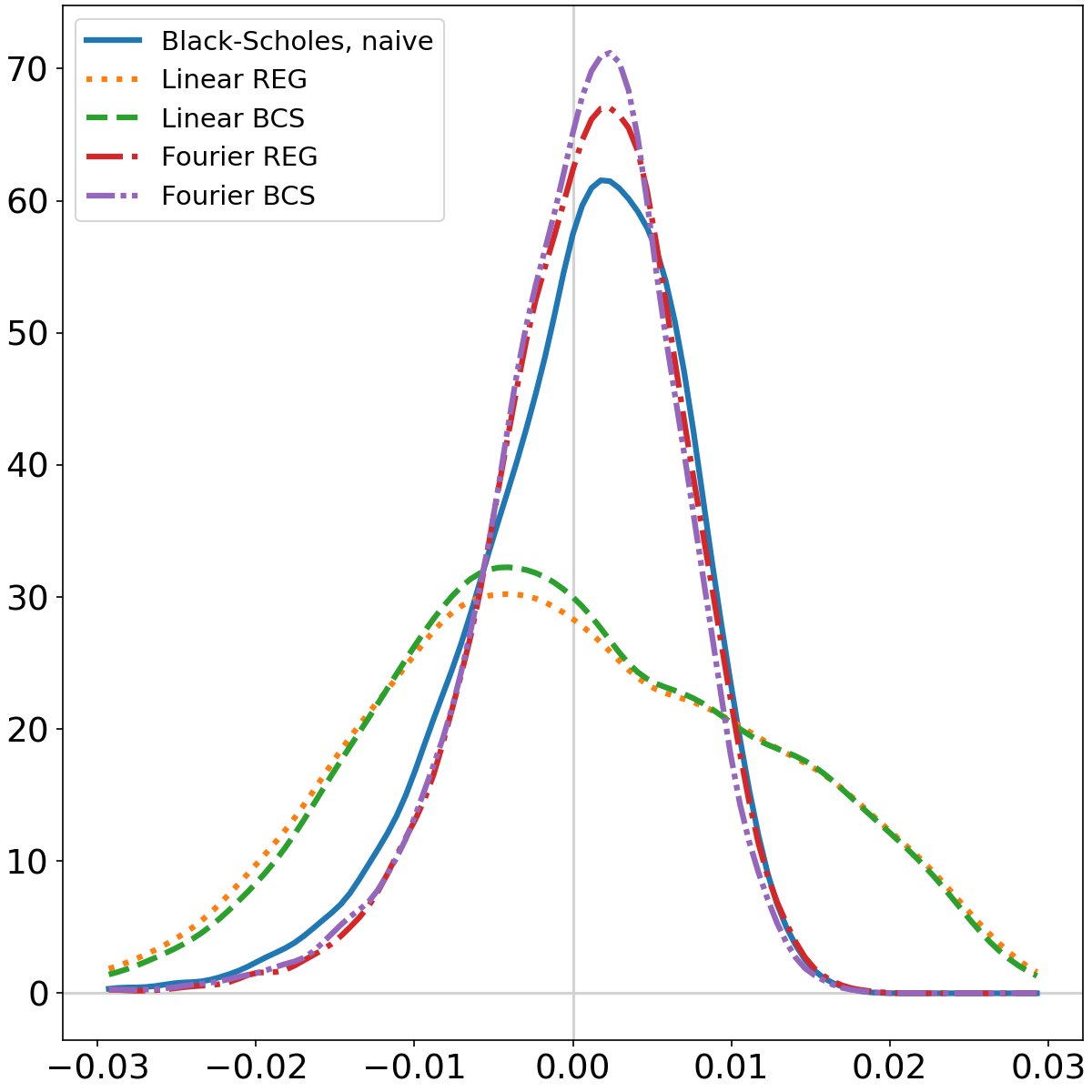}
        \caption{\asiancall: Out-of-sample P\&L density of the delta hedging associated with the Black-Scholes solution (blue), Linear REG approximation (orange), Linear BCS approximation (green), Fourier REG approximation (red) and Fourier BCS approximation (purple).}
        \label{fig:asianarribas-delayed}
    \end{figure}

    \begin{table}[H]
        \centering
        \begin{tabular}{|l|c|c|c|c|c|}
            \hline
                              & \multirow{2}{*}{Naive Black-Scholes}   & \multicolumn{2}{c|}{Linear}          & \multicolumn{2}{c|}{Fourier}         \\
            \cline{3-6}
                              &                       & REG                   & BCS                   & REG                   & BCS                   \\
            \hline
            Mean squared P\&L & $4.90 \cdot 10^{-5}$   & $1.74 \cdot 10^{-4}$  & $1.63 \cdot 10^{-4}$  & $4.00 \cdot 10^{-5}$  & $3.97 \cdot 10^{-5}$  \\
            Mean P\&L         & $1.91 \cdot 10^{-4}$   & $-1.16 \cdot 10^{-3}$  & $2.01 \cdot 10^{-5}$  & $6.82 \cdot 10^{-4}$  & $3.02 \cdot 10^{-4}$ \\
            \hline
        \end{tabular}
        
        \caption{\asiancall: Out-of-sample mean and mean squared P\&L of the delta hedging associated with the Black-Scholes solution (blue), Linear REG approximation (orange), Linear BCS approximation (green), Fourier REG approximation (red) and Fourier BCS approximation (purple).}
        \label{tab:asianarribas-delayed}
    \end{table}

    In Figure~\ref{fig:asianarribas-delayed} and Table~\ref{tab:asianarribas-delayed} we can clearly make the same conclusion as for the European call under Heston model, only stronger as the linear strategies lags behind even more than the Fourier strategies.

\appendix

\section{Algorithms} \label{app:algorithms}

    In this section we explicit the algorithms that have been used in the previous sections. More specifically, Algorithm~\ref{alg:reinforcement} describes the training of the neural networks in Section~\ref{sec:sig_nn} and, up to the initialization of the network, Linear BCS in Section~\ref{sec:sig_reg}. Algorithm~\ref{alg:sigregression} then describes the algorithm for the (non-linear) regression of the signature volatility of Fourier REG in Section~\ref{sec:sig_reg}.
    
    \begin{algorithm}[H]
        \caption{Deep-hedging training algorithm}
        
        \textbf{Data initialization}:
        \begin{algorithmic}[10]
            \State Fix $J > 0$ the number of points in the discretization of $[0, T]$,
            \State Fix $I > 0$ the number of simulations in the training set,
            \State Fix a model for the spot price and volatility,
            \State Fix a payoff function,
        \end{algorithmic}
        
        \textbf{Net initialization}:
        \begin{algorithmic}[10]
            \State Fix $Q$ the width of the feature set and $H, L > 0$ the width and depth of the hidden layers respectively,
            \State Choose the hidden unit as either vanilla or LSTM,
            \State Randomly initialize the set of trainable parameters $\theta$, i.e.~all weights and biases and $p_0$,
            \State Fix a batch size, learning rate scheduler, optimizer and regularization rate.
        \end{algorithmic}
        
        \textbf{Offline}:
        \begin{algorithmic}[10]
            \State Simulate $M$ trajectories of $S$ and $\Sigma$ given the model with $N$ time steps and compute $\Delta S_{j+1} := S_{j+1} - S_j$ for $0 \leq j < J$,
            \State Construct $(X_{t_j})_{0 \leq j \leq J-1} = (t_j, S_{t_j}, \Sigma_{t_j})_{0 \leq j \leq J-1}$ and $(\mathbb{X}_{t_j})_{0 \leq j \leq J-1}$ in the case of the SNN
            \State Compute the payoff $\xi$ for those simulations.
        \end{algorithmic}

        \textbf{Online}:
        \begin{algorithmic}[1]
            \While {step in range of maximum iteration}
                \For {batch $B$ in training set $I$}
                    \State $p_0^\theta, (\alpha_j^\theta)_{0 < j < J-1} \leftarrow f^\theta((X_j)_{0 < j < J-1})$,
                    \State $P \leftarrow p_0^\theta + \frac{1}{|B|} \sum_{i \in B} \left( \sum_{j=0}^{J-1} \alpha_j^\theta \Delta S_{j+1}^i - \xi^i \right)$,
                    \State Compute/retrieve gradients of $P$ with respect to $\theta$,
                    \State Update $\theta$ with optimizer,
                    \State Increment step,
                \EndFor
            \EndWhile
        \end{algorithmic}
        \label{alg:reinforcement}
    \end{algorithm}

    \begin{algorithm}[H]
        \caption{Regression of volatility dynamics}
        
        \textbf{Initialization}:
        \begin{algorithmic}[10]
            \State Fix a truncation orders $M$ for the linear volatility $\hat{\bsigma}$, $\widehat{M}$ for the expected Brownian signature and $\bar{M}$ for moment of $Y$ to match,
            \State Initialize $\hat{\bsigma}_0 \in \tTA[2]{M}$, either randomly or following a truncated representation,
            \State Fix $I > 0$ the number of simulations or observations and $J > 0$ the number of time steps $t_j$ of $[0, T]$,
            \State Simulate or observe $(\Sigma_{t_j}^i)_{0 \leq j \leq J, 1 \leq i \leq I}$, compute the signature of its time-augmented variant $\widehat{\mathbb{Y}}$ up to order $\bar{M}$ and finally its average over simulations $\bar{\mathbb{Y}}_{t_j} = \frac{1}{I} \sum_{i=1}^I \widehat{\mathbb{Y}}_{t_j}^i$ for $1 \leq j \leq J$,
            \State Compute $(\sigE[t_j])_{1 \leq j \leq J}$ up to order $\widehat{M}$ using Fawcett's formulae, i.e.~$\sigE = e^{\otimes (\word{1} + \frac{1}{2} \word{22}) t}$,
            \State Compute the truncated half shuffle product operator $\widetilde{\succ}: \tTA[2]{\widehat{M}} \times \tTA[2]{M} \to \tTA[2]{\widehat{M}}$ where the half shuffle product $\succ$ is defined in Definition~\ref{def:halfshuffleprod}.
        \end{algorithmic}

        \textbf{Optimization}:
        \begin{algorithmic}[1]
            \While {step $i$ in range of maximum iteration $I$}
                \State $\bell^\emptyword \gets \emptyword$
                \For {word $\word{v}$ in $\cup_{m=0}^{\bar{M}} V_m$, recursively}
                    \State $\bell^\word{v1} \gets \bell^\word{v} \word{1}$,
                    \State $\bell^\word{v2} \gets \bell^\word{v} \widetilde{\succ} \hat{\bsigma}_i$,
                \EndFor
                \State Compute loss $\mathcal{L} \gets \sum_{j=1}^J \sum_{m=1}^{\bar{M}} \sum_{\word{v} \in V_m} (\langle \bell^\word{v}, \sigE[t_j] \rangle - \bar{\mathbb{Y}}_{t_j}^\word{v})^2$,
                \State Update $\hat{\bsigma}_i$ from optimizer,
            \EndWhile
            \State \Return $\hat{\bsigma} := \hat{\bsigma}_I$.
        \end{algorithmic}
        \label{alg:sigregression}
    \end{algorithm}
    Note that a regularization can be easily included in the loss. Furthermore, one could easily enforce a structure over $\hat{\bsigma}$, e.g.~the equivalent of a linear SDE $\hat{\bsigma} = \left( \bsigma_0 \emptyword + \hat{\bm{p}} \right) \left(\emptyword - \hat{\bm{q}} \right)^{-1}$. \\

    \begin{sqremark}
        It is important to note that a random initialization for $\hat{\bsigma}_0$ will lead to very variable results because of the high non-convexity of the problem. A solution to alleviate this problem was to initialize as a reasonable candidate, i.e.~estimate the parameters under some model that admits a linear representation and take this truncated representation as initial guess $\hat{\bsigma}_0$. In our case we used the Ornstein-Uhlenbeck representation in \cite[Proposition 3.1]{sigvolfourier} after computing the maximum likelihood estimators $\hat{\kappa}, \hat{\theta}$ and $\hat{\eta}$. We refer to \cite{maxlikelihood_oucir} for their computations.
    \end{sqremark}

\section{Log-signature or not?} \label{app:sig_order_sup2}

    In this section we are interested in two main questions of Section~\ref{sec:sig_nn}. The first one, which is the easiest to answer, asks whether one should use the log-signature, defined below, instead of the signature to reduce the number of terms. Indeed, the log-signature introduced in \cite{chen_logsig} preserves the full information of the signature in a more compact form, as the formal logarithm is bijective, see \cite{lyons_drivenroughpaths}. To give an intuition, the log-signature removes polynomial redundancies present in the signature, e.g.~all terms up to order $M$ of the signature can be computed with polynomials of degree at most $M$ of terms of log-signature of order up to $M$. \\
    
    The short answer to this first question is that the use of the log-signature relatively under-performs and takes longer to train. We speculate that it is due to the additional requirement of the network to learn polynomials, e.g.~when $S_t^2$ was perfectly accessible with the signature, the network has the approximate the square when trained on the log-signature.

    \begin{definition}[Formal logarithm]
        Let $\bp \in \eTA$ such that $\bp^\emptyword = 0$, then the logarithm map of $\bp$ is defined
        $$ \log(\emptyword + \bp) = \sum_{n \geq 1} \frac{(-1)^{n-1}}{n} \bp^{\otimes n}. $$
    \end{definition}

    \begin{definition}[Log-signature] \label{def:logsig}
        The log-signature of a path $X$ denoted by $\log \sigX$ is the logarithm of the signature of the path $X$.
    \end{definition}
    The reader might also refer to \cite{iisignature} for more details on how to compute it.
    
    \begin{sqexample}
        Let $X_t = (t, S_t, \Sigma_t)$, then the first few orders of $\sigX$ are given by
        \begin{equation}
            \log \sigX^0 = 0,
            \quad
            \log \sigX^1 =
            \begin{pmatrix}
                t \\
                S_t \\
                \Sigma_t
            \end{pmatrix},
            \quad
            \log \mathbb{X}^2 = \frac{1}{2}
            \begin{pmatrix}
                \int_0^t s \d S_s - \int_0^t S_s \d s
                \\
                \int_0^t s \d \Sigma_s - \int_0^t \Sigma_s \d s
                \\
                \int_0^t S_s \circ \d \Sigma_s - \int_0^t S_s \circ \d \Sigma_s
            \end{pmatrix}.
        \end{equation}
    \end{sqexample}

    The second question we study in this section, which is harder to answer, arises naturally when constructing the signature objects: what truncation order that leads to the best hedging and fastest training networks. The conclusions are much less clear but we can still see that most of the performance can be attributed to the first 2 or 3 orders of the signature. A similar observation was made by \citep*{bayer_sigcontrols} in an optimal control setting. Note that the terms of the signature of order 2 of some $X=(X^1, X^2)$ are the squares of $\frac{1}{2} X^i$ and linear combinations of the product $X^1 X^2$ and the Lévy area $\int_0^t X^1 \d X^2 - \int_0^t X^2 \d X^1$. \\

    All numerical experiments have been averaged over 50 iterations for the SNN and 10 for the RNN, lower for obvious time reasons. By iteration we mean a random initialization, a training and a test. This is to ensure that we are not making conclusions with networks that performed better or worse because of their initialization. \\

    In Figure~\ref{fig:asian_trunc} we can see that both signature and log-signature plateau together after truncation order 2, close to the Fourier solution. This seems to imply that the required information to converge perfectly to the Fourier solution is found in much higher orders of the signature. On the contrary, Figure~\ref{fig:lookback_trunc} shows that the signature and log-signature have a significantly different impact on training with the Log-Signature NN plateauing relatively high and early in the truncation, whereas the Signature NN goes lower and does not seems to plateau under the Heston model.
    
    \begin{figure}[H]
        \centering
        \subfloat[\centering \heston]{
        \includegraphics[width=\twoplotswidth]{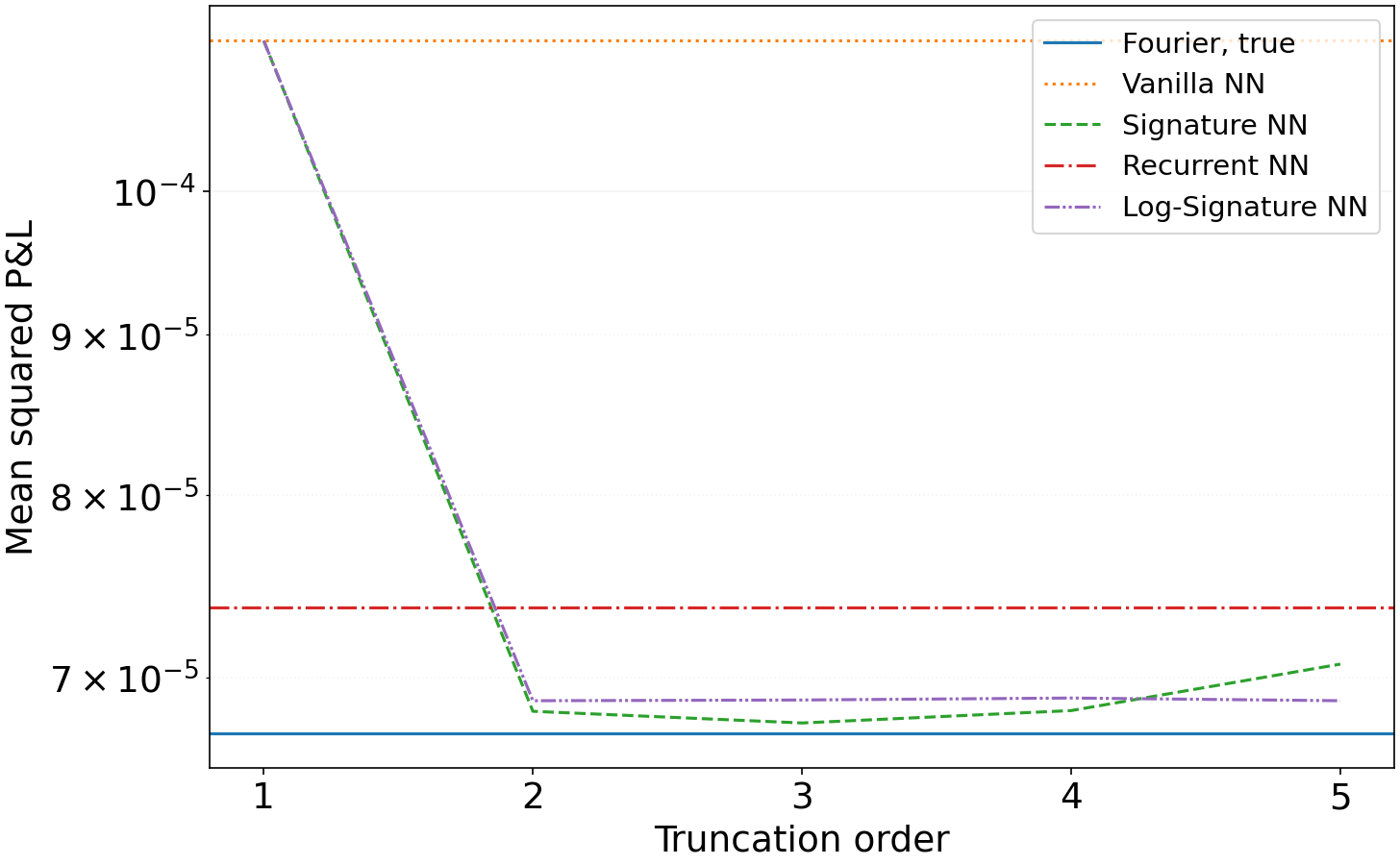}
        }
        \quad
        \subfloat[\centering \fbergomi]{
        \includegraphics[width=\twoplotswidth]{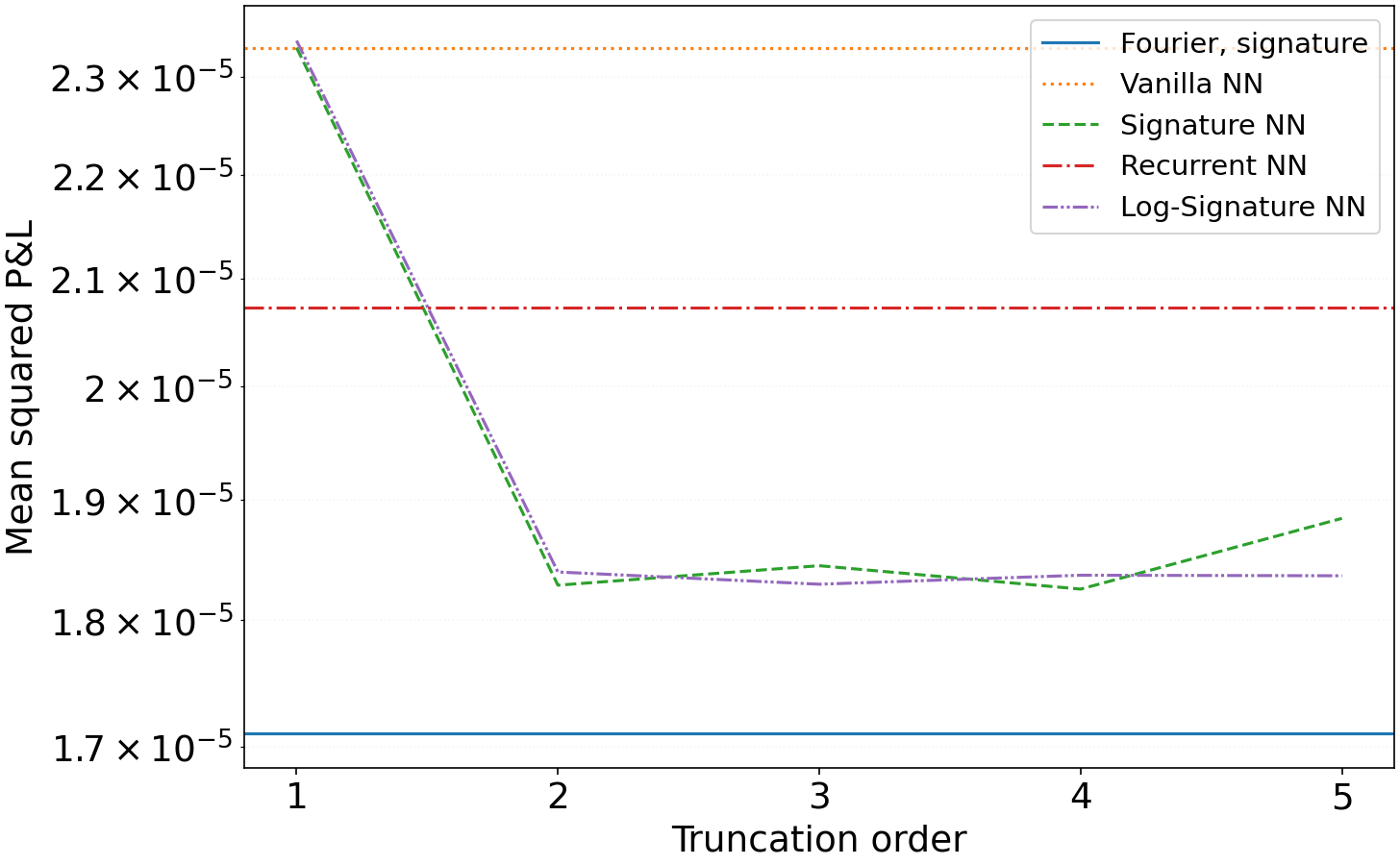}
        }
        \caption{\asiancall: \TRUNCcaption}
        \label{fig:asian_trunc}
    \end{figure}
    
    \begin{figure}[H]
        \centering
        \subfloat[\centering \heston]{
        \includegraphics[width=\twoplotswidth]{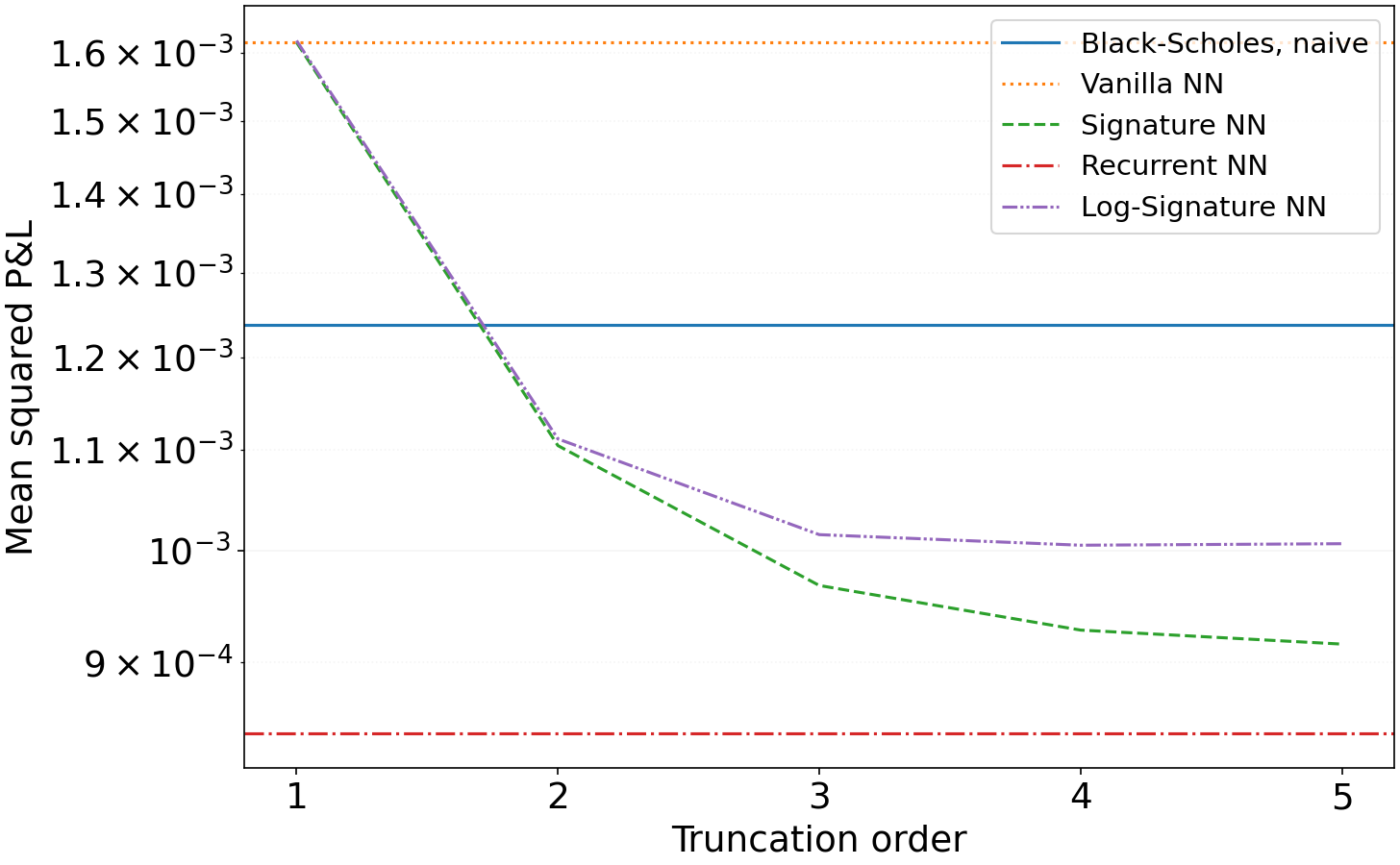}
        }
        \quad
        \subfloat[\centering \fbergomi]{
        \includegraphics[width=\twoplotswidth]{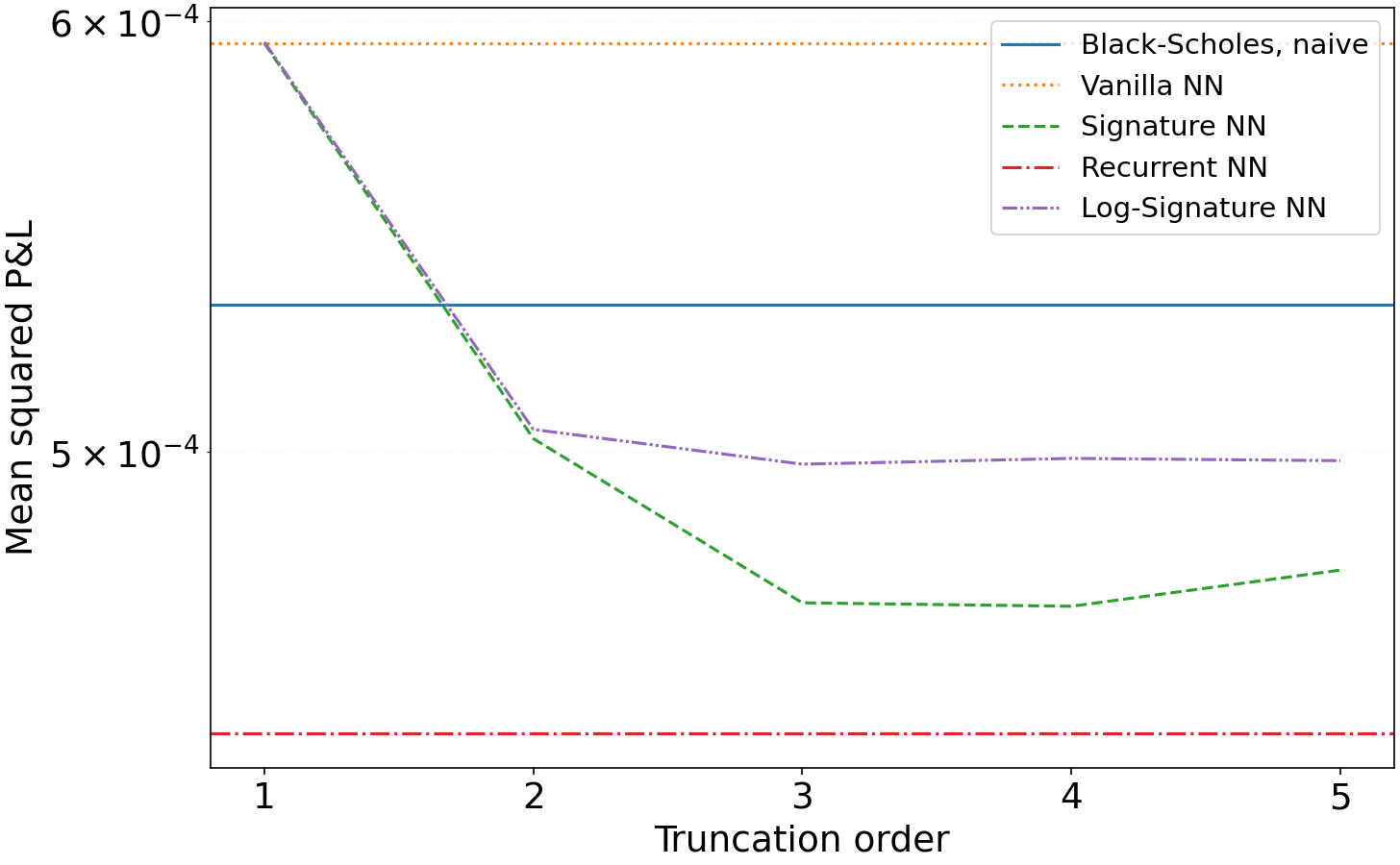}
        }
        \caption{\lookbackcall: Out-of-sample mean squared P\&L of the delta hedging associated with the the Black-Scholes naive solution (blue), Vanilla NN (orange), Signature NN (green), Recurrent NN (red) and Log-Signature NN (purple), for several truncation orders.}
        \label{fig:lookback_trunc}
    \end{figure}

    The last couple of figures show in more details the training of the Log-Signature NN, Figures~\ref{fig:asian_time_log} and \ref{fig:lookback_time_log} and the Signature NN, Figures~\ref{fig:asian_time} and \ref{fig:lookback_time}, under the different models (a) and (b) for the Asian call option, Figurs~\ref{fig:asian_time_log} and \ref{fig:asian_time}, and for the look-back call option, Figures~\ref{fig:lookback_time_log} and \ref{fig:lookback_time}. It shows in abscissa the time of training (averaged over the 50 iterations) and in ordinate the mean squared P\&L over the 20,000 OOS simulations. In dashed gray the benchmark and in color the different truncation orders of the (Log-)Signature NN. It first shows the very clear improvement between the Vanilla NN, i.e.~$M=1$, and the Signature NN, $M \geq 2$. Then it also shows that the Log-Signature NN trains slower when compared to the Signature NN for the same mean squared P\&L.
    
    However, apart from a small improvement between truncation 2 and 3 for the look-back call option, it seems that higher orders either don't help or actually delay the convergence. \\

    Some other interesting artifact is that, even averaged over 50 iterations, there are clear \textit{breakthroughs} in the learning process for the Asian option in Figures~\ref{fig:lookback_time_log} and \ref{fig:asian_time}, specifically with the log-signature.
    
    \begin{figure}[H]
        \centering
        \subfloat[\centering \heston]{
        \includegraphics[width=\twoplotswidth]{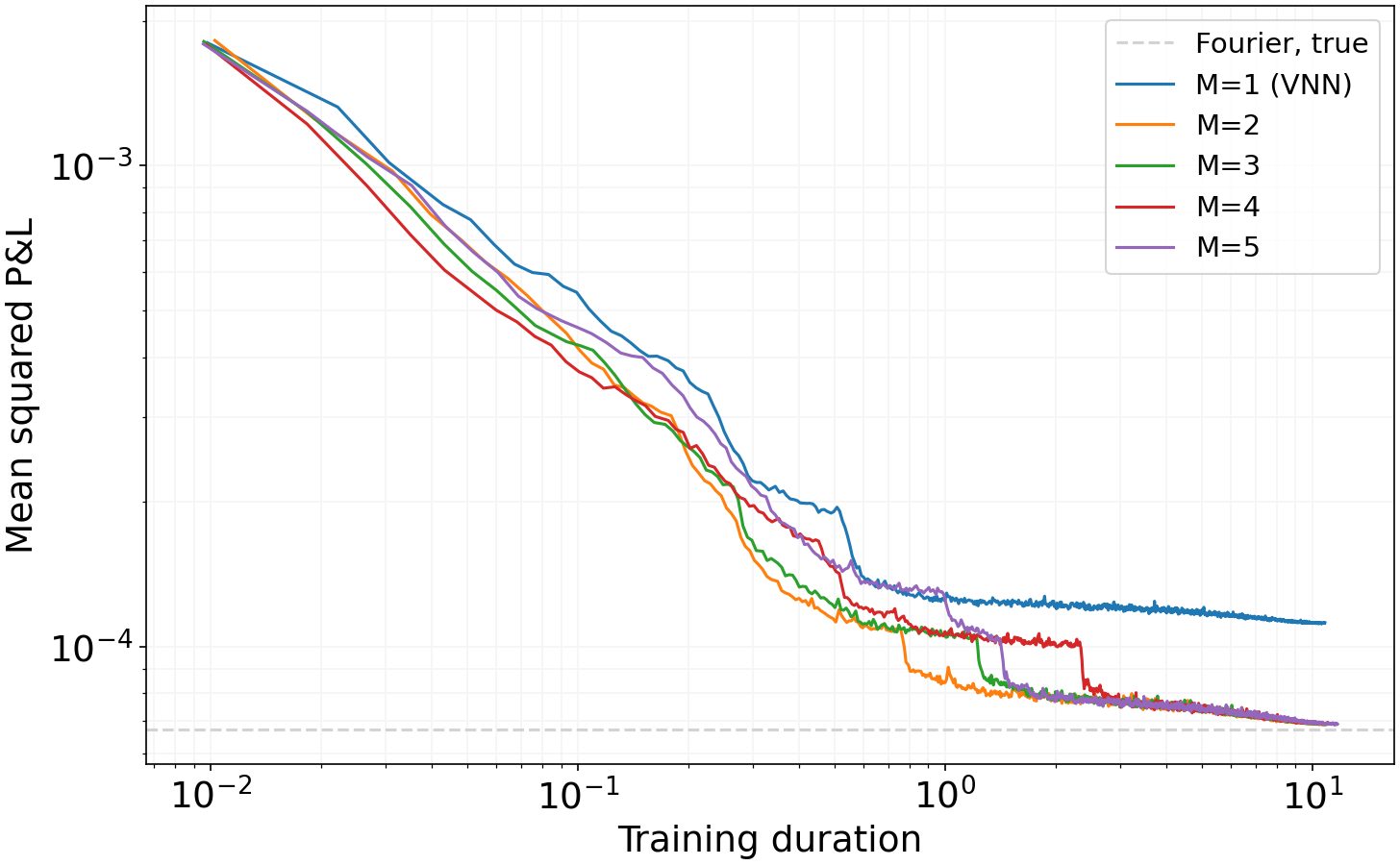}
        }
        \quad
        \subfloat[\centering \fbergomi]{
        \includegraphics[width=\twoplotswidth]{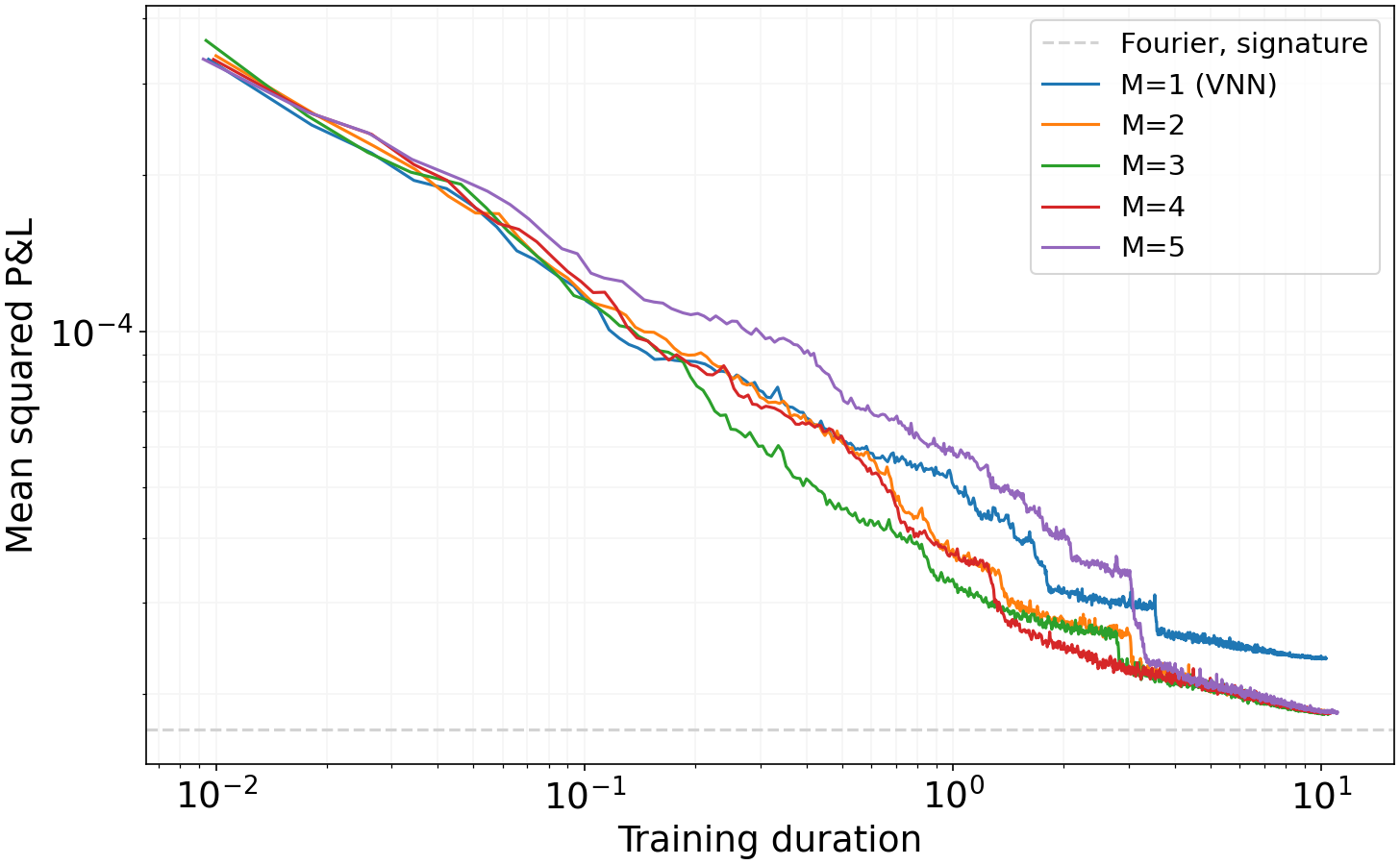}
        }
        \caption{Log-Signature, \asiancall: \CONVcaptionLog}
        \label{fig:asian_time_log}
    \end{figure}
    
    \begin{figure}[H]
        \centering
        \subfloat[\centering \heston]{
        \includegraphics[width=\twoplotswidth]{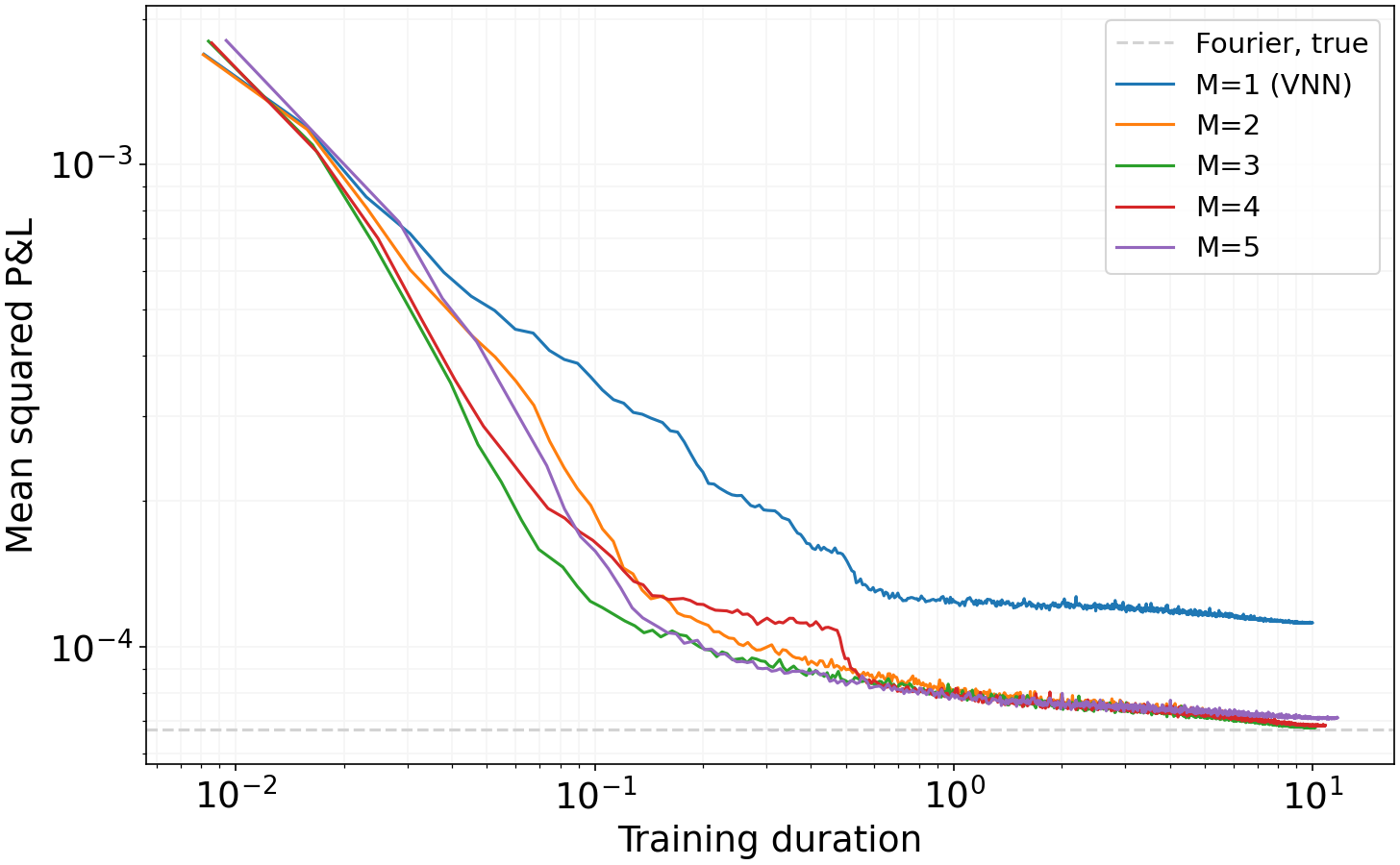}
        }
        \quad
        \subfloat[\centering \fbergomi]{
        \includegraphics[width=\twoplotswidth]{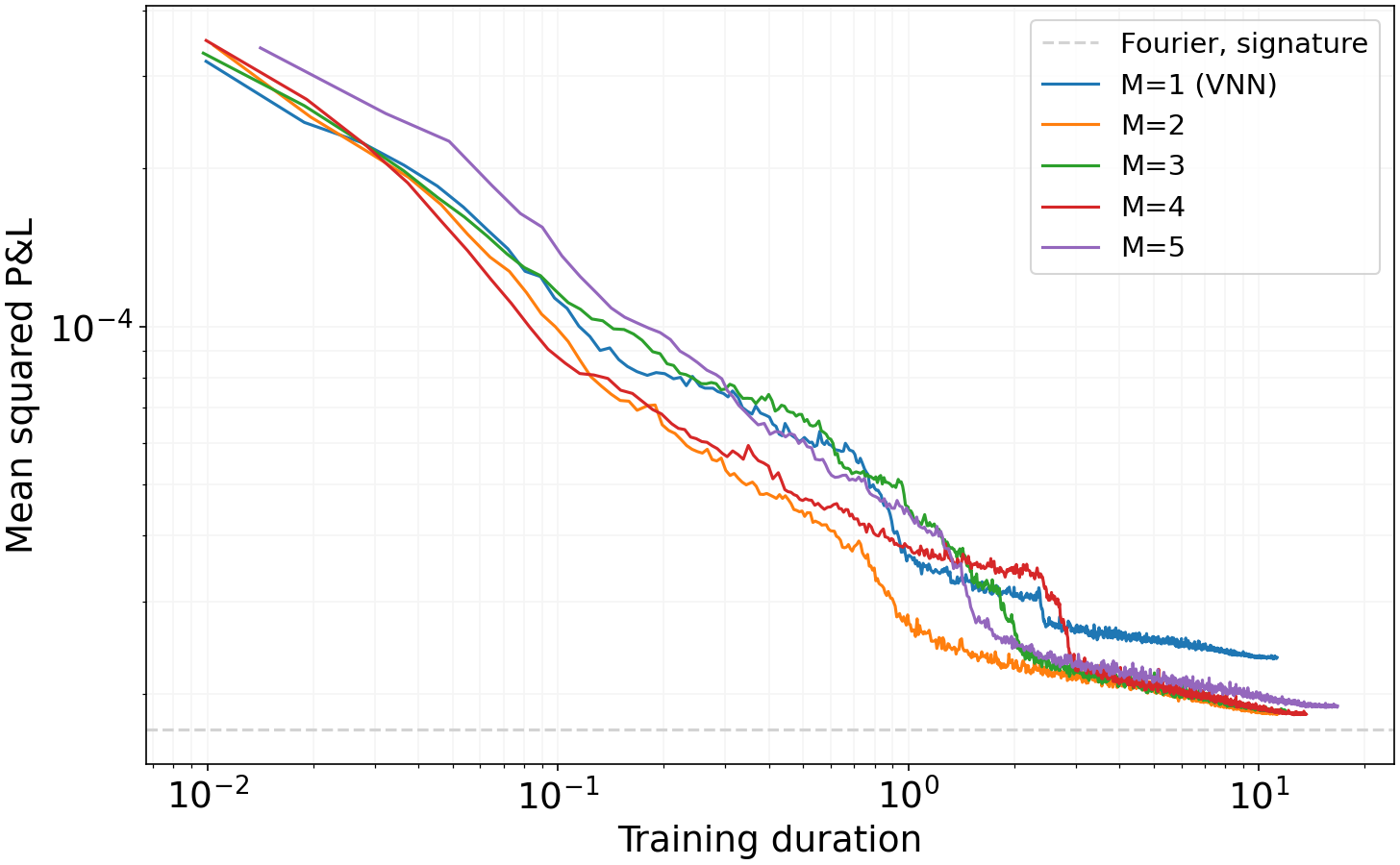}
        }
        \caption{Signature, \asiancall: \CONVcaption}
        \label{fig:asian_time}
    \end{figure}

    \begin{figure}[H]
        \centering
        \subfloat[\centering \heston]{
        \includegraphics[width=\twoplotswidth]{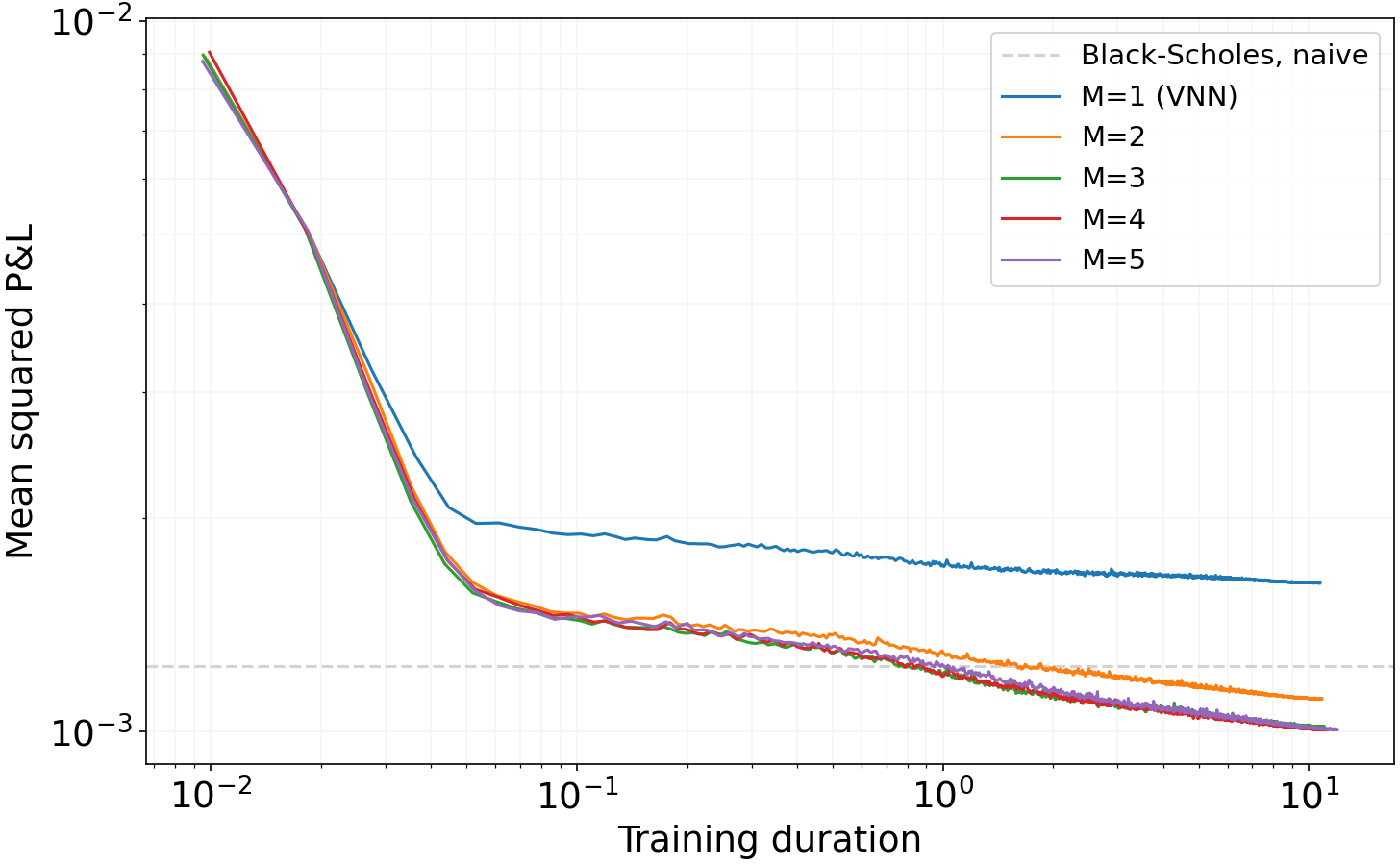}
        }
        \quad
        \subfloat[\centering \fbergomi]{
        \includegraphics[width=\twoplotswidth]{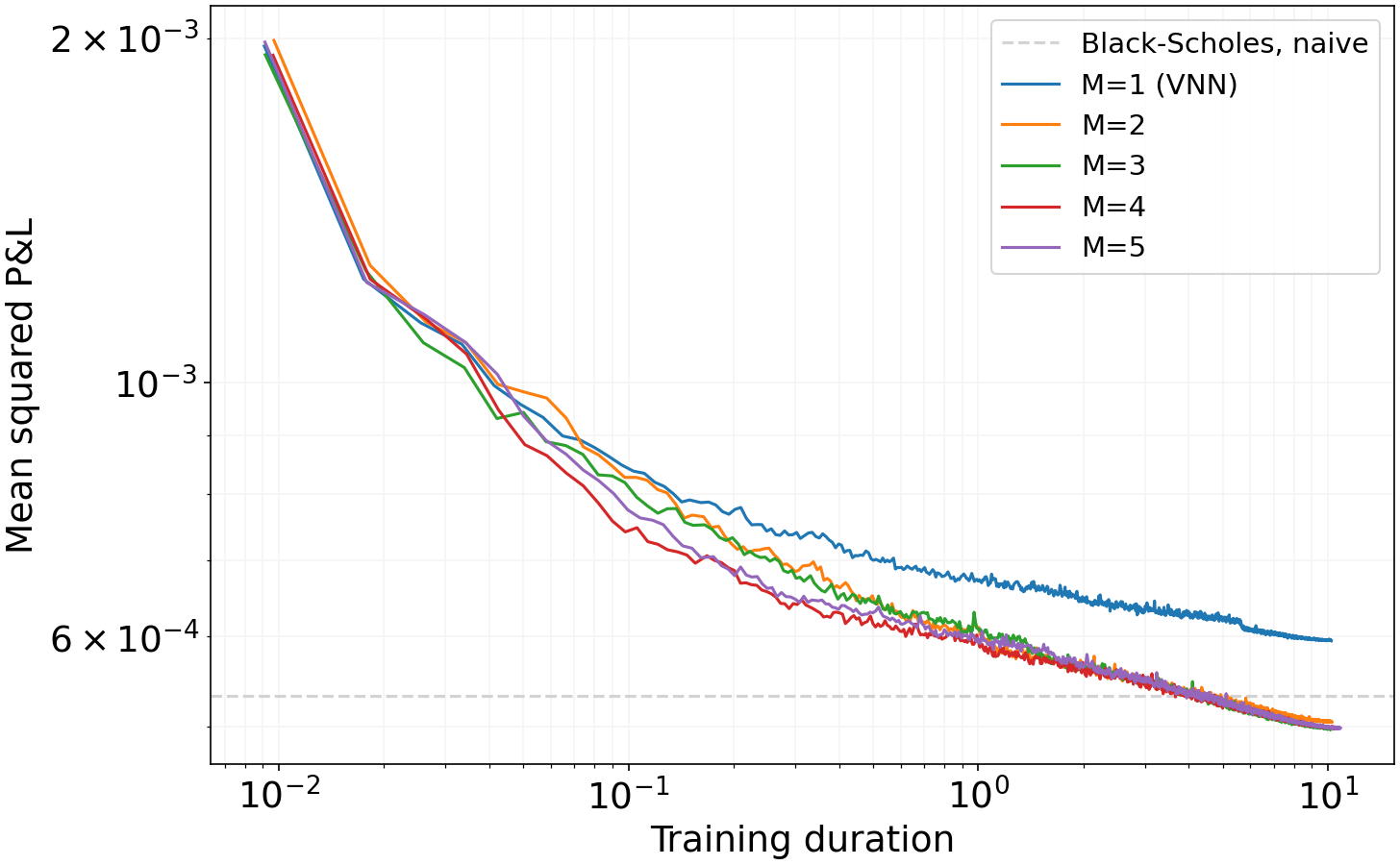}
        }
        \caption{Log-Signature, \lookbackcall: \CONVcaptionLog}
        \label{fig:lookback_time_log}
    \begin{figure}[H]
    
    \end{figure}
        \centering
        \subfloat[\centering \heston]{
        \includegraphics[width=\twoplotswidth]{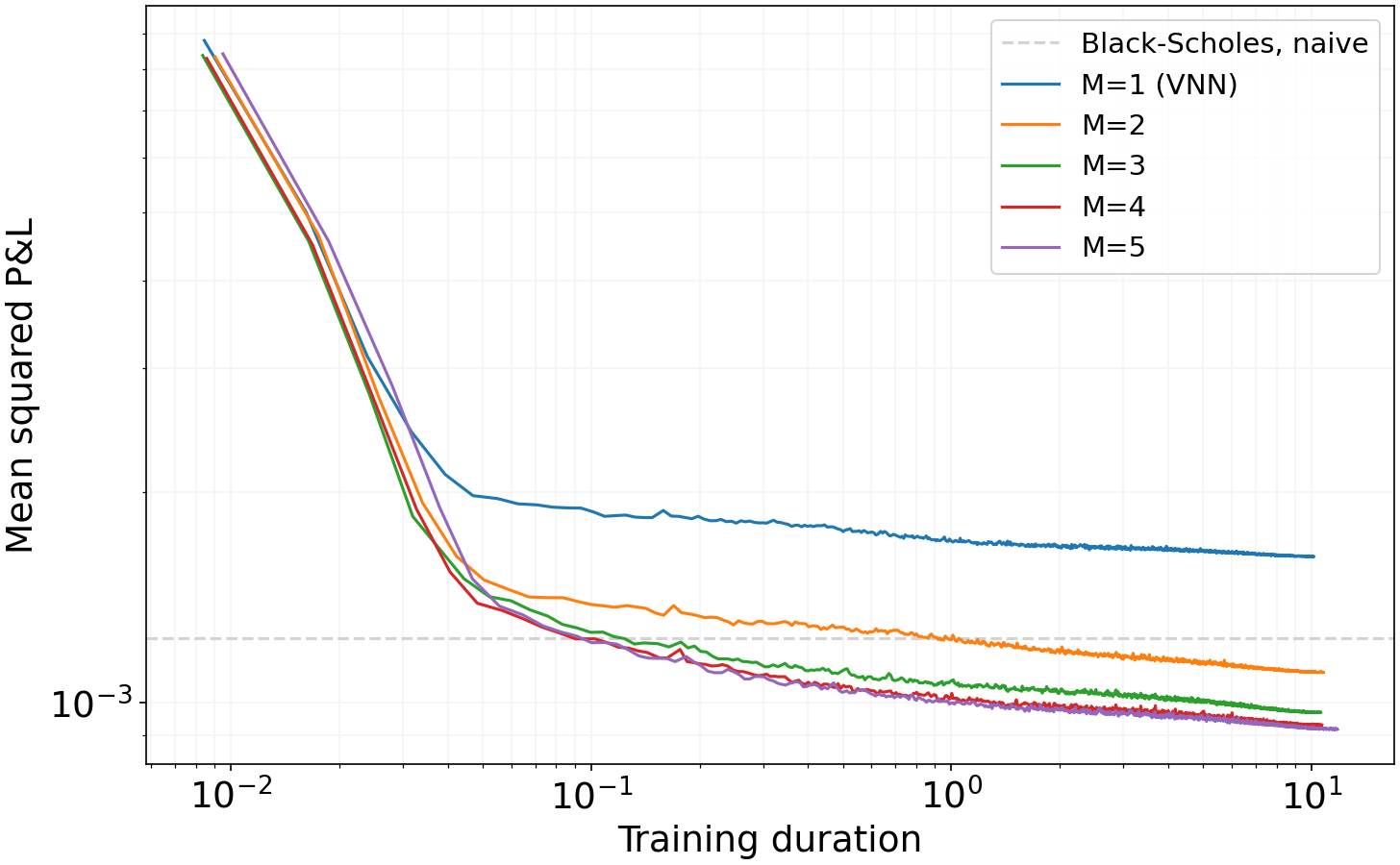}
        }
        \quad
        \subfloat[\centering \fbergomi]{
        \includegraphics[width=\twoplotswidth]{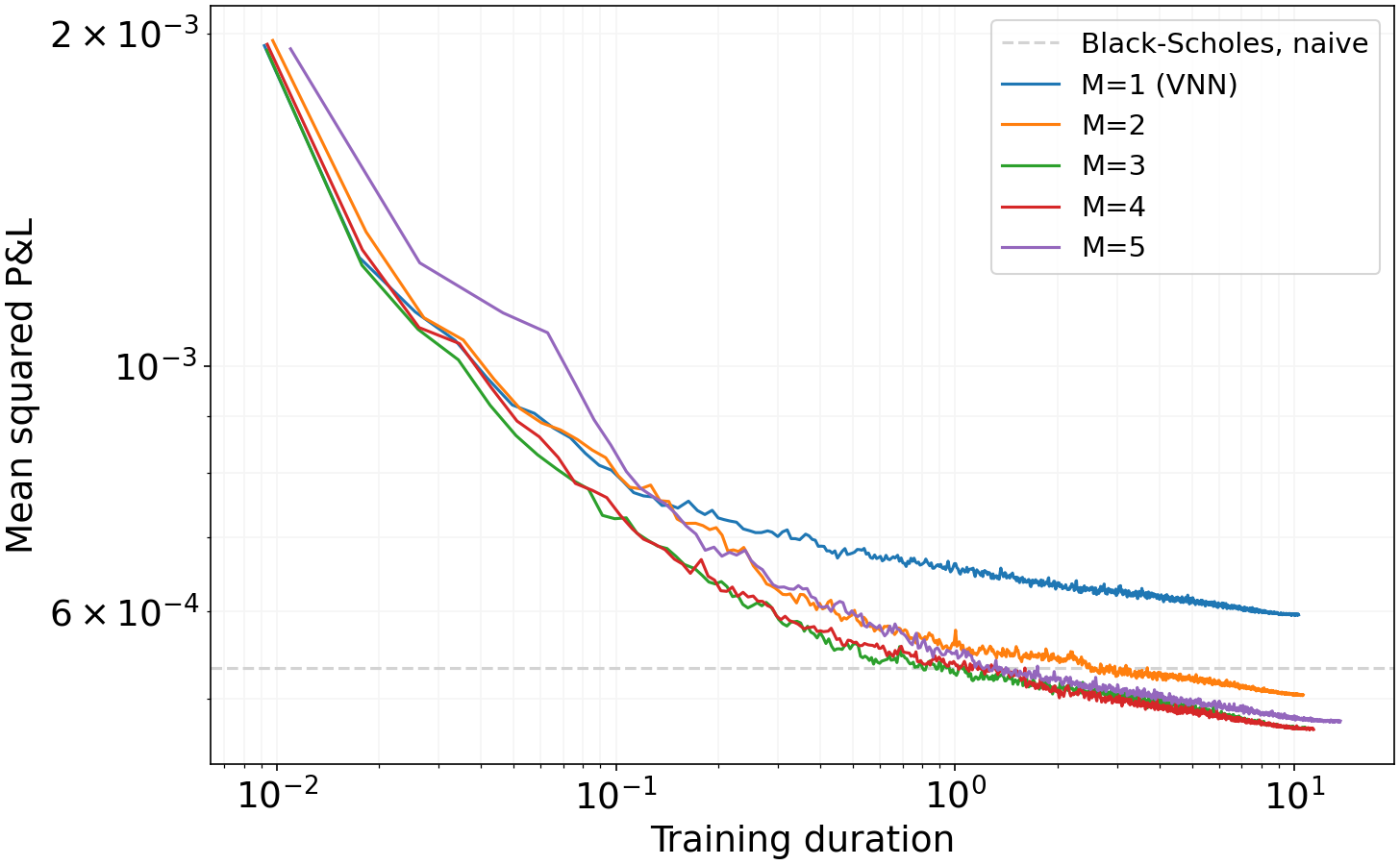}
        }
        \caption{Signature, \lookbackcall: \CONVcaption}
        \label{fig:lookback_time}
    \end{figure}
    
    Note that we did not include in the previous figures the training of the Recurrent NN as it would have been offset by an order of 30 compared to the curves of the Signature and Log-Signature NNs.

\section{Timings} \label{app:arribas-timings}

    In this section we describe briefly the timings and some RAM requirements of Section~\ref{sec:sig_reg} as they are not trivial for every truncation order. Moreover, as compute times are quite similar across examples, we only show for the case of the Heston model and European call option of Section~\ref{subsec:arribas-euro}. Note that all computations were only made on the training set of 10,000 samples of 126 time steps. Also note that the lead-lag transform holds twice as many time steps and should have twice as many dimensions even though this doubling can be averted for the first (time) dimension as it has no quadratic variation.

    \begin{table}[H]
        \centering
        \subfloat[\centering Linear REG.]{
        \begin{tabular}{|c|c|c|c|c|c|}
            \hline
            Truncation & $\mathbb{X}_T^{\LL, i} \in \tTA[3]{M}$ & $\widehat{\bxi} \in \tTA[3]{M}$ & $\bar{\shuprod}$ & $\E[\mathbb{X}_T^\LL] \in \tTA[3]{2M}$ & $\widehat{\bell} \in \tTA[2]{M-1}$ \\
            \hline
            \multirow{2}{*}{$M=2$} & 31.1 ms  & 557 µs  & 136 µs   & 297 ms   & 415 ms   \\
                                   &          &         &          &          &          \\
            \hline
            \multirow{2}{*}{$M=3$} & 57.7 ms  & 1.06 ms & 3.01 ms  & 688 ms   & 467 ms   \\
                                   &          &         &          &          &          \\
            \hline
            \multirow{2}{*}{$M=4$} & 76.8 ms  & 4.06 ms & 80.5 ms  & 4.31 sec & 539 ms   \\
                                   &          &         &          &          &          \\
            \hline
            \multirow{2}{*}{$M=5$} & 606 ms   & 17.7 ms & 2.45 sec  & 32.5 sec & 4.27 sec \\
                                   & 27.7 MiB &         & 243 MiB  &          &          \\
            \hline
            \multirow{2}{*}{$M=6$} & 606 ms   & 88.8 ms & 87.3 sec  & 355 sec  & 102 sec  \\
                                   & 83.4 MiB &         & 6.33 GiB & 6.08 MiB &          \\
            \hline
        \end{tabular}
        }
        \quad
        \subfloat[\centering Fourier REG.]{
        \begin{tabular}{|c|c|c|c|c|c|}
            \hline
            Truncation & $\widetilde{\succ}$ & $\bell(\bsigma) \in \tTA[2]{M+4} \times \tTA[2]{M+2}$ & $\hat{\bsigma} \in \tTA[2]{M}$ & $\bpsi \in [0, T] \times \tTA[2]{2M}$ & $\alpha^*$ \\
            \hline
            \multirow{2}{*}{$M=2$} & 253 µs   & 8.00 ms & 1.51 sec & 1.04 ms  & 239 µs   \\
                                   &          &         &          &          &          \\
            \hline
            \multirow{2}{*}{$M=3$} & 1.05 ms  & 13.7 ms & 3.12 sec & 6.28 ms  & 292 µs   \\
                                   &          &         &          &          &          \\
            \hline
            \multirow{2}{*}{$M=4$} & 4.36 ms  & 29.5 ms & 6.47 sec & 64.4 ms  & 399 µs   \\
                                   &          &         &          &          &          \\
            \hline
            \multirow{2}{*}{$M=5$} & 16.4 ms  & 65.2 ms & 15.8 sec & 703 ms   & 755 µs   \\
                                   &          &         &          & 39.7 MiB &          \\
            \hline
            \multirow{2}{*}{$M=6$} & 58.9 ms  & 193 ms  & 75.2 sec & 8.15 sec & 2.69 ms  \\
                                   &          &         & 39.9 MiB & 158 MiB  &          \\
            \hline
        \end{tabular}
        }
        \caption{\eurocall: Average timings for the different significant steps of each method. The number of iterations over which the timings have been averaged was determined by the Python \textit{magic} \texttt{\%timeit}. The RAM used to hold the object is also displayed when it is higher than 10 MiB.}
        \label{tab:arribas_time}
    \end{table}
    In Table~\ref{tab:arribas_time}-(a), $\bar{\shuprod}: \tTA[3]{M} \times \tTA[3]{M} \to \tTA[3]{2M}$ is the truncated shuffle product stored as a $n \times 4$ integer matrix where each column represents respectively the left input coordinate, the right input coordinate, the output coordinate and the number of occurrences of this trio. Similarly, in Table~\ref{tab:arribas_time}-(b), $\widetilde{\succ}: \tTA[2]{M+2} \times \tTA[2]{M} \to \tTA[2]{M+2}$ is the truncated half shuffle product stored in the same way. This way of storing the shuffle and half shuffle operators is much more memory efficient than having a naive sparse rank 3 tensor and allows for much faster vectorized and parallelizable \textit{scatter} operations, that can run on the GPU, than other alternatives. \\
    
    Note that the RAM usage is \textbf{not} the peak RAM usage, which is usually much higher and not trivial to control, especially for the expected signature at high orders which we had to computed in batches. Also note that the time to compute the optimal Fourier strategy $\alpha^*$ in Table~\ref{tab:arribas_time}-(b) is \textit{per trajectory}, which means in the order of seconds for 10,000 samples.

\bibliographystyle{plainnat}
\bibliography{main.bib}

\end{document}